\documentclass{article} % For LaTeX2e
\usepackage{iclr2027_conference,times}
\usepackage{enumitem}
\usepackage{graphicx}
\usepackage{amsthm}
\usepackage{amssymb}
\usepackage{mathtools}
\usepackage{bbm}
\usepackage{booktabs}
\usepackage{amsmath,amsfonts,bm}

\def\eqref#1{equation~\ref{#1}}
\def\Eqref#1{Equation~\ref{#1}}
\def\1{\bm{1}}

\DeclareMathAlphabet{\mathsfit}{\encodingdefault}{\sfdefault}{m}{sl}
\SetMathAlphabet{\mathsfit}{bold}{\encodingdefault}{\sfdefault}{bx}{n}

\DeclareMathOperator*{\E}{\mathbb{E}}

\DeclareMathOperator*{\PROB}{\mathbb{P}}
\newcommand\expect[1]{\E\left[ #1 \right]}
\newcommand\expectwrt[2]{\E_{#2}\left[ #1 \right]}

\newcommand{\norm}[1]{\left\lVert#1\right\rVert}
\newcommand{\prob}[1]{\PROB\left(#1\right)}

\newcommand{\inner}[1]{\langle #1 \rangle}

\newcommand{\eg}{\textit{e.g.}}
\newcommand{\ie}{\textit{i.e.}}
\newcommand{\cf}{\textit{c.f.}}
\newcommand{\pistar}{\pi^{\star}}
\newcommand{\pirec}{\hat\pi_{\Omega, T}}
\newcommand{\frec}{f_T}
\newcommand{\fstar}{f^{*}}

\newcommand{\murec}{\hat{Q}_{\Omega,T}}
\newcommand{\Vrec}{\hat{V}_{\Omega, T}}
\usepackage{hyperref}
\usepackage{url}

\newtheorem{theorem}{Theorem}
\newtheorem{theorem_custom_num}{Theorem} % custom theorem counter for re-statements.
\newtheorem{lemma}[theorem]{Lemma} % Shares counter with 'theorem'
\newtheorem{proposition}[theorem]{Proposition} % Shares counter with 'theorem'
\newtheorem{corollary}[theorem]{Corollary} % Shares counter with 'theorem'
\newtheorem{assumption}{Assumption}
\newtheorem{remark}{Remark}

\title{Recommender regret for improving regularised RL estimation with bandits and MCTS}

\author{Danilo Vucetic\thanks{Equal contribution, correspondence to danilo.vucetic@mila.quebec and allar145@ulaval.ca}  $^{\text{  }1,3}$,
        Alexandre Larouche$^{* \text{ } 2,3}$,
        Mohammad Pedramfar$^{3, 4}$,\\
        \textbf{Audrey Durand}$^{2, 3, 5}$,
        \textbf{Gauthier Gidel}$^{1, 3, 5}$
        \\
        $^{1}$Université de Montréal \quad
        $^{2}$Université Laval \quad
        $^{3}$Mila - Québec AI Institute \quad
        $^{4}$McGill University \quad \\
        $^{5}$CIFAR AI Chair.
}

\iclrfinalcopy % Uncomment for camera-ready version, but NOT for submission.
\begin{document}

\maketitle

\begin{abstract}
Regularised reinforcement learning (RL) algorithms rely on estimators, like Monte Carlo tree search or one-step look-ahead, to provide policy, value, or Q-value estimates in training and inference.
However, existing bandit-based estimation frameworks evaluate sampling efficiency, not the quality of estimates.
We address this by formalising estimate generation as a recommendation process, treating bandit-based sampling as an improvement operator.
To evaluate the quality of a recommendation (estimate), we define regulariser-aware recommender regrets, analysing their lower bounds, and proving that simple algorithms like uniform sampling can achieve these bounds.
Finally, we unify bandit- and search-based target improvement under recommender MCTS.
Across synthetic trees, arcade games, and scientific discovery tasks, recommenders often yield faster target convergence and superior downstream performance. Our experiments characterise for the first time the potential of regularised MCTS in scientific applications. 
\end{abstract}

\section{Introduction}

% \todo{change ICLR2026 template to ICLR2027}

    Reinforcement learning (RL) algorithms often require policy, value, or Q-value estimators during training and inference. 
    % to act as targets in a loss function or during inference to provide improved action recommendations \citep{mnih2013_dqn, van_Hasselt_2016_DDQN, haarnoja2018soft, silver2017mastering_alphazero}.
    For example, deep Q-networks \citep[DQN]{mnih2013_dqn} and soft actor-critic \citep[SAC]{haarnoja2018soft} generate Q-value targets by computing the one-sample Monte Carlo estimates of the next state's value, %: $\hat Q(s, a) = r(s,a) + \gamma \hat V(s')$.
    while AlphaZero uses Monte Carlo tree search (MCTS) to generate improved value and policy estimates \citep{silver2017mastering_alphazero}.
    % Despite the central role these estimators play, the literature lacks a unified theoretical framework for optimizing how sampling budgets should be allocated to maximize the quality of the resulting target estimates.
    % Given the success of these methods, with their estimator-based targets, we study the problem of how to best generate these targets, working from the theoretical framework of multi-armed sampling \citep{pedramfar2026multiarmedsampling}.
    Despite the success of these estimator-based methods, the literature lacks a framework for optimally allocating sampling budgets to maximise the quality of target estimates.

    Using the theoretical framework of multi-armed sampling \citep{pedramfar2026multiarmedsampling}, which is based on entropy-regularised RL, we study how to efficiently estimate these targets for a more general class of regularisers.
    In multi-armed sampling, an algorithm sequentially chooses a sampling policy over an action space, samples an action from the sampling policy, and observes a reward. %Instead of minimizing the regret, the goal is to match some target distribution. Therefore, the observed rewards are used to update a \textit{performance function} that will be compared with the target policy.
    Using the information contained in the rewards, the goal is to efficiently match some target distribution, usually modelled as a softmax or Boltzmann distribution.
    Efficiency here refers to minimising the number of samples required to match the target distribution, thus motivating why prior formulations focused on evaluating the sampling policy against the target~\citep{wang2021fairness_fairx_ucb_ts, pedramfar2026multiarmedsampling}.
    % performance through \emph{sampling efficiency}: measuring how quickly the sampling policy or empirical sampling distribution converges to the target distribution \citep{pedramfar2026multiarmedsampling, jain2023gflownets_discovery}.
    However, in RL estimation, we care about the quality of the \emph{recommendation} of various target objects (\eg, policies, values, and Q-values) produced at the end of the sampling allocation, rather than the cost of the sampling policy itself.

    To bridge this gap, we extend the simple sampling regret of \citet{pedramfar2026multiarmedsampling} with the simple regret minimisation of \citet{bubeck2009pure_simpleregret}, to formalise target generation as an explicit \textbf{recommendation process}.
    % Here, the sampling policy serves merely as a data-gathering mechanism, while the recommendation represents the final estimate of a regularised RL target whose approximation error alone determines cost.
    % This distinction between the accumulated cost of samples in multi-armed sampling and the one-time cost of recommendations highlights a theoretical tension: recommendations require exploration to resolve uncertainty, while sampling discourages exploration of suboptimal actions. 
    % Inspired by multi-armed sampling~\citep{pedramfar2026multiarmedsampling}, which considers cumulative and simple regret~\citep{bubeck2009pure_simpleregret}, we extend simple sampling regret.
    Specifically, we 1) extend the simple sampling regret to allow for alternative regularisers, 2) expand the class of target objects to include values and Q-values, and 3) define divergences based on the choice regulariser and target object.
    This yields a regulariser-aware notion of regret we call the \textbf{recommender regret}.
    Our core contributions are as follows:
    \begin{itemize}[nosep,leftmargin=10pt]
        \item We introduce regulariser-aware notions of \textbf{recommender regret} for policies, values, and Q-values across general regularised RL settings, deriving theoretical lower bounds for each
        \item We prove upper bounds on recommender regret, demonstrating that uniform sampling achieves the tightest bounds of the algorithms analysed. This is a surprising result, given that multi-armed sampling requires almost no exploration \citep{pedramfar2026multiarmedsampling}
        \item We extend recommender regret to tree-search settings by proposing \textbf{recommender MCTS}, a generalisation of prior regularised tree-search work for planning-based target estimation
        \item We conduct extensive empirical evaluations on synthetic trees, arcade games, and scientific discovery domains, demonstrating that recommender-based algorithms often converge faster and achieve superior performance compared to prior work.
    \end{itemize}

\section{Background and related work}

    \subsection{Multi-armed sampling bandit problems}
        Multi-armed sampling bandits~\citep{pedramfar2026multiarmedsampling} are instances of stochastic multi-armed bandit problems with different notions of regret, and thus different goals for the policy.
        Let $\mathcal A$ denote a set of available actions, also called \textit{arms}. Stochastic multi-armed bandit instances can be represented as a collection of reward distributions $\phi = (\phi_a:a\in \mathcal{A})$, where $\phi_a$ is the reward distribution of action $a$, with mean $\mu_a = \expectwrt{r}{r \sim \phi_a} \in \mathbb{R}$, and $K = |\mathcal{A}|$ (\ie, there are $K$ actions) \citep{lattimore2020_bandit_book}.
        A sampler interacts with the environment over $T$ rounds sequentially, receiving feedback in the form of rewards which inform future interactions.  
        At each time step $t \in [T]$ the sampler chooses a sampling policy $d_t \in \Delta_\mathcal{A}$, and samples an action $a_t \sim d_t$, receiving a randomly sampled reward $r_{t,a_t} \sim \phi_{a_t}$ from the environment. 
        The history of interactions, $\mathcal{H}_t = (d_1, a_1, r_{1,a_1}, \dots , d_{t-1}, a_{t-1}, r_{t-1, a_{t-1}})$, is maintained by the sampler and used to choose the next sampling policy: $d_{t+1}(\cdot | \mathcal{H}_{t+1})$.\footnote{With the history containing all the randomness of the environment (due to random sampling of the rewards from their distributions) and the policy (due to random sampling of actions from the policy).} 
        
        In multi-armed sampling, the sampler maintains a performance distribution, $f_t = f(\mathcal{H}_t)$, which is a deterministic function of $\mathcal{H}_t$. Given a divergence $D$, the sampling regret at time $t$ is defined as: %The sampling regret is defined in \autoref{equ:back_sampling_regret}, for a divergence $D$ at time $t$, and can be extended to expected or cumulative regret \citep{pedramfar2026multiarmedsampling}.
        \begin{equation}
        \label{equ:back_sampling_regret}
            \mathcal{R}^{f, D}_t = D(f_t, \pi^\star) \quad \text{with~} \pi^\star = \text{softmax}(\mathbf{\mu}) \quad \text{and~} \mathbf{\mu} = (\mu_a: a \in \mathcal{A}),
        \end{equation}
        and can be extended to expected or cumulative regret \citep{pedramfar2026multiarmedsampling}. 
        % Note how the sampling regret is measured against the performance distribution, meaning that a sampler seeks to drive the performance distribution close to the optimal policy. Multi-armed sampling can thus be seen as an indirect estimation problem, where the directly observed objects (rewards) are used to update a sampling policy $d_t$, which affects the performance distribution $f_t$.
        By setting the performance distribution-generating function $f(\mathcal{H}_t)$, we can alter the properties of the sampling policy, $d_t$, that optimises the regret. This problem is similar to FairX bandits~\citep{wang2021fairness_fairx_ucb_ts}, where the considered policies are $\text{softmax}(\ln(g(\mu)))$, with $g: \mathbb{R}^K \to \mathbb{R}^K_{>0}$ being an arbitrary function, and the regret is defined specifically with $D$ as the total variation distance ($L_1$ norm).
        \citet{pedramfar2026multiarmedsampling} found that exploration was barely required for sampling.

    \subsection{Regularised Markov decision processes}
        Regularised Markov decision processes (MDPs) were introduced by \citet{geist2019theory_regularisers}, extending the usual MDP tuple with a strongly convex regularisation function.
        The regularised MDP tuple is $(\mathcal{S}, \mathcal{A}, r, P, \gamma, H, \Omega)$, where $\mathcal{S}$ is the state space, $\mathcal{A}$ is the action space, $r: \mathcal{S}\times\mathcal{A} \to \mathbb{R}$ is the reward function, $P: \mathcal{S}\times\mathcal{A} \to \Delta_\mathcal{S}$ is the environment's dynamics function, $\gamma \in [0,1)$ is the discount factor, $H$ is the horizon, and $\Omega: \Delta_\mathcal{A} \to \mathbb{R}$ is the regulariser.
        $\Delta_\mathcal{S}$ and $\Delta_\mathcal{A}$ denote the probability simplices over the state and action spaces, respectively.
        
        Regularisation with $\Omega$ penalises the policy for deviating from some desirable behaviour. This can be seen in the regularised policy evaluation functions $V^\pi_\Omega$ and $Q^\pi_\Omega$ over states and state-action pairs respectively, where the regulariser is subtracted from the value function:
        \vspace{-6pt}
        \begin{equation}
        \label{equ:regularised_evaluation_funcs}
            V^\pi_\Omega(s) = \expectwrt{Q^\pi_\Omega(s,a)}{a\sim \pi(\cdot|s)} - \frac{1}{\omega}\Omega(\pi) \quad \quad Q^\pi_\Omega(s,a) = r(s,a) + \gamma \expectwrt{V^\pi_\Omega(s')}{s' \sim p(\cdot|s,a)}.
        \end{equation}
        % This can be seen in the policy evaluation functions of \autoref{equ:regularised_evaluation_funcs}, where the regulariser is subtracted from the value function. 
        For instance, when $\Omega$ is negative entropy, the optimal policy will be ``more uniform,'' in that probability mass will be allocated throughout the action space rather than to a single optimal action \citep{neu2017unified, haarnoja2018soft, geist2019theory_regularisers, zhang2025when}.
        % \begin{equation}
        % \label{equ:regularised_evaluation_funcs}
        %     V^\pi_\Omega(s) = \expectwrt{Q^\pi_\Omega(s,a)}{a\sim \pi(\cdot|s)} - \frac{1}{\omega}\Omega(\pi) \quad \quad Q^\pi_\Omega(s,a) = r(s,a) + \gamma \expectwrt{V^\pi_\Omega(s')}{s' \sim p(\cdot|s,a)}.
        % \end{equation}
        Solving for the optimal policy by maximising $V^\pi_\Omega$ with respect to the policy $\pi$ yields the Legendre-Fenchel dual. The optimal regularised value and policy functions evaluate to the dual of the regulariser and its gradient, respectively: $V^\star_\Omega(s) = \Omega^*(Q^\star_\Omega(s, \cdot))$ and $\pi^\star_\Omega(\cdot | s) = \nabla \Omega^*(Q^\star_\Omega(s, \cdot))$ \citep{rockafellar1997_convex_analysis_conjugates, geist2019theory_regularisers}. \autoref{table:back_regularised_RL_equations} lists common regularisers and their optimality equations where $\omega$ and $\alpha$ are inverse temperatures, $\pi_\text{ref}$ is an arbitrary reference policy, $\pi^\alpha_Q = \text{softmax}(\alpha Q^\star_\Omega(s))$, and Tsallis entropy\footnote{We always use $q=2$ as the entropic index for Tsallis entropy, unless otherwise stated.} yields sparse policies via the sparsemax functions \citep{martins2016softmax_sparsemax, veillard_lev_avg_kl_regu, asadi2017alternative_mellowmax, gan2021stabilizing_soft_mellowmax, lee2018_sparse_MDP_tsallis, lee2020_tsallis_actor_critic_robotics}.        

        \begin{table}[b]
            \centering
            \caption{Optimality equations for various regularisers.}
            % Neat trick to get the table to fit into the text width of the page!
            \resizebox{\textwidth}{!}{\begin{tabular}{llll}\toprule 
               \textbf{Name} & \textbf{Regulariser $\Omega(\pi)$} & \textbf{Value $\Omega^*(Q^\star_\Omega(s))$} & \textbf{Policy $\nabla \Omega^*(Q^\star_\Omega(s))$} \\     \midrule \midrule 
                Negative Entropy          & $\sum_a \pi(a|s) \log \pi(a|s)$ & $\frac{1}{\omega}\text{LSE}\left( \omega Q^\star_\Omega(s) \right)$ & $\text{softmax}(\omega Q^\star_\Omega(s))$ \\ \midrule 
                % \begin{tabular}[c]{@{}l@{}}Relative\\  Entropy\end{tabular} 
                Relative Entropy
                & $\text{KL}(\pi(s) || \pi_\text{ref}(s))$ & $\frac{1}{\omega} \log\sum_a\pi_\text{ref}(a|s) e^{\omega Q^\star_\Omega(s,a)}$ & $\dfrac{\pi_\text{ref}(a|s)e^{\omega Q^\star_\Omega(s,a)}}{\sum_b\pi_\text{ref}(b|s)e^{\omega Q^\star_\Omega(s,b)}}$ \\ \midrule 
               % \begin{tabular}[c]{@{}l@{}}Soft\\  Mellowmax\end{tabular}  
                Soft Mellowmax
                & $\text{KL}(\pi(s) || \pi^\alpha_Q(s))$ & 
                $\frac{1}{\omega} \log\sum_a \pi^\alpha_Q(a|s) e^{\omega Q^\star_\Omega(s,a)}$
                % $\frac{1}{\omega} \log \dfrac{\sum_a  e^{(\alpha+\omega) Q^\star_\Omega(s,a)}}{\sum_b  e^{\alpha Q^\star_\Omega(s,b)}}$
                & $\text{softmax}((\alpha+\omega) Q^\star_\Omega(s))$ \\\midrule 
                % $\dfrac{e^{(\alpha+\omega) Q^\star_\Omega(s,a)}}{\sum_b  e^{(\alpha+\omega) Q^\star_\Omega(s,b)}}$ \\ \hline
                Tsallis Entropy & $\frac{1}{2}\sum_a \pi(a|s)^2$ & $\frac{1}{\omega}\text{sparsemax-val}\left(\omega Q^\star_\Omega(s)\right)$ & $\text{sparsemax}\left(\omega Q^\star_\Omega(s)\right)$ \\
                \bottomrule
            \end{tabular}}
            \label{table:back_regularised_RL_equations}
        \end{table}

    \subsection{Prior work in recommender regret} \label{sec:prior_work_rec_reg}
        \citet{xiao2019maximum_MENTS} introduced the stochastic softmax bandit problem which sought to minimise the exponentiated value regret in the entropy-regularised RL setting: $\mathcal{E}_t = \mathbb{E} \big[(\exp(\hat V_t) - \exp(V^\star))^2 \big]$. 
        % By analysing the statistical properties of the estimator $\exp(\hat V_t) = \sum_a \exp(\hat \mu_{t,a})$, it is shown that $\lim_{t \to \infty} t \mathcal{E}_t \geq C$, for any algorithm achieving $\mathcal{E}_t \in O(1/t)$. 
        % In Section~\ref{sec:rec_reg_lower_bounds}, we provide a lower bound with a similar $1/\sqrt{t}$ rate on the absolute value difference between values (instead of the squared difference of exponentiated values).
        Importantly, \citet{xiao2019maximum_MENTS} did not study the Q-value or policy regrets of the stochastic softmax bandit problem. 
        % The closest analyses are \citep[Theorem 5]{xiao2019maximum_MENTS} and \citep[Theorem 2]{convex_regular_tree_search} which demonstrate that action recommendations concentrate in probability to the highest Q-value action at the root of an MCTS search tree. 
        In contrast, our work provides lower and upper bounds on norm-based Q-value regrets as well as policy regrets with Bregman divergence and norm-based divergences.
        Thus, our setting is novel both in the notions of regret and the bounds on those regrets. 

        Generalising the algorithm of \citet{xiao2019maximum_MENTS}, \citet{convex_regular_tree_search} proposed extended empirical exponential weight (E3W), which selects actions according to the convex combination of the regularised policy and a uniform exploration term ($\epsilon > 0$):
        \begin{equation}
        \label{equ:E3W_action_selection}
            d^\text{E3W}_t(a|s) = (1 - \lambda_t) \nabla \Omega^* \left( \hat{Q}_t(s, \cdot) \right)(a) + \frac{\lambda_t}{|\mathcal{A}|} \quad \text{with~} \lambda_t = \frac{\epsilon}{\log(t+1)}.
        \end{equation}
        In Section~\ref{sec:rec_reg_algos_upper_bounds} we show that E3W is not optimal in the recommender regret, in that uniform action selection can achieve tighter upper bounds.
        
        % Generalising the algorithm of \citet{xiao2019maximum_MENTS}, \citet{convex_regular_tree_search} proposed extended empirical exponential weight (E3W, \cf, \autoref{equ:E3W_action_selection} for an exploration hyperparameter $\epsilon>0$), a stochastic algorithm which selects actions according to the convex combination of the regularised policy and a uniform exploration term. In Section~\ref{sec:rec_reg_algos_upper_bounds} we show that E3W is not optimal in the recommender regret, in that uniform action selection can achieve tighter upper bounds. 
        % \begin{equation}
        % \label{equ:E3W_action_selection}
        %     d^\text{E3W}_t(a|s) = (1 - \lambda_t) \nabla \Omega^* \left( \hat{Q}_t(s, \cdot) \right)(a) + \frac{\lambda_t}{|\mathcal{A}|}; \quad \lambda_t = \frac{\epsilon}{\log(t+1)}.
        % \end{equation}
        
        \citet{xiao2019maximum_MENTS} also introduced the maximum entropy tree search (MENTS) algorithm, later extended by \cite{convex_regular_tree_search} to strongly convex regularisers like relative and Tsallis entropy (RENTS and TENTS, respectively) and to the soft mellowmax (SMM) operator by \citet{vucetic_SMM_TS}. 

\section{Recommender regret of sampling bandits in regularised RL}
    We define recommenders in order to generate improved policy, value, and Q-value estimates for use in regularised RL training and inference.
    Recommender regret appends estimation problems to the sampling bandit framework. While sampling bandits update a general performance distribution $f_t = f(\mathcal{H}_t)$, recommenders specifically update estimates of the regularised RL optimality equations: 
    \begin{equation}
    \label{equ:reg_rl_optimality_bandit_equs}
        \hat{\pi}_{\Omega, t} = \nabla \Omega^*(\hat{Q}_{\Omega,t}); \quad \hat{V}_{\Omega, t} = \Omega^*(\hat{Q}_{\Omega,t}); \quad \hat{Q}_{\Omega,t} = \hat{r}_t,
    \end{equation}
    where $\Omega$ is a strongly convex regulariser, and $\hat{r}_t = (\sum_{i=1}^{t} r_{i, a_i} \mathbb{I}(a_i = a) / N_t(a) : a \in \mathcal{A})$. From sampling bandits, we naturally arrive at the recommender policy regret by setting $f_t = \text{softmax}(\hat{r}_t)$. However, this only covers one policy function (the softmax) out of many possible through regularised RL, and does not extend to other objects such as values and Q-values. Lastly, the divergences $D \in \{\text{KL}, \norm{\cdot}_1 \}$ of the sampling bandit policy regret are not aligned with arbitrary regularisers, whose primal norm and Bregman divergence may be different (let alone divergences for objects which are not policies) \citep{bregman_divergence_paper, shalev2007primal_dual_smooth_1}. This highlights the need for a new, \textbf{regulariser-aware notion of regret} such that the geometry of $D$ matches the regularised RL setting, along with a suitable analysis of lower and upper bounds for this regret.
    % In the following sections we define \textbf{regulariser-aware notions of regret} for sampling bandits, such that the geometry of $D$ matches the regularised RL setting. We then analyse the lower bounds on these regrets, and demonstrate that simple algorithms can achieve these bounds.
    
    % the sampling policy can be uniform (or any distribution with full support for that matter) and still lead to the convergence of $p_t \to \pi^\star$.

    % Efficiently estimating policies, values, or Q-values is the central goal of a recommender agent. 

    \subsection{Notions of recommender regret}
        The goal of a recommender is to match a target object $f^\star \in \{\pi^\star_\Omega, V^\star_\Omega, Q^\star_\Omega \}$. We formalise this idea using the notation of \citet{pedramfar2026multiarmedsampling}, choosing performance functions $f_t \in \{\hat{\pi}_{\Omega, t}, \hat{V}_{\Omega, t}, \hat{Q}_{\Omega, t} \}$, which are deterministic functions of the history $\mathcal{H}_t$. A regulariser-aware divergence function $D$ is also required, with its set of admissible functions changing depending on the object of interest. The simple recommender regret on an object of interest $f$ with respect to a divergence $D$ at time step$t \in [1, T]$, is: 
        \begin{equation}
        \label{equ:SRR_f_d_definition}
            \mathcal{SRR}^{f, D}_t(\mathcal{H}_t) = D(f_t, f^\star),
        \end{equation}
        where $f^\star$ is the target of the object $f \in \{ \pi, V, Q \}$. Algorithms that attempt to minimise the simple recommender regret must make $f_t$ close to $f^\star$ with respect to the divergence $D$. We use the notation $\mathcal{SRR}^{f, D}_t$ when there is no ambiguity. The simple recommender regret is a random variable of the policy $d_t$ and the environment $\phi = (\phi_a : a \in \mathcal{A})$ since $f_t$ is updated according to the sampled rewards. The expected simple recommender regret is defined as:
        \begin{equation}
        \label{equ:expected_SRR_f_d_definition}
            \expect{\mathcal{SRR}^{f, D}_t} = \expectwrt{D(f_t, f^\star)}{\substack{a_i \sim d_i \\ r_{i, a_i} \sim \phi_{a_i} \\ \forall i \in [1, t]}}.
        \end{equation}

        % Minimising the error of a policy, value, or Q-value recommender (estimator) leads to an optimisation-like notion of regret. Stating this in the notation of \citet{pedramfar2026multiarmedsampling}, 

        \paragraph{Policies}
        When the object of interest is $f = \pi$, then $D(\pi_1,\pi_2) \in \{\norm{\pi_1-\pi_2}, B_\Omega(\pi_1,\pi_2) \}$, where $\norm{\cdot}$ is the primal norm for which $\Omega$ is strongly convex, and $B_\Omega$ is the Bregman divergence under $\Omega$: $B_\Omega(\pi_1, \pi_2) = \Omega(\pi_1) - \Omega(\pi_2) - \inner{\nabla \Omega(\pi_2), \pi_1-\pi_2}$. For example, negative entropy is strongly convex with respect to the $L_1$ norm, while the Bregman divergence is equal to the KL divergence. 

        \paragraph{Values}
        Value functions output scalars, so any non-negative scalar error function would be appropriate. Let $f=V$ and choose $D(V_1, V_2) = |V_1 - V_2|$, the absolute value function, as the distance for value-based simple recommender regret.

        \paragraph{Q-values} For Q-value-based recommender regret, we define $D(Q_1, Q_2) = \norm{Q_1-Q_2}_*$, which is the dual norm of the policy's primal norm. 
        % The dual norm is a natural choice of divergence since it... lmfao I actually don't know why I chose the dual norm hahaha
        %This primal-dual relationship arises due to the Legendre-Fenchel dual representation of regularised RL, and the dual-smoothness lemma. 
        % That is, policies are primal variables, while Q-values are dual variables, with mappings between the spaces represented by $Q = \nabla \Omega(\pi)$ and $\pi = \nabla \Omega^*(Q)$ \citep{rockafellar1997_convex_analysis_conjugates, shalev_shwartz_phd_thesis_online_learning_convexity}.

    \subsection{Lower bounds on the recommender regret} \label{sec:rec_reg_lower_bounds}
        % Combine all the notions of regret into a single theorem, showing all the bounds at once, somehow. Alexandre thinks there's a decent way to present it.
        We first establish minimax lower bounds for the expected recommender regret under norm-based statistical distances when the objects of study are values and Q-values. The results are presented in Theorem~\ref{thm:lb_norm_q_v} and hold for any arbitrary regulariser.
        \begin{theorem}[Lower bound for state and action values recommender regret]
        \label{thm:lb_norm_q_v}
        Let $T$ and $K$ be the horizon and the number of actions respectively, $\Omega$ be a regulariser, and $\mathcal{SG}(\sigma^2)$ denote the class of $\sigma$-sub-Gaussian bandits. Then, for any algorithm, we have
        \begin{equation*}
            \begin{aligned}
                \sup_{\phi \in \mathcal{SG}(\sigma^2)} \expect{\left|\hat V_{\Omega, T} - V^{*}_{\Omega}\right|} &\geq \sqrt{\frac{2}{T}}\frac{\sigma}{8e} \\
                \max_{\phi \in \mathcal{SG}(\sigma^2)} \E_\phi\left[\norm{\murec - Q^*_{\Omega, \phi}}_*\right] &\geq \sqrt{\frac{K}{2T}}\frac{\sigma \min_{a,a' \in [K], a\neq a'}\norm{\mathbf{e}_a - \mathbf{e}_{a'}}_*}{8e},
            \end{aligned}
        \end{equation*}
        where $\mathbf{e}_a$ is a $K$-dimensional $a$-th indexed canonical basis vector.
        \end{theorem}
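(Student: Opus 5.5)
We use Le Cam's two-point method with Gaussian bandits, which belong to $\mathcal{SG}(\sigma^2)$. The tools are the divergence decomposition $\mathrm{KL}(\PROB_\phi,\PROB_{\phi'}) = \sum_a \E_\phi[N_T(a)]\,\mathrm{KL}(\phi_a,\phi'_a)$ for the laws of the history $\mathcal{H}_T$ under any fixed algorithm, and the Bretagnolle--Huber inequality $\PROB(A)+\PROB'(A^c)\geq \tfrac12 e^{-\mathrm{KL}(\PROB,\PROB')}$. The generic step is as follows. Suppose $f^\star_\phi$ and $f^\star_{\phi'}$ are at distance $\delta$ in the relevant metric. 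Let $A=\{D(f_T,f^\star_\phi)\geq \delta/2\}$. By the triangle inequality, on $A^c$ we have $D(f_T,f^\star_{\phi'})\geq\delta/2$. Hence
\[
\max\bigl(\E_\phi[D(f_T,f^\star_\phi)],\,\E_{\phi'}[D(f_T,f^\star_{\phi'})]\bigr)\geq \tfrac{\delta}{2}\cdot\tfrac12\bigl(\PROB_\phi(A)+\PROB_{\phi'}(A^c)\bigr)\geq \tfrac{\delta}{8}e^{-\mathrm{KL}}.
\]
With $\mathrm{KL}\leq 1$ this gives $\delta/(8e)$, which explains the constant in both displays. Throughout, the estimators $\hat V_{\Omega,T}$ and $\murec$ are arbitrary measurable functions of $\mathcal{H}_T$.

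\emph{Value bound.} Take $\phi$ with all arms $\mathcal N(0,\sigma^2)$ and $\phi'$ with all arms $\mathcal N(\Delta,\sigma^2)$. The key structural fact is that for any regulariser on the simplex, $\Omega^*(Q+c\1)=\sup_{\pi\in\Delta_\mathcal{A}}\{\inner{\pi,Q}+c-\Omega(\pi)\}=\Omega^*(Q)+c$, since $\inner{\pi,\1}=1$. So $|V^*_{\Omega,\phi'}-V^*_{\Omega,\phi}|=\Delta$ regardless of $\Omega$, which is why the bound holds for an arbitrary regulariser. Because every arm shifts, the KL is $\sum_a\E_\phi[N_T(a)]\Delta^2/(2\sigma^2)=T\Delta^2/(2\sigma^2)$ irrespective of the sampling strategy. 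Choosing $\Delta=\sigma\sqrt{2/T}$ makes $\mathrm{KL}=1$ and yields exactly $\sqrt{2/T}\,\sigma/(8e)$.

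\emph{Q-value bound.} Here exploration matters, so we first fix the reference bandit $\phi_0$ with all means $0$. Since $\sum_a\E_{\phi_0}[N_T(a)]=T$, we can pair up arms (for $K\geq2$; for odd $K$ one arm is left out, which only changes constants) and find a pair $a\neq a'$ with $\E_{\phi_0}[N_T(a)+N_T(a')]\leq 2T/K$. Let $\phi_{a,a'}$ have mean vector $\Delta(\mathbf e_a-\mathbf e_{a'})$. Then $Q^*_{\Omega,\phi_{a,a'}}-Q^*_{\Omega,\phi_0}=\Delta(\mathbf e_a-\mathbf e_{a'})$, since in the bandit case $Q^*_\Omega$ is just the mean vector. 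Its dual norm is at least $\Delta\min_{b\neq b'}\norm{\mathbf e_b-\mathbf e_{b'}}_*$. Computing the KL with counts taken under $\phi_0$ gives $\mathrm{KL}(\PROB_{\phi_0},\PROB_{\phi_{a,a'}})\leq (2T/K)\Delta^2/(2\sigma^2)=T\Delta^2/(K\sigma^2)$. Choosing $\Delta=\sigma\sqrt{K/(2T)}$ gives $\mathrm{KL}\leq 1/2\leq 1$. The generic step then yields the claimed $\sqrt{K/(2T)}\,\sigma\min\norm{\mathbf e_a-\mathbf e_{a'}}_*/(8e)$ for the maximum over the two instances.

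The main obstacle is the Q-value case. Three points need care. First, the KL must be computed with expectations under the reference $\phi_0$, so that the choice of the low-count pair does not depend on the alternative instance. Second, the perturbation must move two coordinates in opposite directions, so that the $\min_{a\neq a'}\norm{\mathbf e_a-\mathbf e_{a'}}_*$ factor, and not $\norm{\mathbf e_a}_*$, appears. Third, one has to handle bookkeeping for odd $K$, and the fact that the value and Q-value statements are stated with a supremum versus a maximum over $\mathcal{SG}(\sigma^2)$; the two-point construction gives the maximum directly.
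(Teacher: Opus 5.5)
Your proposal is correct and follows essentially the same route as the paper: a Le Cam two-point argument with Gaussian instances, the bandit KL decomposition and Bretagnolle--Huber, using a shift of $\sigma\sqrt{2/T}$ on every arm for the value bound and $\sigma\sqrt{K/(2T)}(\mathbf{e}_a-\mathbf{e}_{a'})$ on a rarely played pair for the Q-value bound. Your two refinements are both sound: you use the exact identity $\Omega^*(Q+c\mathbf{1})=\Omega^*(Q)+c$ where the paper uses a one-sided inequality, and you pick the pair by expected counts under the fixed reference instance. The latter is the rigorous version of the paper's step, since its lemma is stated for realized counts, whose minimizing pair is random and cannot legitimately define the alternative instance. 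Finally, the odd-$K$ caveat disappears if you take the two arms with the smallest expected counts, whose sum is always at most $2T/K$.
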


        The result is obtained by creating a base instance and a perturbed instance, averaging their respective expected recommender regret as a lower bound, and applying Bretagnolle-Huber. %To obtain the result, we follow typical bandit proof structure. We create a base instance and a perturbed instance, take the mean of the expected recommender regret on both, apply Bretagnolle-Huber. 
        We leverage the triangle inequality to get a quantity that depends on the perturbation, which highlights the $\sqrt{T^{-1}}$ rate. Details are deferred to Appendix~\ref{app:lb_proofs}. Similarly to simple regret minimisation~\citep[Remark 2]{bubeck2009pure_simpleregret}, the Q-value recommender regret scales with $\sqrt{K/T}$, i.e. larger horizons make the problem easier, but more actions make it harder. Interestingly, the value lower bound does not depend on $K$, i.e., regardless of how many actions are in the problem, the difficulty is constant.
        
        For policy recommender regret, we note that the convexity of $\Omega^*$ by itself does not provide us with meaningful lower bounds because there may exist multiple perturbations that provide the same policy, thus trivially setting the regret to zero.  We provide a sufficient condition in Proposition~\ref{prop:suf_cond_lb_pol}. Before stating the propostion, let $E:= \{v \in \mathbb{R}^A \;|\; \sum_{a\in \mathcal{A}}v_a = 0 \}$ be the vector space parallel to the simplex and let us define the dual norm restricted to $E$ as $\|\cdot\|_{\star,E} := \sup_{v \in E\, \|v\|\leq 1} \langle \cdot,v\rangle$. We will also need to make the following assumption
        \begin{assumption} \label{ass:smooth}
        Given $\mathcal{Q} \subset \mathbb{R}^\mathcal{A}$ the set of targets $Q^\star_\Omega$, we assume that $\Pi := \nabla \Omega^*(\mathcal{Q})$ is a subset of $\text{ri}(\Delta_\mathcal{A})$,and that for all $\pi \in \text{hull}(\Pi)$ and $v \in E$, we have $\langle\nabla^2_E \Omega(\pi)v,v\rangle \leq L \|v\|^2 $.\footnote{The notation $\text{ri}(C)$ and $\text{hull}(C)$ respectively stand for the relative interior and the convex hull of $C$, for and $\nabla_E$ correspond to the gradient restricted to the vector space $E$.}
        \end{assumption}
        This assumption holds for instance for the entropy function for which, given that $\mathcal{Q}$ is compact we have that $\nabla \Omega^*(\mathcal{Q})$ is in the relative interior of the simplex and  $\Omega^*$ has the smoothness property described above. 
        \begin{proposition}
        \label{prop:suf_cond_lb_pol} Under Ass.~\ref{ass:smooth},
            % If $\Omega$ is a uniformly smooth regulariser, then there exists a function $\nu_D: \mathbb{R}_{\geq0} \to \mathbb{R}_{\geq0}$, where $\nu_D(c) = 0 \iff c = 0$, such that $\norm{\hat \pi_{\Omega, t} - \pi^*_\Omega} \geq \nu_{\norm{\cdot}}\left(\norm{\murec - Q^*_\Omega}_*\right)$ and $\expect{B_\Omega(\hat \pi_{\Omega, t}, \pi^*_\Omega)} \geq \nu_B\left(\norm{\murec - Q^*_\Omega}_*\right)$. 
              $\forall Q,Q' \in \mathcal{Q}$ we have $\langle\nabla \Omega^*(Q)-\nabla \Omega^*(Q'), Q- Q'\rangle \geq \frac{1}{L} \|Q -Q'\|_{*,E}^2$.
        \end{proposition}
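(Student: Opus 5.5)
Our plan is to use the standard duality between smoothness of $\Omega$ and co-coercivity (strong monotonicity in the dual sense) of $\nabla\Omega^*$, restricted to the affine geometry of the simplex. Fix $Q,Q'\in\mathcal{Q}$ and set $\pi=\nabla\Omega^*(Q)$, $\pi'=\nabla\Omega^*(Q')$, both in $\Pi\subset\text{ri}(\Delta_\mathcal{A})$ by Assumption~\ref{ass:smooth}. Since $\pi$ maximises $\langle Q,p\rangle-\Omega(p)$ over the simplex and lies in the relative interior, the first-order optimality condition gives $Q-\nabla_E\Omega(\pi)\perp E$, i.e.\ the projection of $Q$ onto $E$ equals $\nabla_E\Omega(\pi)$ (defining $\nabla_E$ as the gradient of $\Omega$ restricted to the affine hull of the simplex). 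Hence $P_E(Q-Q')=\nabla_E\Omega(\pi)-\nabla_E\Omega(\pi')$. Moreover $\pi-\pi'\in E$, so $\langle \pi-\pi',Q-Q'\rangle=\langle \pi-\pi',\nabla_E\Omega(\pi)-\nabla_E\Omega(\pi')\rangle$, and also $\|Q-Q'\|_{*,E}=\sup_{v\in E,\|v\|\le1}\langle \nabla_E\Omega(\pi)-\nabla_E\Omega(\pi'),v\rangle$, because adding any component orthogonal to $E$ does not change pairings with $v\in E$.

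The problem thus reduces to proving, for $\Omega$ restricted to the convex set $\text{hull}(\Pi)$ (which lies in $\text{ri}(\Delta_\mathcal{A})$ and is convex), the co-coercivity inequality $\langle g(\pi)-g(\pi'),\pi-\pi'\rangle\ge \frac1L\|g(\pi)-g(\pi')\|_{*,E}^2$ with $g=\nabla_E\Omega$. We will use the Hessian bound $\langle\nabla_E^2\Omega(p)v,v\rangle\le L\|v\|^2$ on $\text{hull}(\Pi)$, which by integrating along segments gives the descent-lemma upper bound $\Omega(p')\le\Omega(p)+\langle g(p),p'-p\rangle+\frac L2\|p'-p\|^2$ for $p,p'\in\text{hull}(\Pi)$, together with convexity of $\Omega$. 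The textbook route (Nesterov's proof of co-coercivity for $L$-smooth convex functions) then runs: define $\phi(p)=\Omega(p)-\langle g(\pi'),p\rangle$, which is convex, $L$-smooth in the same sense, and minimised at $\pi'$; then apply the descent lemma at $p=\pi-\frac1L u$ where $u\in E$ is the $\|\cdot\|$-unit-scaled dual direction attaining $\|g(\pi)-g(\pi')\|_{*,E}$, giving $\phi(\pi')\le\phi(\pi)-\frac{1}{2L}\|g(\pi)-g(\pi')\|_{*,E}^2$. Symmetrising in $(\pi,\pi')$ and summing yields the claimed inequality.

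The main obstacle is the domain issue: the step $p=\pi-\frac1L u$ may leave $\text{hull}(\Pi)$ (or even the simplex), where the smoothness bound is not assumed. First I would try to circumvent this by a local argument: use the integral form $g(\pi)-g(\pi')=\int_0^1 H_s(\pi-\pi')\,ds$ with $H_s=\nabla_E^2\Omega(\pi'+s(\pi-\pi'))$, which stays in the hull. Writing $\bar H=\int_0^1 H_s\,ds$, a symmetric positive semidefinite operator on $E$ with $\langle \bar H v,v\rangle\le L\|v\|^2$, we need $\langle \bar H w,w\rangle\ge\frac1L\|\bar H w\|_{*,E}^2$ for $w=\pi-\pi'$. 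This follows from Cauchy--Schwarz for the semi-inner product $\bar H$: for $v\in E$ with $\|v\|\le1$, $\langle \bar Hw,v\rangle^2\le\langle\bar Hw,w\rangle\langle\bar Hv,v\rangle\le L\langle\bar Hw,w\rangle$; taking the supremum over $v$ finishes. This avoids leaving the domain entirely, so I expect this linear-algebra version to be the cleanest proof, with the optimality-condition identification of $P_E Q$ with $\nabla_E\Omega(\pi)$ (requiring interiority from the assumption and differentiability of $\Omega$ there) as the remaining care point.
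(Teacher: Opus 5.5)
Your final argument is correct and is essentially the paper's proof. You use the first-order optimality condition in $\text{ri}(\Delta_\mathcal{A})$ to get $P(Q-Q')=\nabla_E\Omega(\pi)-\nabla_E\Omega(\pi')=\bar H(\pi-\pi')$, with $\bar H$ the Hessian averaged along the segment in $\text{hull}(\Pi)$, then apply Cauchy--Schwarz for $\langle \bar H\cdot,\cdot\rangle$ and note that $\|\cdot\|_{*,E}$ only sees the projection onto $E$; you were also right to drop the Nesterov-style step $\pi-\frac{1}{L}u$, which can leave the region where the Hessian bound is assumed.
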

        The proof is in Appendix~\ref{app:suff_cond_lb_pol} as well as a discussion on why strict convexity of $\Omega^*$ is not enough. It relies on a simplex-contrained version of the fact that this property of the regulariser $\Omega$ implies the dual $\Omega^*$ is $\frac{1}{L}$-strongly convex, which we use to lower-bound the expected recommender regret. As a concrete example, it follows from Proposition~\ref{prop:suf_cond_lb_pol} that negative entropy is $L$-smooth on the policy space $\Pi = \left\{ \pi: \pi(a) \geq \epsilon>0, \forall a \in \mathcal{A}\right\}$. Note that Assumption~\ref{ass:smooth} is made solely to obtain an interesting analysis of a broader class of regularisers on a subset of their domain. Indeed, compact sets of policies may be overly restrictive for some regularisers. We obtain the following lower bound on the policy recommender regret.
        % The specific form of the modulus of convexity is known for special cases of uniformly smooth regularisers.
        % A simple corollary of this result gives us a lower bound of the policy with the Q-values.
        %         \begin{corollary} For $\hat \pi_{\Omega,T} :=  \nabla \Omega^*(\hat Q_{\Omega,T})$ and $\pi^*_\phi := \nabla \Omega^*(Q^*_{\Omega,\phi})$ we have $$\|\hat \pi_{\Omega,T} - \hat \pi^*_\phi\| \geq \tfrac{1}{L} \|\hat Q_{\Omega,T}- Q^*_{\Omega,\phi}\|_{*,E}\,.$$
        % \end{corollary}
        % With this corollary, we can provide minimax lower bounds for the norm-based and Bregman divergence-based policy recommender regret.
        \begin{theorem}[Lower bound for policy recommender regret]
        \label{thm:lower_bound_smooth}
            Let $T$, $K$ and $\omega =1$ be the horizon, the number of actions and the inverse temperature respectively and let us consider the class of $\sigma$-sub-Gaussian bandits, $\mathcal{SG}(\sigma^2)$ with target $Q^\star_\Omega \in \mathcal{Q}$, we call this set $\mathcal{SG}(\sigma^2, \mathcal{G})$. Then if,  Ass~\ref{ass:smooth} holds we have 
            \begin{equation*}
                \begin{aligned}
                    \max_{\phi \in \mathcal{SG}(\sigma^2,\mathcal{G})} \E_\phi\left[\norm{\pirec- \pistar_\phi }\right] &\geq \sqrt{\frac{K}{2T}}\frac{\sigma \min_{a,a' \in [K], a\neq a'}\norm{\mathbf{e}_a - \mathbf{e}_{a'}}_{*,E}}{8L} \\
                     \max_{\phi \in \mathcal{SG}(\sigma^2,\mathcal{G})} \E_\phi\left[B_\Omega(\pirec, \pistar_\phi)\right] &\geq \frac{K}{T}\frac{\beta \sigma^2  \min_{a,a' \in [K], a\neq a'}\norm{\mathbf{e}_a- \mathbf{e}_{a'}}_{*,E}^2}{64L^2e},
                \end{aligned}
            \end{equation*}
            where $\mathbf{e}_a$ is a $K$-dimensional $a$-th indexed canonical basis vector.
    \end{theorem}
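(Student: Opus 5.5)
The plan is to reduce both policy bounds to the Q-value lower bound of Theorem~\ref{thm:lb_norm_q_v}, using the strong monotonicity in Proposition~\ref{prop:suf_cond_lb_pol}. The Q-value argument has to be rerun on the restricted class $\mathcal{SG}(\sigma^2,\mathcal{G})$, with $\norm{\cdot}_*$ replaced by $\norm{\cdot}_{*,E}$.

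\emph{Step 1 (pointwise link between policy and Q-value errors).} Take $Q=\murec$ and $Q'=Q^\star_{\Omega,\phi}$, both in $\mathcal{Q}$; if needed, project $\murec$ onto $\mathcal{Q}$ first, which can only decrease the error. Since $\nabla\Omega^*(Q)-\nabla\Omega^*(Q')\in E$, the duality between $\norm{\cdot}$ on $E$ and $\norm{\cdot}_{*,E}$ gives
\[
\tfrac1L\norm{Q-Q'}_{*,E}^2\le\langle \pirec-\pistar_\phi,Q-Q'\rangle\le \norm{\pirec-\pistar_\phi}\,\norm{Q-Q'}_{*,E}.
\]
Hence $\norm{\pirec-\pistar_\phi}\ge \frac1L\norm{\murec-Q^\star_{\Omega,\phi}}_{*,E}$.

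For the Bregman divergence, I would use strong convexity of $\Omega$: $B_\Omega(\pi_1,\pi_2)\ge\frac{\beta}{2}\norm{\pi_1-\pi_2}^2$, where $\beta$ is the strong convexity modulus appearing in the statement. Combining this with the norm bound gives $B_\Omega\ge \frac{\beta}{2L^2}\norm{\murec-Q^\star}_{*,E}^2$.

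\emph{Step 2 (two-point lower bound on the Q-value error).} Repeat the construction behind Theorem~\ref{thm:lb_norm_q_v}, making sure all means stay in $\mathcal{Q}$. Use Gaussian arms with variance $\sigma^2$ and a base instance $\mu\in\mathcal{Q}$ chosen in the interior, so that small perturbations remain admissible. Pick the arm $a$ with $\E_\mu[N_T(a)]\le T/K$, and perturb its mean by $\Delta=\sigma\sqrt{K/(2T)}$, i.e.\ $\mu'=\mu+\Delta\mathbf e_a$.

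For any estimate $\hat Q$, the triangle inequality gives $\norm{\hat Q-\mu}_{*,E}+\norm{\hat Q-\mu'}_{*,E}\ge\Delta\norm{\mathbf e_a}_{*,E}$. The key point is to lower-bound $\norm{\mathbf e_a}_{*,E}$ by $\norm{\mathbf e_a-\mathbf e_{a'}}_{*,E}$, up to a constant. This holds because the restricted dual norm only sees the projection onto $E$: one can equivalently perturb two arms in opposite directions, which produces exactly the $\mathbf e_a-\mathbf e_{a'}$ factor.

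Averaging the two instances, threshold on the event $\{\norm{\hat Q-\mu}_{*,E}\ge \Delta c/2\}$. Bretagnolle--Huber with $\mathrm{KL}=\E_\mu[N_T(a)]\Delta^2/(2\sigma^2)\le 1/4$ then gives a probability $\ge \frac14 e^{-1/4}$ that one of the two errors is at least $\Delta c/2$. This yields the first-moment bound $\max_\phi\E\norm{\murec-Q^\star}_{*,E}\gtrsim \sqrt{K/T}\,\sigma c/8$.

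For the second moment I would not apply Jensen. Instead I would use the same high-probability event, which gives $\E\norm{\cdot}^2_{*,E}\ge(\Delta c/2)^2\cdot\frac14e^{-1}$. This is where the $\frac{K}{T}\sigma^2c^2/(64e)$ shape comes from, with the factor $\beta$ coming from Step 1.

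\emph{Step 3.} Combine Steps 1 and 2 to get both displayed inequalities; the constants should be tuned to match $8L$ and $64L^2e$.

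The main obstacle is Step 2's admissibility and geometry. First, the perturbed instance must remain in $\mathcal{SG}(\sigma^2,\mathcal{G})$, which requires $\mathcal{Q}$ to contain a ball in the relevant directions around some point, or else $T$ large enough. Second, the perturbation must have a nonzero $E$-component, since pure shifts along $\mathbf 1$ leave the policy unchanged. I would handle the second issue with the two-arm perturbation $\pm\Delta(\mathbf e_a-\mathbf e_{a'})/2$, choosing $a,a'$ among the least-sampled arms. This keeps the KL budget within a constant of the one-arm case.
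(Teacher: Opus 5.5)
Your Step~1 is where the proof breaks. Proposition~\ref{prop:suf_cond_lb_pol} holds only for $Q,Q'\in\mathcal{Q}$: its proof needs the policy segment between $\nabla\Omega^*(Q)$ and $\nabla\Omega^*(Q')$ to lie in $\mathrm{hull}(\Pi)$, which is where the Hessian bound is assumed. But $\murec=\hat r_T$ is an empirical mean of Gaussian rewards and can lie anywhere in $\mathbb{R}^K$. Off $\mathcal{Q}$ the pointwise inequality $\norm{\pirec-\pistar_\phi}\ge\frac1L\norm{\murec-Q^\star_{\Omega,\phi}}_{*,E}$ can fail badly. For negative entropy, take $\murec=Q^\star+t(\mathbf e_1-\mathbf e_2)$: the $Q$-error grows linearly in $t$, while $\norm{\pirec-\pistar}_1\le 2$ always.

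Projecting $\murec$ onto $\mathcal{Q}$ does not repair this. The projection produces a different policy, and nothing relates its distance to $\pistar_\phi$ to the distance of $\pirec=\nabla\Omega^*(\murec)$, which is the quantity you must lower-bound. A $Q$-space projection need not even decrease the policy error. Your Bregman step $B_\Omega\ge\frac{\beta}{2L^2}\norm{\murec-Q^\star}_{*,E}^2$ is routed through the same inequality and inherits the gap.

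A truncated version, $\norm{\pirec-\pistar}\ge\frac1L\min\{\norm{\murec-Q^\star}_{*,E},\rho\}$, can be recovered from monotonicity of $t\mapsto\inner{\nabla\Omega^*(Q^\star+td)-\nabla\Omega^*(Q^\star),d}$. However, this needs $\mathcal{Q}$ to contain a ball of radius $\rho$ of order $\sigma\sqrt{K/T}$ around each target, which is an additional argument beyond what you wrote.

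The missing idea is that you never need to compare $\pirec$ with $\murec$ at all. Instead, run the two-point argument directly in policy space:
\begin{itemize}
\item For any history-measurable policy estimate, $\norm{\pirec-\pistar_\mu}+\norm{\pirec-\pistar_{\mu+\epsilon}}\ge\norm{\pistar_\mu-\pistar_{\mu+\epsilon}}$. Bretagnolle--Huber then lower-bounds $\max_\phi\E_\phi\norm{\pirec-\pistar_\phi}$ by a constant times this separation (Lemma~\ref{lemma:shared_norm_lb}).
\item The proposition is applied only to the two instance means, which do lie in $\mathcal{Q}$. The inequality $\inner{\pistar_{\mu+\epsilon}-\pistar_\mu,\epsilon}\ge\frac1L\norm{\epsilon}_{*,E}^2$, together with the dual-norm inequality, gives $\norm{\pistar_{\mu+\epsilon}-\pistar_\mu}\ge\frac1L\norm{\epsilon}_{*,E}$ (Lemma~\ref{lemma:delta_norm_pi_lb}).
\item For the Bregman bound, the paper uses $B_\Omega(\pirec,\pistar_\phi)\ge\frac\beta2\norm{\pirec-\pistar_\phi}^2$ and a squared two-point argument with separation $\norm{\pistar_\mu-\pistar_{\mu+\epsilon}}^2$ (Lemma~\ref{lemma:breg_lemma}).
\end{itemize}
This is the paper's route, and it holds for every estimator, not only the plug-in $\nabla\Omega^*(\murec)$.

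The rest of your plan fits this route. Your final two-arm perturbation on a least-sampled pair is the paper's $\epsilon=\sqrt{K/(2T)}\,\sigma(\mathbf e_i-\mathbf e_j)$; take it unhalved, or you lose a factor $2$. Your concern that $\mu+\epsilon$ must remain in $\mathcal{Q}$ is legitimate, and the paper leaves it implicit. On constants, note that the paper's chain as written yields $8eL$ in the first bound rather than the stated $8L$, because Lemma~\ref{lemma:shared_norm_lb} gives $\Delta/(8e)$.
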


    The proof can be found in Appendix~\ref{app:lb_proofs}, and reuses the same proof structure from Theorem~\ref{thm:lb_norm_q_v}, where the Bregman divergence is lower bounded using the $\beta$-strong convexity of $\Omega$.
    Once again, these lower bounds indicate that larger horizons make the problem easier, whereas a larger number of actions and smaller smoothness constants $L$ make the problem harder. Indeed, when $L$ is small, vastly different policies can have quasi-identical Q-values. However, this result provides evidence that the multi-armed sampling lower bound of~\citet{pedramfar2026multiarmedsampling} on the simple sampling regret using the total variation distance may be loose, since our lower bound is greater by a factor of $K$.

    \subsection{Algorithms for the recommender regret} \label{sec:rec_reg_algos_upper_bounds}
        % E3W and uniform analysed, showing that uniform seems to be better.
        We begin with an analysis of the notions of regret, assuming no knowledge of the form of the regulariser, other than its being strongly convex. Theorem~\ref{theorem:general_upper_bound} provides general upper bounds on each notion of simple recommender regret for any sampling policy $d_t$ with non-zero probability mass on each action. 
        Importantly, the upper bounds always achieve near-optimal dependence of on $T$, but only achieve optimal dependence on $K$ when the dual norm is $\norm{\cdot}_\infty$, or when $f = V$. This theorem allows us to conclude that uniform sampling is an effective algorithm for regularised RL estimation. % problems. 

        \begin{theorem}[Sampling policies with non-zero probability mass achieve optimal regret bounds] \label{theorem:general_upper_bound}
            Suppose $\Omega$ is $\beta$-strongly convex with dual $V_\Omega(s) = \Omega^*(Q(s, \cdot))$, with $\sigma$-sub-Gaussian reward distributions $\phi_a, \forall a \in \mathcal{A}$. For any $T \geq \frac{2\ln(2K/\delta)}{\zeta^2}$, with $\delta \in O(1/T^2)$, and any algorithm with sampling policy $d_t$ satisfying $d_t(a) \geq \zeta > 0$ $\forall a \in \mathcal{A}, \forall t \in [1, T]$, the following expected regret bounds hold:
            % \begin{equation} \label{equ:theorem_regret_bounds}
            %     \expect{\mathcal{SRR}^{f, D}_T} \in \Tilde{O} \left( \sqrt{\frac{1}{\zeta T}}\right);\quad \quad \expect{\mathcal{SRR}^{\pi, B_\Omega}_T} \in \Tilde{O} \left(\frac{1}{\zeta T}\right),
            % \end{equation}

            \begin{enumerate}[leftmargin=*, nosep, before=\setlength{\abovedisplayskip}{1pt}\setlength{\belowdisplayskip}{1pt}]
                \item When $\norm{\cdot}_\star = \norm{\cdot}_\infty$ or when $f = V$:
                \begin{equation} \label{equ:theorem_regret_bounds_infty}
                    \expect{\mathcal{SRR}^{f, D}_T} \in \tilde{O} \left( \sqrt{\frac{1}{\zeta T}}\right), \qquad \expect{\mathcal{SRR}^{\pi, B_\Omega}_T} \in \tilde{O} \left(\frac{1}{\zeta T}\right).
                \end{equation}
                \item When $\norm{\cdot}_\star$ is a general dual norm with $\norm{ \mathbf e'}_\star = \max_{a} \norm{\mathbf{e}_a}_\star$:
                \begin{equation} \label{equ:theorem_regret_bounds_general}
                    \expect{\mathcal{SRR}^{f, D}_T} \in \tilde{O} \left( K \norm{\mathbf e'}_\star \sqrt{\frac{1}{\zeta T}}\right), \qquad \expect{\mathcal{SRR}^{\pi, B_\Omega}_T} \in \tilde{O} \left( K^2 \norm{\mathbf e'}_\star^2 \frac{1}{\zeta T}\right),
                \end{equation}
            \end{enumerate}
            where the left and right terms bound norm-based and Bregman divergence-based regrets, respectively. $\Tilde O$ leaves out slow-growing logarithmic terms, following  \citet{pedramfar2026multiarmedsampling}.
        \end{theorem}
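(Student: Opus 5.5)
The plan is to reduce every notion of recommender regret to the sup-norm error $\norm{\hat r_T - \mu}_\infty$ of the empirical mean vector, and then control that error with a concentration argument.

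\textbf{Step 1: enough samples per arm.} Since $d_t(a) \geq \zeta$ for every arm and every round, the count $N_T(a)$ stochastically dominates a $\text{Bin}(T,\zeta)$ variable. I would make this precise with a martingale (Azuma--Hoeffding) bound on $\sum_t (\mathbb{I}(a_t=a) - d_t(a))$, then take a union bound over the $K$ arms. The condition $T \geq 2\ln(2K/\delta)/\zeta^2$ is exactly what makes $N_T(a) \geq \zeta T/2$ hold for all $a$ with probability at least $1-\delta/2$.

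\textbf{Step 2: concentration of the empirical means.} On the event from Step 1, I would apply sub-Gaussian concentration to the empirical means. The number of samples is random, so I would use the standard "stack of rewards" argument from the bandit literature: union bound over every possible value of $N_T(a)$, which costs only a $\log T$ factor. With probability at least $1-\delta/2$ this gives, for all $a$,
\[
|\hat r_T(a) - \mu_a| \leq \sigma \sqrt{\frac{4\ln(4KT/\delta)}{\zeta T}},
\]
that is, $\norm{\hat Q_{\Omega,T} - Q^\star_\Omega}_\infty \in \tilde O(1/\sqrt{\zeta T})$.

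\textbf{Step 3: transfer to each regret.}
\begin{itemize}
\item \emph{Values.} $\Omega^*$ has gradient $\nabla\Omega^*(Q) \in \Delta_\mathcal{A}$, whose $\ell_1$ norm is $1$. By the mean value theorem and Hölder's inequality, $\Omega^*$ is $1$-Lipschitz with respect to $\norm{\cdot}_\infty$. Hence $|\hat V_{\Omega,T} - V^\star_\Omega| \leq \norm{\hat Q - Q^\star}_\infty$, with no dependence on $K$ and no dependence on the dual norm.
\item \emph{Q-values and the norm-based policy regret.} Because $\Omega$ is $\beta$-strongly convex with respect to $\norm{\cdot}$, the dual $\Omega^*$ is $1/\beta$-smooth with respect to $\norm{\cdot}_*$ (the duality lemma of Shalev-Shwartz). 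Therefore
\[
\norm{\hat\pi - \pi^\star} \leq \tfrac{1}{\beta}\norm{\hat Q - Q^\star}_*.
\]
\item \emph{Bregman policy regret.} I would use the identity $B_\Omega(\hat\pi,\pi^\star) = B_{\Omega^*}(Q^\star,\hat Q)$, which holds for gradient-related pairs. Smoothness then gives $B_\Omega(\hat\pi,\pi^\star) \leq \frac{1}{2\beta}\norm{\hat Q - Q^\star}_*^2$. This explains why the Bregman bounds are the squares of the norm bounds.
\end{itemize}

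\textbf{Step 4: bound the dual norm.}
\begin{itemize}
\item In case 1, $\norm{\cdot}_* = \norm{\cdot}_\infty$, so Step 2 applies directly.
\item In case 2, I would write $\hat Q - Q^\star = \sum_a (\hat Q_a - Q^\star_a)\mathbf e_a$ and apply the triangle inequality. This gives $\norm{\hat Q - Q^\star}_* \leq K\norm{\mathbf e'}_*\norm{\hat Q - Q^\star}_\infty$, which produces the factors $K\norm{\mathbf e'}_*$ and $K^2\norm{\mathbf e'}_*^2$.
\end{itemize}

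\textbf{Step 5: from high probability to expectation.} I would split the expectation on the good event and its complement. On the good event the bounds above hold. On the bad event, of probability $\delta \in O(1/T^2)$, I would bound the regret crudely:
\begin{itemize}
\item For the norm-based regrets, use Cauchy--Schwarz together with a second-moment bound on sub-Gaussian empirical means.
\item For policies, note the norm is bounded on the simplex.
\item The contribution $\sqrt{\delta}\cdot\mathrm{poly}(T)$ is then lower order.
\end{itemize}

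\textbf{Main obstacle.} I expect two delicate points.
\begin{itemize}
\item Step 1 with an adaptive $d_t$. Hoeffding does not apply directly and must be replaced by a martingale argument. The concentration in Step 2 also needs care with the random counts.
\item Handling the bad event in expectation for the Bregman divergence, which can be unbounded near the simplex boundary. I would first try bounding it through the dual representation $\frac{1}{2\beta}\norm{\hat Q - Q^\star}_*^2$, which holds deterministically and so needs only second moments of $\hat r_T$.
\end{itemize}
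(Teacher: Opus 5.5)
Your proposal is correct and follows essentially the same route as the paper. You reduce every regret to the Q-value error via dual smoothness, the primal--dual Bregman identity and the $\ell_\infty$-non-expansiveness of $\Omega^*$. You control the counts with Azuma--Hoeffding and the means with a union-bounded Hoeffding step, which gives the same $\sqrt{4\sigma^2\ln(4KT/\delta)/(\zeta T)}$ bound. You then pass to general dual norms by the triangle inequality, and convert to expectation by splitting on the good event with Cauchy--Schwarz and sub-Gaussian moment bounds.

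Your union bound over the possible values of $N_T(a)$ handles the random counts more cleanly than the paper does, since the paper applies Hoeffding as if $N_t(a)$ were fixed. One correction: for the Bregman term on the bad event, Cauchy--Schwarz needs a fourth-moment bound on $\norm{\hat Q_T - Q^\star}_\infty$, as the paper computes, not only a second-moment bound; sub-Gaussianity supplies this.
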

    
        % \begin{proof}[Proof of Theorem~\ref{theorem:general_upper_bound}]
        %     The proof is completed by upper bounding each notion of regret by Q-value error using Lemma~\ref{lemma:q_val_convergence_implies_value_policy}. Lemma~\ref{lemma:any_policy_achieves_optimal_bounds} then allows us to upper bound the Q-value error using concentration arguments on specific problem instances. Finally, Lemma~\ref{lemma:upper_bound_conversion_to_expectation} allows for bounds in expectation on each notion of regret.
        % \end{proof}

        Theorem~\ref{theorem:general_upper_bound}, proof in \autoref{appendix:proof_bandit_theorem}, makes it clear that an optimal sampling policy should maximise $\zeta$, to minimise the upper bound. Intuitively, \textbf{uniform sampling} should achieve better rates than other sampling policies, like E3W (\Eqref{equ:E3W_action_selection}). 
        Corollary~\ref{cor:uniform_beats_e3w_Q_upper_bounds} makes this intuition concrete, demonstrating that E3W achieves upper bounds that are looser by a $\log(T)$ term.
        Our results match those of \citet{pedramfar2026multiarmedsampling} in terms of dependence on $T$ (they don't report dependence on K) and highlight a theoretical tension between sampling and recommendation: while sampling requires minimal exploration, recommendation requires pure exploration, as in \citet{bubeck2009pure_simpleregret}.
        % Our results highlight a theoretical tension between sampling and recommendation: as in \citet{bubeck2009pure_simpleregret}, we show that uniform sampling is optimal for simple regret minimisation, and unlike \citet{pedramfar2026multiarmedsampling}, 
        %This is similar to the finding in \citet{bubeck2009pure_simpleregret}, which showed that uniform sampling tightened simple regret upper bounds, but differs from \citet{pedramfar2026multiarmedsampling}, which showed that sampling barely required exploration. Our results thus demonstrate a theoretical tension between sampling and recommendation.
        \vspace{-7pt}
        \paragraph{Limitations of the analysis}
            We sought to produce general upper bounds of the simple recommender regret that would hold regardless of the regulariser, and consequently, the form of the policy or value functions. As such, though our upper bounds fairly tight in $T$, they are still (sometimes) loose in $K$. Future work will propose regulariser-specific algorithms (\eg, like \citet{pedramfar2026multiarmedsampling} for entropy-regularised RL in sampling regret) to achieve tighter upper bounds.
            % Note that uniform sampling is optimal in the sense that it minimises the upper bound, as we will see in \autoref{sec:sythetic_trees}, uniform sampling does not always achieve lower error than E3W.
\vspace{-15pt}
\section{Monte Carlo planning for recommendations}
\vspace{-5pt}
    Monte Carlo tree search is a planning algorithm that naturally extends recommender bandit problems to generating recommendations in multi-state RL environments. Following the algorithmic setup of \citet{xiao2019maximum_MENTS} and \citet{convex_regular_tree_search}, we consider a search tree $\mathcal{T}$ rooted at $s_0$ of depth $H$ with deterministic transition function $\tau: \mathcal{S}\times\mathcal{A} \to \mathcal{S}$ and deterministic non-leaf rewards $r(s, a)$. 
    Each non-leaf node $s \in \mathcal{T}$ acts as a bandit instance operating over actions $a \in \mathcal{A}$.\footnote{MCTS can be intuitively viewed as a sequence of bandits stacked one on top of another.}
    Each node $s$ of the tree maintains value estimates $\hat{V}_t(s)$, $\hat Q_t(s,a)$, and visit counts $N_t(s,a)$ across total simulation steps $t$.

    At each simulation step $t$, a sampling policy $d_t(\cdot|s)$ navigates from $s_0$ through $\mathcal{T}$ along trajectory $(s_0, a_0) \to (s_1, a_1) \to \dots \to s_L$ of length $L \leq H$, with $a_i \sim d_t(\cdot|s_i)$ and $s_{i+1} = \tau(s_i, a_i)$, adding $s_L$ to the tree. 
    Visit counts are updated recursively via $N_t(s_i, a_i) \leftarrow N_t(s_i, a_i) + 1$ along the trajectory. 
    While standard MCTS algorithms like UCT \citep{kocsis2006bandit_UCT} and AlphaZero \citep{silver2017mastering_alphazero} use empirical average backups, regularised tree search algorithms use the Legendre-Fenchel dual $\Omega^*(\hat{Q})$ directly, without averaging:
    \begin{equation} \label{equ:mcts_value_q_value_updates}
        \hat{V}_t(s_i) = \Omega^*\left(\hat{Q}_t(s_i, \cdot)\right); \quad \hat{Q}_t(s_i,a_i) = \begin{cases}
            r(s_i, a_i) + \gamma R(s_L) & i = L - 1 \\
            r(s_i, a_i) + \gamma \hat{V}_t(\tau(s_i, a_i)) & i < L - 1
        \end{cases},
    \end{equation}
    where $R(s_L)$ is a value estimate generated via a Monte Carlo rollout, or a learned value estimator \citep{silver2017mastering_alphazero, xiao2019maximum_MENTS, convex_regular_tree_search, boltzmann_CRTS_simple_regret}. 
    
    % since our main contribution is the theoretical analysis of regularised tree search under simple recommender regret. 
    
    % Monte Carlo tree search (MCTS) incrementally constructs a search tree $\mathcal{T}$ rooted at an initial state $s_0$ to generate value, Q-value, and policy recommendations.

    % Consider a search tree $\mathcal{T}$ of depth $H$ with deterministic transition function $\tau: \mathcal{S}\times\mathcal{A} \to \mathcal{S}$ and deterministic non-leaf rewards $r(s, a)$.
    % Each non-leaf node $s \in \mathcal{T}$ acts as a bandit instance operating over actions $a \in \mathcal{A}$.\footnote{A nice perspective: MCTS is just stacking bandits, one on top of another.} 

    % Search trees are usually explained through a four-step process: 1) recursive action selection, 2) tree expansion to store new nodes, 3) value estimation of the expanded node(s), and 4) propagation of value estimates up the tree \citep{Mandziuk2018_MCTS_real_life}. This view is effective at explaining practical MCTS algorithms, but lacks explanatory power from a theoretical perspective. 

    % A search tree can instead be viewed as a stack of bandit problems. At each non-terminal and non-root node $s$ of a search tree, a bandit problem unfolds: 1) an action is sampled from algorithm $a \sim d(\cdot|s)$, 
    % %balances exploration and exploitation according to the notion of regret being optimised,
    % 2) reward signals are attained by visiting the child state-action pair, 3) the bandit's estimators are updated, and passed up the tree.

    Our main contribution is the theoretical analysis of regularised tree search \citep{xiao2019maximum_MENTS, convex_regular_tree_search} for generating recommendations.
    Theorem~\ref{thrm:MCTS_SRR_high_prob_bound} shows that the simple recommender regret concentrates at the root of the tree, indicating that recommender MCTS can effectively generate regularised policy, value, and Q-value recommendations (proof in \autoref{appendix:MCTS_THEOREM_PROOF}).

    \begin{theorem} \label{thrm:MCTS_SRR_high_prob_bound}
        For any notion of simple recommender regret $\mathcal{SRR}^{f, D}_T$ at the root node $s_0$ of a search tree of depth $H$ with deterministic transitions and deterministic non-leaf rewards, a sampling policy $d_t(a|s) \geq \zeta > 0, \forall a \in \mathcal{A}, \forall s\in\mathcal{S}, \forall t \in [1, T]$, and $\epsilon \leq C_3 \zeta$, then $\exists C_1, C_2, C_3 \in \mathbb{R}_{>0}$, such that
        \begin{equation} \label{equ:MCTS_high_prob_regret_bound}
            \prob{\mathcal{SRR}^{f, D}_T \geq \epsilon} \leq C_1 (KT)^H \exp \left(- \frac{\zeta^H T \epsilon^2}{K^{2H} C_2} \right).
        \end{equation}
    \end{theorem}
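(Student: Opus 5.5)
The plan is induction on the depth $H$, with the root's Q-values as the central quantity. Every notion of simple recommender regret at $s_0$ is controlled by the dual-norm error $\norm{\murec(s_0,\cdot) - Q^\star_\Omega(s_0,\cdot)}_*$, via the same reduction used in the proof of Theorem~\ref{theorem:general_upper_bound}. The value error is handled by $1$-Lipschitzness of $\Omega^*$ in $\norm{\cdot}_*$ over the simplex. The policy norm error is handled by $\frac{1}{\beta}$-smoothness of $\Omega^*$ (a consequence of $\beta$-strong convexity of $\Omega$). The Bregman error is bounded by a quadratic in the Q-error. It therefore suffices to prove the high-probability bound for the event $\{\norm{\hat Q_T(s_0,\cdot)-Q^\star(s_0,\cdot)}_*\geq \epsilon'\}$, where $\epsilon'$ is a polynomial function of $\epsilon$. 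The resulting constant factors (and the square root for the Bregman case, harmless because $\epsilon\le C_3\zeta$ is small) are absorbed into $C_2$.

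The base case $H=1$ is just the bandit case. Since $d_t(a)\ge\zeta$, a Chernoff/Azuma argument on the martingale $\sum_t(\mathbb{I}(a_t=a)-d_t(a))$ gives $N_T(a)\ge \zeta T/2$ for all $a$, except with probability $K\exp(-\zeta^2T/8)$. On that event, sub-Gaussian concentration with a union bound over $a$ and over the possible values of $N_T(a)\le T$ gives $|\hat Q(a)-Q^\star(a)|\le \epsilon'$ except with probability $2KT\exp(-\zeta T\epsilon'^2/(4\sigma^2))$. Converting $\norm{\cdot}_\infty$ to $\norm{\cdot}_*$ costs a factor $K\norm{\mathbf e'}_*$, which squared gives the $K^2$. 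The count-failure term is absorbed into the main term using $\epsilon\le C_3\zeta$, since then $\zeta^2\gtrsim \zeta\epsilon^2$. This is exactly where the hypothesis on $\epsilon$ is needed.

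For the inductive step at depth $H$, fix an action $a$ at the root with child $s'=\tau(s_0,a)$. Transitions and rewards are deterministic, so $\hat Q_T(s_0,a)-Q^\star(s_0,a)=\gamma(\hat V_T(s')-V^\star(s'))$, and the Lipschitz property gives $|\hat V_T(s')-V^\star(s')|\le\norm{\hat Q_T(s',\cdot)-Q^\star(s',\cdot)}_*$. The subtree at $s'$ has depth $H-1$. Its own clock is $N_T(s_0,a)$, which exceeds $\zeta T/2$ with the probability above. Conditional on the trajectories passing through $s'$, the sampling at $s'$ still satisfies $d\ge\zeta$. I will apply the inductive hypothesis with horizon $n$ for each $n\in[\zeta T/2,T]$ and union bound over $n$ and over the $K$ children. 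This produces a factor $KT$ per level, hence $(KT)^H$. It also replaces $T$ by $\zeta T/2$ in the exponent, so the exponent becomes $\zeta\cdot\zeta^{H-1}T\epsilon'^2/(K^{2(H-1)}C_2')$. A further $K^2$ is lost converting the per-coordinate error at the root into the dual norm, giving $K^{2H}$ overall. The constants $C_1,C_2$ accumulate a factor of $2$, $\gamma$ and $\norm{\mathbf e'}_*$ per level; since $H$ is fixed, they remain constants.

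The main obstacle is making the conditioning rigorous. The subtree estimates see a random, data-dependent number of visits $N_T(s_0,a)$, and the sampling policy within the subtree depends on the whole history. I will handle this by the standard device of "stacking" the rewards: pre-draw i.i.d. reward sequences for each leaf and action. The $n$-th visit to a node then uses a fixed random variable, so the estimate after $n$ visits is a deterministic function of the pre-drawn sequence and the sampling randomness. This is what justifies union bounding over $n$ rather than conditioning on $N_T$. The coverage claim $N_T(s,a)\ge\zeta N_T(s)/2$ must also be propagated down each level with its own Azuma failure term. Each such term is again absorbed using $\epsilon\le C_3\zeta$, which is how the constant $C_3$ is chosen.
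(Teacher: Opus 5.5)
Your proposal follows the paper's proof essentially step for step. You reduce every regret to the root Q-error via Lemma~\ref{lemma:q_val_convergence_implies_value_policy}. You treat $H=1$ with Azuma on the visit counts plus Hoeffding under a union bound over the possible counts, absorbing the count-failure term via $\epsilon\le C_3\zeta$. In the inductive step you use deterministic transitions and non-expansiveness of $\Omega^*$ to pass to the depth-$(H-1)$ subtrees, then union bound over the $K$ children and the counts $n\in[\zeta T/2,T]$, which produces the per-level factors $KT$ and $\zeta$.

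The differences are bookkeeping only. The paper stays in $\norm{\cdot}_\infty$ and loses its $K^2$ per level by splitting the threshold as $\epsilon/K$ over $\sum_a$, whereas you pay $K\norm{\mathbf e'}_*$ per level converting to the dual norm. Your reward-stack device also makes rigorous the union bound over random visit counts, which the paper only justifies by treating $N_t(s_0,a)$ as non-random.
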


    Corollary~\ref{cor:MCTS_expected_regret} integrates these concentration inequalities to achieve the first expected regret bounds under regularised MCTS (proof in \autoref{appendix:proof_of_mcts_Expected_regret}).\footnote{Prior work could not generate such bounds since they specified restrictive errors $\epsilon$.}

    % \begin{corollary}[Recommender MCTS optimal regret bound] \label{cor:MCTS_uniform_regret}
    %     Under the hypotheses of Theorem~\ref{thrm:MCTS_SRR_high_prob_bound}, the uniform sampling policy $d_t(a|s) = 1/K$ maximizes $\zeta^* = 1/K$. This choice minimizes both $C_2$ and the search horizon $T$. In particular, for error probability $\delta \in (0, \min(C_1 (TK)^H,1))$ and search horizon $T \geq \frac{2C_2 K^2}{C_3^2} \left( 2H \log \left( \frac{2C_2 K^2 H}{C_3^2}  \right) + \ln \left( \frac{C_1 K^H}{\delta} \right) \right)$, uniform sampling guarantees that with probability at least $1-\delta$:
    %     \begin{equation} \label{equ:MCTS_optimal_regret_bound}
    %         \mathcal{SRR}^{f, D}_T \leq \sqrt{\frac{C_2}{T} \ln \left( \frac{C_1 (TK)^H}{\delta} \right)}.
    %     \end{equation}
    % \end{corollary}

    \begin{corollary}[Recommender MCTS expected regret bound] \label{cor:MCTS_expected_regret}
        % Let the samples from reward distributions be bound by $|r_{t, a_t}| \leq R_{\max} < \infty$. 
        Let the Q-value error be bounded: $\norm{\hat Q_T(s_0, \cdot) - Q^\star(s_0, \cdot)}_\infty \leq 2R_{\max} < \infty$.
        Under the hypotheses of Theorem~\ref{thrm:MCTS_SRR_high_prob_bound}, the uniform sampling policy $d_t(a|s) = 1/K$ maximizes $\zeta^\star= 1/K$. For $\delta \in (0,1]$ this choice minimizes the search horizon $T \geq \frac{C_2 K^{2+2H}}{\zeta^H C_3^2} \ln \left( \frac{C_1 (TK)^H}{\delta} \right)$, for which the following hold in expectation for MCTS:
        \begin{equation} \label{equ:MCTS_expected_regret_bound}
            \expect{\mathcal{SRR}^{f, D}_T} \in \Tilde{O} \left( \sqrt{\frac{K^{3H}}{ T}}\right);\quad \quad \expect{\mathcal{SRR}^{\pi, B_\Omega}_T} \in \Tilde{O} \left(\frac{K^{3H}}{T}\right),
        \end{equation}
        where the terms bound norm-based and Bregman divergence-based regrets, respectively.
    \end{corollary}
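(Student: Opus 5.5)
The plan is to turn the tail bound of Theorem~\ref{thrm:MCTS_SRR_high_prob_bound} into an expectation bound with the standard split on a high-probability event. The regret is bounded almost surely, so the complementary event only adds a small term.

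\textbf{Step 1: invert the tail bound.} Set the right-hand side of \eqref{equ:MCTS_high_prob_regret_bound} equal to $\delta$ and solve:
\begin{equation*}
\epsilon_\delta = \sqrt{\frac{C_2 K^{2H}}{\zeta^H T}\ln\left(\frac{C_1 (TK)^H}{\delta}\right)}.
\end{equation*}
The theorem only applies when $\epsilon_\delta \le C_3\zeta$. Squaring this condition gives exactly the stated requirement $T \ge \frac{C_2 K^{2H}}{\zeta^{H} C_3^2 \zeta^{2}}\ln(\cdot)$. Since $\zeta \le 1/K$, the factor $\zeta^{-2}$ is at least $K^2$, which gives the displayed form with $K^{2+2H}$. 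The right-hand side decreases in $\zeta$, so taking the largest admissible value $\zeta^\star = 1/K$, attained by uniform sampling, minimises the required horizon. This is the optimisation claim, and it mirrors the remark after Theorem~\ref{theorem:general_upper_bound}.

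\textbf{Step 2: a.s.\ bound on the regret.} Using the assumption $\norm{\hat Q_T(s_0,\cdot)-Q^\star(s_0,\cdot)}_\infty \le 2R_{\max}$, bound each regret by a constant $M$:
\begin{itemize}
\item Q-values: equivalence of norms in $\mathbb{R}^K$ bounds the dual-norm error.
\item Values: $\Omega^*$ is $1$-Lipschitz from $\norm{\cdot}_\infty$ to $|\cdot|$, because its gradient lies in the simplex.
\item Policies: $\nabla\Omega^*$ is $\frac1\beta$-Lipschitz from the dual to the primal norm. For Bregman divergences I would use the $\beta$-strong convexity and smoothness link, $B_\Omega(\hat\pi,\pi^\star) = B_{\Omega^*}(Q^\star,\hat Q) \le \frac{1}{2\beta}\norm{\hat Q - Q^\star}_*^2$.
\end{itemize}
The same inequalities show that the Bregman regret is at most a constant times the squared Q-error. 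I would therefore apply the theorem to the Q-value regret at level $\sqrt{\epsilon}$-type rescaling, or simply to Q-values and then square, to obtain the faster rate.

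\textbf{Step 3: split the expectation.} Write
\begin{equation*}
\expect{\mathcal{SRR}} \le \epsilon_\delta + M\delta.
\end{equation*}
Choose $\delta = 1/T$, or $O(1/T^2)$ for the Bregman case, and substitute $\zeta = 1/K$. Then $\zeta^{-H}K^{2H} = K^{3H}$, giving
\begin{equation*}
\expect{\mathcal{SRR}^{f,D}_T} \in \tilde O\left(\sqrt{K^{3H}/T}\right).
\end{equation*}
The logarithmic factor $\ln(C_1 (TK)^H T)$ is absorbed into $\tilde O$. For the Bregman case, $\frac{1}{2\beta}\epsilon_\delta^2 + M\delta \in \tilde O(K^{3H}/T)$.

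\textbf{Main obstacle.} The delicate point is that the horizon condition involves $T$ on both sides, inside the logarithm. I would handle it by the standard lemma that $T \ge 2a\ln(2ab)$ implies $T \ge a\ln(bT)$. For small $T$, where the condition fails, the bound holds trivially because the regret is at most $M$, and $M$ can be absorbed into the constant.
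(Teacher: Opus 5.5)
Your proof is correct, but it takes a different route from the paper's.

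\textbf{The paper's route.} The paper never fixes a confidence level. It writes
\[
\mathbb{E}[\mathcal{SRR}^{f,D}_T]=\int_0^{2C_3R_{\max}}\Pr\bigl(\mathcal{SRR}^{f,D}_T\ge\epsilon\bigr)\,d\epsilon
\]
and cuts the integral at two points: $u=\sqrt{C_2K^{3H}\ln(C_1(TK)^H)/T}$, which is your $\epsilon_\delta$ at $\delta=1$, and the validity threshold $C_3/K$. This gives three pieces:
\begin{itemize}
\item $[0,u]$ contributes $u$;
\item $[u,C_3/K]$ is handled by a Gaussian tail estimate, giving $\tfrac12\sqrt{C_2K^{3H}/(T\ln(C_1(TK)^H))}$;
\item $[C_3/K,2C_3R_{\max}]$ uses monotonicity of the tail, giving $2C_3R_{\max}C_1(TK)^H\exp\bigl(-TC_3^2/(C_2K^{2+3H})\bigr)$.
\end{itemize}
The Bregman case repeats this with $\sqrt{\epsilon}$ in place of $\epsilon$. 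The middle piece then becomes an exponential integral equal to $C_2K^{3H}/T$.

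\textbf{Comparison.} Your single split $\mathbb{E}[\mathcal{SRR}]\le\epsilon_\delta+M\delta$ with $\delta=1/T$ replaces all three pieces and is shorter and more transparent. It costs two minor things. First, it adds an extra $\ln T$ inside the logarithm, which $\tilde O$ absorbs. Second, it needs the horizon condition at $\delta=1/T$ rather than $\delta=1$. The second point is harmless, because the paper's third term is itself only small once $T$ is well past the $\delta=1$ threshold, so both arguments are asymptotic in $T$. In exchange, the paper's route gives an explicit three-term bound with the slightly smaller logarithm $\ln(C_1(TK)^H)$.

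\textbf{Small points.}
\begin{itemize}
\item In Step 1, the condition $\epsilon_\delta\le C_3\zeta$ is equivalent to $T\ge C_2K^{2H}\ln(\cdot)/(\zeta^{H+2}C_3^2)$. This is at least the displayed threshold, with equality only at $\zeta=1/K$. So the displayed condition is not sufficient for general $\zeta$, and ``which gives the displayed form'' is true only for the uniform policy. That is the only case you use, so nothing breaks.
\item For the Bregman case, $\delta=1/T$ already suffices, since $M_B\delta=O(1/T)$; you do not need $O(1/T^2)$.
\item The small-$T$ fallback is unnecessary, because the corollary only concerns $T$ above the threshold. It would also hide a $K$-dependent factor in the constant.
\end{itemize}
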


    While algorithms like E3W also achieve exponential convergence in probability, they introduce  logarithmic penalties in the exponent, as noted in Corollary~\ref{cor:uniform_beats_e3w_Q_upper_bounds} \citep{xiao2019maximum_MENTS, convex_regular_tree_search}. 
    Uniform sampling thus achieves tighter MCTS rates in both concentration (Theorem~\ref{thrm:MCTS_SRR_high_prob_bound}) and expected regret 
    (Corollary~\ref{cor:MCTS_expected_regret}), again following the results of \citet{bubeck2009pure_simpleregret} for simple regret minimisation. Remark~\ref{remark:mcts_bounds_kinda_suck} comments on the different rates achieved in prior work and our bound.
    These results make it clear that uniform sampling is optimal for recommender MCTS, with sample complexity scaling exponentially with tree depth, and polynomially with action space size.

\section{Experiments}
    We evaluate recommender bandits and MCTS algorithms in three domains to showcase their convergence efficiency, and utility in diverse decision-making contexts. We first study E3W and uniform sampling in synthetic trees to empirically verify our theoretical claims (\autoref{sec:sythetic_trees}). We then demonstrate the potential of recommenders for finding high-value candidate solutions for arcade games in OCTAX (\autoref{sec:octax_experiments}) and for biological sequence-design tasks (\autoref{sec:sequax_experiments}). 
    % In particular, we compare the performance of uniform sampling and E3W to verify our theoretical claims.
    Note that we are the first to employ regularised MCTS in either OCTAX or scientific discovery tasks, demonstrating for the first time the efficacy of regularised planning with recommenders in these domains.
    In all domains, the tree search algorithms described in \autoref{sec:prior_work_rec_reg} are employed, including 
    the max-entropy (MENTS), relative-entropy/KL (RENTS), Tsallis-entropy (TENTS), and soft-mellowmax (SMM) backup operators, alongside AlphaZero's variant of UCT.
    % Our tree search algorithms are implemented in MCTX, a JAX-based MCTS library \citep{deepmind2020jax}.
    % We evaluate the performance of uniform sampling versus E3W, to experimentally confirm the results of Corollaries \ref{cor:uniform_beats_e3w_Q_upper_bounds} and \ref{cor:MCTS_uniform_regret}.
    % In the latter two domains, performance is compared against a learned policy. Search is applied at inference time on top of the learned policy's value estimator (\cf, $R$ in \autoref{equ:mcts_value_q_value_updates}), and we sweep the number of simulations, the inverse-temperature $\omega$, and the tree-policy action-selection rule (E3W vs. uniform) to characterise when soft tree search improves on the base policy and when different regularisers are preferable.

\subsection{Synthetic tree experiments} \label{sec:sythetic_trees}
    \citet{xiao2019maximum_MENTS} proposed a synthetic tree environment as a testbed for regularised MCTS convergence. We follow their experimental setup, analysing the Q-value error since by Lemma~\ref{lemma:q_val_convergence_implies_value_policy}, Q-value error upper bounds all forms of simple recommender regret. Each tree is parameterised by a branching factor $k$ and depth $d$. At all nodes of the tree, the reward is drawn from a normal distribution upon each visit i: $r_i(s, a) \sim \mathcal{N}(\mu_{s,a}, 1)$. For each tree, we randomly select $\mu_{s,a} \sim \mathcal{N}(0, 1)$ for each action at each node, which is held constant for the whole simulation budget. 
    Regret is averaged over five such $\mu$ seeds. 
    % This ensures a high degree of stochasticity in the Q-value, value, and policy estimators, making convergence significantly harder.
    Q-value error is measured in accordance with each regulariser's dual norm: MENTS, RENTS, and SMM use the $L_\infty$ norm, while TENTS and UCT use the $L_2$ norm.
    
    \autoref{fig:synthetic_tree_main_fig} illustrates the differences in convergence speed between the MCTS methods. \textbf{Uniform sampling outperforms E3W across the presented trees}. This trend continues to hold for Bregman divergence-based regrets (\autoref{fig:appendix_policy_bregman_theoretical}) across all trees, but not for value regret (\autoref{fig:appendix_value_error_mse}), nor norm-based policy regrets (\autoref{fig:appendix_policy_error_theoretical}), where uniform sampling does not always achieve the lowest error curve for each MCTS algorithm, see \autoref{appendix:synthetic_tree_plots}.
    % However, this trend ceases to hold for value and policy errors, see \autoref{appendix:synthetic_tree_plots}, where E3W achieves smaller errors in smaller action spaces. 
    This discrepancy is interesting, and reveals a gap in our theoretical analysis: the algorithms' dependence on $K$ is not perfectly modelled, and the bounds on value and policy errors are somewhat loose.
    The higher error of TENTS an UCT in the figure is due to the difference in norms ($L_2$ vs. $L_\infty$) between the regularised methods (\cf, \autoref{fig:appendix_Q_vals_MSE}).
    %In \autoref{fig:appendix_Q_vals_MSE} we plot the Q-value mean squared error, which shows similar performance between TENTS and the other regularised methods.
    Finally, we note that our results differ from \citet{bubeck2009pure_simpleregret}, where UCB outperforms uniform sampling, despite the latter being theoretically optimal. In our experiments, uniform sampling often outperforms UCT and greedier sampling (E3W) in many tree and algorithmic settings.
    
    \begin{figure}[ht]
        \centering
        \includegraphics[width=\linewidth]{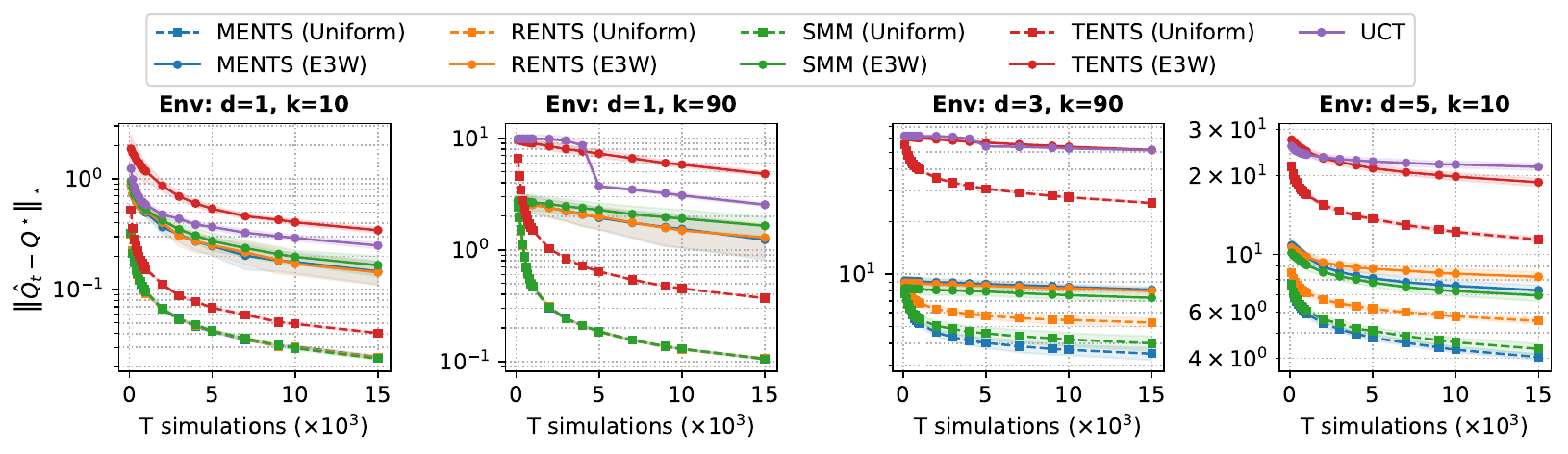}
        \caption{Q-value estimation error in recommender MCTS over 15k simulations and trees of varying depth $d$, and action space size $k$, with 100 random tree search seeds, for each of the five $\mu$ seeds.}
        \label{fig:synthetic_tree_main_fig}
    \end{figure}
    
\subsection{Arcade games in OCTAX} \label{sec:octax_experiments}
    % Done. Just need to clean them up.
    % At each step of a game, recommender MCTS was run, generating a policy recommendation. We took the argmax of the policy, to ensure good performance of the agent.  
    
    % Experiments in OCTAX demonstrate that recommender MCTS is able to generate effective policy recommendations in complex decision-making settings, like arcade games. 
    
    % For more plots, including execution times, episode lengths, and full task lists, see \autoref{appendix:octax_plots}.

    OCTAX, \citep{radji2026octax}, is a massively parallelized arcade-game environment that emulates a suite of classic arcade-style games (like Atari). 
    For each of 19 OCTAX games, we train a separate Q-network with a parallelised Q-network (PQN)-style algorithm \citep{gallici2025simplifying_pqn}. 
    The frozen Q-network is used to estimate values inside regularised MCTS (\cf, \autoref{equ:mcts_value_q_value_updates}, $R \leftarrow \Omega^*(Q_\theta(s', \cdot))$).
    We compare the learned policy and the MCTS-improved policies, greedily selecting the highest Q-value action for each.
    The inverse temperature parameter $\omega = 5$ for all regularised tree search methods, and $\alpha = 1$ for SMM. 
    The E3W exploration constant is fixed at $\epsilon=0.1$. We sweep over simulation budgets to evaluate performance as simulations increase. 
    Each configuration is evaluated over 100 random seeds.
    
    In \autoref{fig:octax_MCTS_long_sims_improvement}, we report mean episodic return for a representative subset of the games.
    This subset was chosen by filtering out the games that saturated quickly (to study performance with $T$), games that didn't improve with search (\eg, search is worse in Tank in \autoref{fig:appendix_octax_episode_returns}), and choosing a wide range of episode lengths (100 for Brix, 4000 for Filter, \cf, \autoref{fig:appendix_octax_episode_lengths}).
    Regularised MCTS clearly improves the performance of the PQN agent (baseline) in these games, especially as the number of simulations increases. Interestingly, uniform sampling and E3W perform similarly at high simulation counts, but E3W often outperforms uniform sampling at lower simulation counts.
    We speculate that this is because E3W preferentially samples greedily (with $\hat \pi_t$), so higher-reward actions are likely better-explored earlier than with uniform sampling. 
    This would explain why uniform sampling achieves similar performance with more simulations.
    These results support the results of prior work, namely \citet{xiao2019maximum_MENTS} and \citet{convex_regular_tree_search}, which showed that regularised MCTS algorithms could achieve superior performance to UCT in Atari games.%\footnote{These papers also do not report performance with $T$.}
    See \autoref{appendix:additional_octax_details} for more experiment details and \autoref{appendix:octax_plots} for plots of execution times, episode lengths, and mean returns for all 19 games.

    \begin{figure}[ht]
        \centering
        \includegraphics[width=\linewidth]{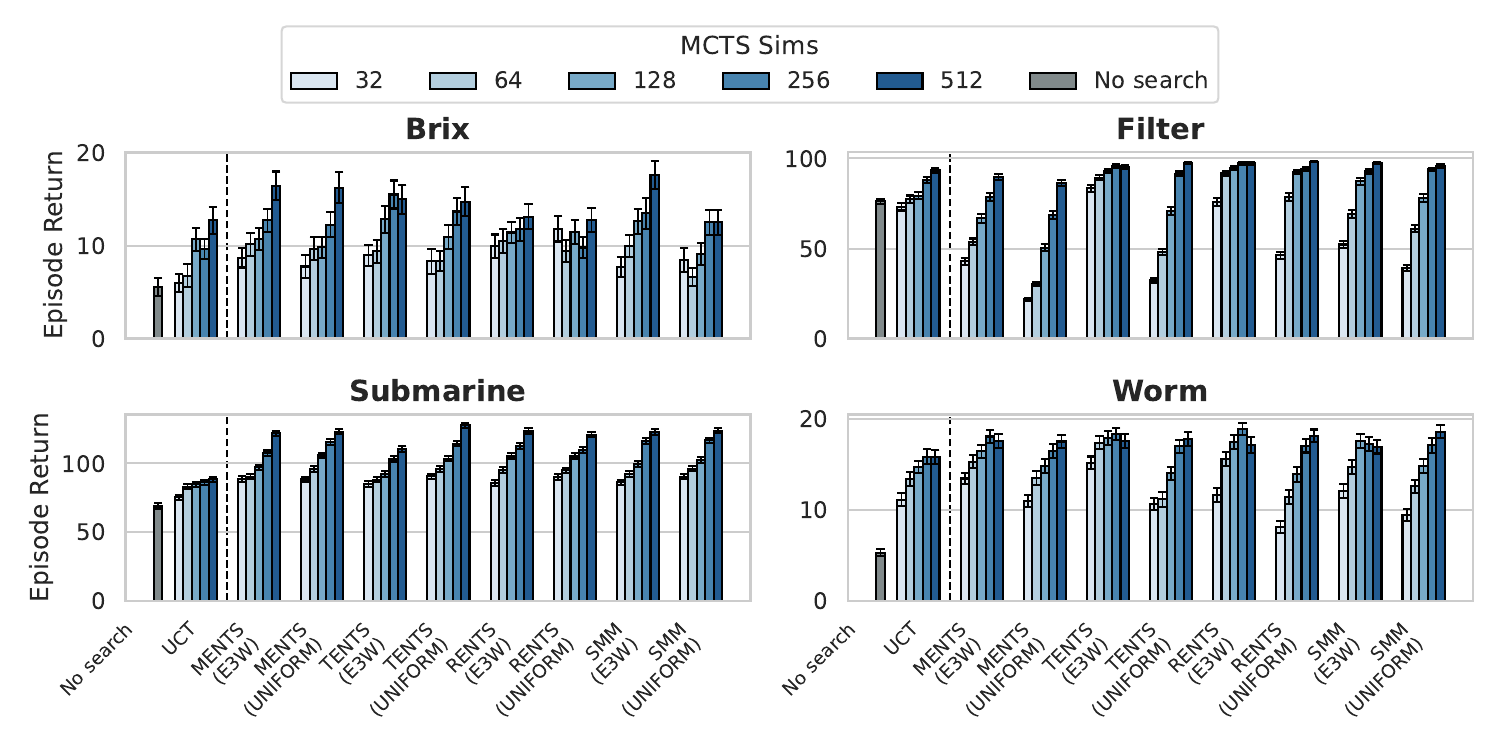}
        \caption{Episodic returns in select OCTAX tasks showing improved performance with increased MCTS simulations. The error bars represent the standard error, with 100 random seeds per setting.}
        \label{fig:octax_MCTS_long_sims_improvement}
    \end{figure}

\subsection{Scientific discovery} \label{sec:sequax_experiments}
    Following the experimental setup of \citet{TGM_2025_jiralerspong}, we now consider a number of biological sequence design tasks popular in the generative flow network (GFlowNet) literature \citep{trajectory_balance_GFN_2022, jain2023gflownets_discovery}. Using the protein- and DNA-sequence datasets of \citet{trabucco2022_sequence_design}, a transformer model is trained as an autoregressive proxy reward model. The proxy reward model is then fixed, and used to train a trajectory general mellowmax \citep[TGM]{TGM_2025_jiralerspong} model, which we use as the baseline, and to estimate values inside regularised MCTS. TGM is chosen over other baselines like GFlowNets, because it was shown to achieve state-of-the-art performance in the three tasks we evaluate on: anti-microbial peptide (AMP), green fluorescent protein (GFP), and 5’ untranslated region sequence (UTR), see \autoref{appendix:details_sci_disc} for details. 

    \textbf{Search evaluation.}
    We compare the TGM baseline model against MCTS-improved policies, sampling actions stochastically according to the policy estimator. This is because in biological sequence design, diversity is prioritised over return maximisation. 
    Simulation budgets are swept from 32 to 1024 (the 1024-simulation tier is evaluated only for AMP and UTR).
    We sweep the inverse temperature parameter $\omega$ over from $0.025$ to $40$, for each configuration in each task, including the baseline model (whose logits are scaled by $\omega$), and UCT (whose policy output temperature is set to $1/\omega$).
    We again set $\alpha=1$ for SMM.
    For each omega, we generate 100 sequences, over five random seeds. 
    
    \textbf{Metrics.} 
    Using the metrics in \citet{TGM_2025_jiralerspong} and \citet{trajectory_balance_GFN_2022}, 
    we evaluate the diversity of high-reward sequences generated by each configuration, reporting a diversity-aware top-$k$ reward: sequences are ranked by reward and greedily added to a selected set only if they are at least a task-specific Levenshtein distance $\delta$ from every sequence already selected. 
    This measures each configuration's coverage of distinct modes the reward landscape rather than its best sample.
    % Because a policy that repeatedly regenerates one high-reward sequence is far less useful than one that discovers many distinct high-reward sequences, we report a diversity-aware top-$k$ reward: sequences are ranked by reward and greedily added to a selected set only if they are at least a task-specific Levenshtein distance $\delta$ from every sequence already selected ($\delta=10$ for AMP, $\delta=60$ for GFP, $\delta=13$ for UTR); we report the mean reward of the selected set at $k \in \{1, 10, 20, 50, 100\}$. This measures each method's ability to cover distinct modes of the reward landscape rather than only its best-case sample.

    \textbf{Results.} \autoref{fig:MCTS_sci_disc_box_plots} illustrates the performance distribution of each configuration in the scientific discovery experiments.
    Notably, tree search-improved policies, particularly Tsallis entropy-based tree search (TENTS), were able to match or exceed the top-100 mean of the baseline in all tasks, especially with uniform sampling. 
    In fact, uniform sampling outperforms E3W for almost all search algorithms, in almost all tasks.
    This claim is further supported by \autoref{tab:top100_diverse_reward}, which lists the top-100 mean reward for all methods. While prior work \citep{morozov2024improving} improved GFlowNet policies with MCTS, they only reported similarity metrics between reward functions and MCTS-improved models on synthetic GFlowNet tasks. Thus, our results indicate for the first time that recommender-based regularised MCTS can be useful for real-world scientific discovery applications.
    
    \begin{figure}[th]
        \centering
        \includegraphics[width=\linewidth]{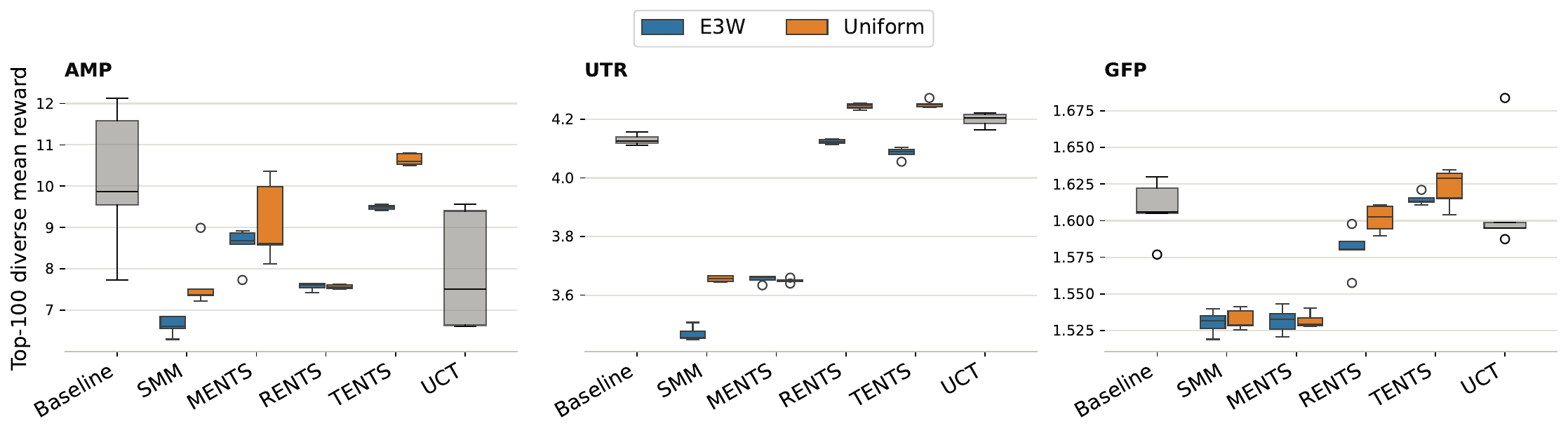}
        \caption{Top-100 diverse reward plots for the three scientific discovery tasks, with the TGM model as baseline, and as the value predictor in the MCTS tree.
        %Inverse temperatures were swept over 17 values, from $0.025$ to $40$, generating 100 sequences each, 1700 in total. After filtering out all sequences that were $\delta$-similar in the Levenshtein distance, the remaining top 100 were averaged to achieve the score. This process was repeated across 5 random seeds, to produce the box plots.
        Tree search is competitive with, or beats, the mean performance of the baseline. Uniform sampling outperforms E3W for almost all algorithms.
        }
        \label{fig:MCTS_sci_disc_box_plots}
    \end{figure}
\vspace{-10pt}
\section{Conclusion}
    We present recommender bandits, and new class of bandit-based estimators for regularised RL based on the multi-armed sampling framework. Theoretically, we show that our notions of recommender regret are well-posed, in that they produce non-trivial lower bounds, and that these bounds can be achieved with uniform policies, unlike the greedier policies required by multi-armed sampling. Experimentally, we demonstrate that recommender-motivated uniform policies for bandits and MCTS match or outperform prior work in synthetic trees, arcade games, and most notably, scientific discovery. Future work will tighten the upper bounds, using regulariser-specific algorithms.

\subsection*{AI use statement}

% (This section is \textbf{required} and does not count toward the page limit.)

In this work, we used generative AI tools for a number of tasks including
\begin{itemize}[nosep]
    \item proof checking
    \item proof ideation, especially proposing certain tools to use, or techniques
    \item to help with learning certain mathematical tools
    \item to transcribe hand-written notes to Latex
    \item to review the theorem and lemma statements for correctness
    \item to check code for correctness
    \item to write plotting and latex table generation code
    \item to help write some experiment code, particularly boilerplate evaluation scripts and bash scripts
    \item reviewing the writing of the paper, especially to focus narratives and reduce fluff
\end{itemize}

We have not used generative AI tools for writing the paper, and took great care to ensure that all the written work presented in this manuscript was human-generated, but perhaps AI-assisted. We have also not blindly followed the written code, nor blindly used any theoretical tools that were presented in the early phases of this work. We again, took great care to ensure that our work was sensible, and that it was honestly reported and conducted. The other AI disclosures are not relevant to this work.

We have reviewed all AI-assisted work. In particular, the core codebase has been written entirely by humans, and has been thoroughly checked, tested, and debugged. The plotting code was already written, but AI was used to modify it for different plots and objectives. These were always thoroughly vetted for correctness. All latex transcriptions were treated as a starting point, and modified to fit the math we wrote down. 

We take responsibility for the final content of this work,
including text, claims or artifacts produced with the aid of generative AI.

\bibliography{ref.bib}
\bibliographystyle{iclr2027_conference}

\appendix
\section{Proof of the Lower bounds}
\label{app:lb_proofs}
Let $\mathcal{SG}(\sigma^2)$ be the class of $\sigma^2$-sub-Gaussian bandits, let $\mu:=(\mu_1, \mu_2, \dots, \mu_K)$ denote a vector of means for a Gaussian $K$-Armed Bandit with variance $\sigma^2$ for all arms (i.e. the reward $r_{t,a} \sim \mathcal{N}(\mu_i, \sigma^2)$ for each arm $a \in [K]$) and $N_a(t)$ denote the number of times arm $a$ has been played at time $t \in [1,\dots, T]$. We also introduce a perturbation vector $\epsilon \in \mathbb{R}^K \neq \mathbf{0}$, and a \textit{perturbed} Gaussian instance whose means are perturbed by $\epsilon$, with means $\mu + \epsilon$. We refer to these two instances as $\mu$ and $\mu+\epsilon$ directly. Note that the mean vectors of each instance should be interpreted as the Q-values for these instances. As we are looking for a minimax lower bound on the class of $\sigma$-sub-Gaussian instances $\mathcal{SG}(\sigma^2)$, investigating Gaussian instances yields a valid lower bound. The instances $\mu$ and $\mu+\epsilon$ are in $\mathcal{SG}(\sigma^2)$. We now introduce a lemma on the number of times a pair of arm can be played. 

\begin{lemma}
Let $K \ge 2$ be the number of arms and $T \ge 1$ be the time horizon. For any sequence of arm selections generated by an algorithm $\mathcal{F}$, let $N_a(T)$ denote the total number of pulls of arm $a \in [K]$ up to time $T$. Then, there exists a distinct pair of arms $(i, j)$ with $i \neq j$ such that
\[
N_i(T) + N_j(T) \le \frac{2T}{K}.
\]
\end{lemma}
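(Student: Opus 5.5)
The plan is a pigeonhole (averaging) argument on the pull counts. The only structural fact needed is that exactly one arm is pulled per round, so the counts satisfy $\sum_{a\in[K]} N_a(T) = T$ deterministically, for every realisation of the algorithm's internal randomness and of the rewards. The lemma is therefore a statement about an arbitrary nonnegative vector $(N_1(T),\dots,N_K(T))$ summing to $T$; the algorithm $\mathcal{F}$ plays no further role.

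First, sort the arms by pull count. Let $i$ be an arm minimising $N_a(T)$, and let $j \neq i$ be an arm minimising $N_a(T)$ over $[K]\setminus\{i\}$. Such a $j$ exists because $K\ge 2$. So $N_i(T)$ and $N_j(T)$ are the two smallest order statistics.

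Second, bound the two smallest counts. Since $i$ attains the minimum, $N_i(T) \le T/K$, because the minimum is at most the mean. For $j$, average over the remaining $K-1$ arms:
\[
N_j(T) \le \frac{1}{K-1}\sum_{a\neq i} N_a(T) = \frac{T - N_i(T)}{K-1}.
\]
Adding the two bounds gives
\[
N_i(T)+N_j(T) \le N_i(T)\Big(1-\tfrac{1}{K-1}\Big) + \tfrac{T}{K-1}.
\]
The coefficient $1 - \tfrac{1}{K-1}$ is nonnegative for $K\ge2$, so we may substitute $N_i(T)\le T/K$. This yields
\[
N_i(T)+N_j(T) \le \frac{T}{K}\cdot\frac{K-2}{K-1} + \frac{T}{K-1} = \frac{T(K-2)+TK}{K(K-1)} = \frac{2T}{K}.
\]
Alternatively, one can pair the arms: the sum of the two smallest counts is at most the average pair sum over all $\binom{K}{2}$ pairs, and that average equals $\tfrac{(K-1)T}{\binom{K}{2}} = \tfrac{2T}{K}$. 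Either route works, and I would present the pair-averaging version as the cleaner one.

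There is no real obstacle here. The one point to state carefully is that the inequality holds pathwise, and that the pair $(i,j)$ may depend on the realised history. In the later Bretagnolle–Huber argument this dependence matters: we will want a fixed pair chosen in terms of expected counts under the base instance $\mu$. The same averaging applied to $\E_\mu[N_a(T)]$, which sum to $T$, gives a deterministic pair with $\E_\mu[N_i(T)+N_j(T)] \le 2T/K$. I would note this expectation version alongside the statement, since it is the form the perturbation in the lower-bound proofs of Theorems~\ref{thm:lb_norm_q_v} and~\ref{thm:lower_bound_smooth} needs.
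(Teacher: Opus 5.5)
Your proof is correct and follows essentially the paper's argument. The paper also focuses on the two least-pulled arms and the constraint $\sum_a N_a(T)=T$, but argues by contradiction: if $N_{(1)}+N_{(2)}>2T/K$ then $N_{(2)}>T/K$, so all counts from the second onward exceed $T/K$ and the total exceeds $T$. You instead bound the two smallest counts directly, or via the average over all $\binom{K}{2}$ pairs. Your closing remark is also apt: the pathwise pair may depend on the realised history, whereas the Bretagnolle--Huber step in Lemma~\ref{lemma:shared_norm_lb} needs a deterministic pair, obtained by applying the same averaging to $\E_\mu[N_a(T)]$, a point the paper leaves implicit.
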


\begin{proof}
    Let $N_1(T) \leq N_2(T) \leq \dots \leq N_K(T)$ denote the number of pulls of each arm in a $K$-armed bandit. Then, $\sum_{a = 1}^K N_a(T) = T$.

    Now, suppose $N_1(T) + N_2(T) > \frac{2T}{K}$, then since $N_1(T) \leq N_2(T)$ it follows that 

    \[
    2 N_2(T) \geq N_1(T) + N_2(T) > \frac{2T}{K} \implies N_2(T) > \frac{T}{K}.
    \]

    For every arm $i \geq 3$, we have 
    \[
    N_i(T) \geq N_2(T) > \frac{T}{K}.
    \]

    Then,
    \begin{equation*}
    \begin{aligned}
        \sum_{a=1}^K N_a(T) &= N_1(T) + N_2(T) + \sum_{a=3}^K N_a(T) \\
        &> \frac{2T}{K} + (K-2)\frac{T}{K} \\
        &= T. \\ 
    \end{aligned}
    \end{equation*}
    This is a contradiction, as $\sum_{a=1}^K N_a(T) = T > T$.
\end{proof}

% Therefore, for the sake of the proof, we consider that $\epsilon$ has a single non-zero component, $i$. 
Note that although the indices rely on the sampling policies $d_t$ generated by $\mathcal{F}$, such a pair always exists regardless of $\mathcal{F}$ and policies $d_t$. 

% Notation details
For clarity, we denote $\E_{\mu}$ and $\PROB_{\mu}$ as the expected value and probability under the probability measure induced by the $T$ round interaction between instance $\phi$, algorithm $\mathcal{F}$ and its associated policies $d_t$. Similarly, $f^*_\phi$, where $f \in \{V_{\Omega}, Q_{\Omega}, \pi_{\Omega}\}$, refers to the target object when the instance is $\phi$. Lastly, we denote $\Delta_{\norm{\cdot}, f} = \norm{\fstar_\mu - \fstar_{\mu + \epsilon}}$ as a shorthand.

We provide two lemmas that will be used throughout the proofs. The first concerns recommender regret when the divergence $D$ is a norm $\norm{\cdot}$. The second is specifically when $D$ is the Bregman divergence $B_\Omega$. Throughout the proofs, $\mathbf{e}_a$ denotes a $K$-dimensional $a$-th indexed canonical basis vector and $\mathbbm{1}$ denotes a $K$-dimensional all-ones vector.
\begin{lemma}
\label{lemma:shared_norm_lb}
    Let $f$ be the object of interest, $T$ be the horizon, and denote $\Delta_{\norm{\cdot}, f} = \norm{\fstar_\mu - \fstar_{\mu + \epsilon}}$ for some arbitrary norm $\norm{\cdot}$. Given perturbation vectors $\epsilon \in \left\{ \sqrt{\frac{K}{2T}} \sigma \left(\mathbf{e}_i - \mathbf{e}_j\right), \sqrt{\frac{2}{T}}\sigma\mathbbm{1}\right\}$, where $(i,j)$ are the indices of the least played pair of arms under some arbitrary algorithm $\mathcal{F}$, the following holds for any subgaussian instance $\mathcal{SG}(\sigma^2)$: %then in the class of subgaussian instances, $\mathcal{SG}(\sigma^2)$, we have, for perturbation vectors $\epsilon \in \left\{ \sqrt{\frac{2K}{T}} \sigma \mathbf{e}_i,  \sqrt{\frac{2}{T}}\sigma\mathbbm{1}\right\}$
    \[
    \min_{\mathcal{F}}\max_{\phi \in \mathcal{SG}(\sigma^2)} \E_\phi\left[\norm{\frec - \fstar_\phi}\right] \geq \frac{\Delta_{\norm{\cdot}, f}}{8e}. 
    \]
\end{lemma}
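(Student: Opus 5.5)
We follow the two-point (Le Cam) scheme sketched after Theorem~\ref{thm:lb_norm_q_v}. Fix an algorithm $\mathcal{F}$ and let $\mu$ be a Gaussian instance with variance $\sigma^2$. The perturbation $\epsilon$ is one of the two vectors in the statement. For $\epsilon=\sqrt{K/(2T)}\,\sigma(\mathbf{e}_i-\mathbf{e}_j)$, the pair $(i,j)$ is the one given by the preceding lemma, with $N_i(T)+N_j(T)\le 2T/K$ under $\mu$.

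Both $\mu$ and $\mu+\epsilon$ lie in $\mathcal{SG}(\sigma^2)$, so the maximum over instances is at least the average:
\[
\max_{\phi}\E_\phi\left[\norm{\frec-\fstar_\phi}\right]\ge \tfrac12\left(\E_\mu\left[\norm{\frec-\fstar_\mu}\right]+\E_{\mu+\epsilon}\left[\norm{\frec-\fstar_{\mu+\epsilon}}\right]\right).
\]
Let $A=\{\norm{\frec-\fstar_\mu}\ge \Delta_{\norm{\cdot},f}/2\}$. On $A^c$ the triangle inequality gives $\norm{\frec-\fstar_{\mu+\epsilon}}\ge \Delta_{\norm{\cdot},f}-\norm{\frec-\fstar_\mu}>\Delta_{\norm{\cdot},f}/2$. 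Markov's inequality on each term then yields
\[
\max_\phi\E_\phi\left[\norm{\frec-\fstar_\phi}\right]\ge \frac{\Delta_{\norm{\cdot},f}}{4}\left(\PROB_\mu(A)+\PROB_{\mu+\epsilon}(A^c)\right)\ge \frac{\Delta_{\norm{\cdot},f}}{8}\exp\left(-\mathrm{KL}(\PROB_\mu,\PROB_{\mu+\epsilon})\right),
\]
where the last step is the Bretagnolle--Huber inequality.

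It remains to show $\mathrm{KL}(\PROB_\mu,\PROB_{\mu+\epsilon})\le 1$, which gives the factor $1/e$. The sampling policies $d_t$ depend only on the history and are common to both instances, so the divergence decomposition lemma for bandits (Lattimore--Szepesv\'ari) applies:
\[
\mathrm{KL}(\PROB_\mu,\PROB_{\mu+\epsilon})=\sum_a\E_\mu[N_a(T)]\frac{\epsilon_a^2}{2\sigma^2}.
\]
\textbf{Case $\epsilon=\sqrt{2/T}\,\sigma\mathbbm{1}$.} Each $\epsilon_a^2/(2\sigma^2)=1/T$ and $\sum_a N_a(T)=T$, so the KL equals exactly $1$.

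\textbf{Case $\epsilon=\sqrt{K/(2T)}\,\sigma(\mathbf{e}_i-\mathbf{e}_j)$.} The KL equals $\frac{K}{4T}\,\E_\mu[N_i(T)+N_j(T)]\le \frac{K}{4T}\cdot\frac{2T}{K}=\tfrac12\le 1$.

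In both cases the bound becomes $\Delta_{\norm{\cdot},f}/(8e)$. Since $\mathcal{F}$ was arbitrary, the bound holds after taking the minimum over algorithms.

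The main obstacle is the choice of $(i,j)$. The preceding lemma bounds $N_i(T)+N_j(T)$ pathwise, but this pair depends on the random trajectory, while the KL needs a bound on the expectation of a fixed pair. We therefore reapply the lemma's averaging argument to the expectations: since $\sum_a\E_\mu[N_a(T)]=T$, the two arms with smallest expected counts satisfy $\E_\mu[N_i(T)]+\E_\mu[N_j(T)]\le 2T/K$. Choosing $(i,j)$ this way depends only on $\mathcal{F}$ and $\mu$, so $\mu+\epsilon$ is a well-defined fixed instance.
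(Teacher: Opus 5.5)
Your proof is correct and follows the paper's argument step for step: the two-point average, the event $A$ at threshold $\Delta_{\norm{\cdot},f}/2$ with the triangle inequality, Bretagnolle--Huber, and the bandit KL decomposition giving $\mathrm{KL}\le 1$ in both perturbation cases. Your choice of $(i,j)$ as the two arms with the smallest \emph{expected} counts under $\mu$ tightens a step the paper leaves loose: the paper cites the pathwise least-played-pair lemma, but the KL step actually needs a bound on $\E_\mu[N_i(T)]+\E_\mu[N_j(T)]$ for a fixed pair.
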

\begin{proof}
     Consider the event $A:=\left\{\norm{\fstar_\mu - \frec}  \geq g_T\right\}$. Then, choosing any algorithm $\mathcal{F}$, we have

\begin{equation}
\begin{aligned}
  \max_{\phi \in \mathcal{SG}(\sigma^2)} \E_\phi\left[\norm{\frec - \fstar_\phi}\right] &\geq  \frac{\E_\mu\left[\norm{\frec - \fstar_\mu}\right] + \E_{\mu + \epsilon}\left[ \norm{\frec - \fstar_{\mu + \epsilon}}\right]}{2}. \\
  \end{aligned}
\end{equation}
Focusing on the numerator, we obtain
\begin{equation}
\begin{aligned}
  \E_\mu\left[\norm{\frec - \fstar_\mu}\right] + \E_{\mu + \epsilon}\left[ \norm{\frec - \fstar_{\mu + \epsilon} }\right]&\geq  \E_\mu\left[\norm{\frec - \fstar_\mu }\bigg|A\right]\mathbb{P}_\mu(A)  \\ &+ \E_{\mu + \epsilon}\left[\norm{\frec - \fstar_{\mu + \epsilon} }\bigg|A^c\right]\mathbb{P}_{{\mu + \epsilon}}(A^c) \\
&\geq  g_T\mathbb{P}_\mu(A) + (\Delta_{\norm{\cdot}, f} - g_T)\mathbb{P}_{{\mu + \epsilon}}(A^c), \\
\end{aligned}
\end{equation}
where the last inequality comes from the fact that $A,A^c$ occur to replace the expected values, combined with the triangle inequality. By choosing $g_T = \frac{\Delta_{\norm{\cdot}, f}}{2}$, we obtain
% , such that if Event $A$ occurs, we would believe we are in $\mu$ and in ${\mu + \epsilon}$ otherwise (interpreting the event as a hypothesis test).
% Then,

\begin{equation}
\begin{aligned}
  \E_\mu\left[\norm{\frec - \fstar_\mu}\right] + \E_{{\mu + \epsilon}}\left[\norm{\frec - \fstar_{{\mu + \epsilon}} }\right] &\geq \frac{\Delta_{\norm{\cdot}, f}}{2} \left(\mathbb{P}_\mu(A) +\mathbb{P}_{{\mu + \epsilon}}(A^c)\right) \\
    &\geq \frac{\Delta_{\norm{\cdot}, f}}{4} \left(\exp\left(-D_{KL}\left(\PROB_\mu || \PROB_{\mu + \epsilon}\right)\right) \right) \\
    &= \frac{\Delta_{\norm{\cdot}, f}}{4} \left(\exp\left(-\sum_{a=1}^K \expectwrt{N_a(T)}{\mu} \frac{\epsilon_a^2}{2\sigma^2}\right)\right). \\
    % &= \frac{\Delta_{\norm{\cdot}, f}}{4} \left(\exp\left(-\expectwrt{N_i(T)}{\mu} \frac{\epsilon_i^2}{2\sigma^2}\right)\right) \\
    % &\geq \frac{\Delta_{\norm{\cdot}, f}}{4} \left(\exp\left(-\frac{T}{K} \frac{\epsilon_i^2}{2\sigma^2}\right)\right). \\\
\end{aligned}
\end{equation}

The second line is obtained from the Bretagnolle-Huber inequality. The third line applies the KL-divergence decomposition lemma for bandits (Lemma 15.1, \citet{suttonReinforcementLearningSecond2020}), then take the KL-divergence for Gaussian distributions. Now, if we choose
$\epsilon = \sqrt{\frac{K}{2T}} \sigma (\mathbf{e}_i - \mathbf{e}_j)$, we have
\begin{equation}
\begin{aligned}
  \E_\mu\left[\norm{\frec - \fstar_\mu}\right] + \E_{{\mu + \epsilon}}\left[\norm{\frec - \fstar_{{\mu + \epsilon}} }\right] &\geq \frac{\Delta_{\norm{\cdot}, f}}{4} \left(\exp\left(-\sum_{a=1}^K \expectwrt{N_a(T)}{\mu} \frac{\epsilon_a^2}{2\sigma^2}\right)\right) \\
    &= \frac{\Delta_{\norm{\cdot}, f}}{4} \left(\exp\left(-\expectwrt{N_i(T)}{\mu} \frac{\epsilon_i^2}{2\sigma^2} - \expectwrt{N_j(T)}{\mu} \frac{\epsilon_j^2}{2\sigma^2}\right)\right) \\
     &\geq  \frac{\Delta_{\norm{\cdot}, f}}{4} \left(\exp\left(-\frac{2T}{K} \left(\frac{\epsilon_i^2}{2\sigma^2} + \frac{\epsilon_j^2}{2\sigma^2}\right)\right)\right) \\
    &= \frac{\Delta_{\norm{\cdot}, f}}{4} \left(\exp\left(-\frac{T}{K} \left(\frac{K}{2T} + \frac{K}{2T}\right)\right)\right) \\
    % &\geq \frac{\Delta_{\norm{\cdot}, f}}{4} \left(\exp\left(-\frac{T}{K} \frac{\epsilon_i^2}{2\sigma^2}\right)\right). \\
    &\geq \frac{\Delta_{\norm{\cdot}, f}}{4e}.
\end{aligned}
\end{equation}

% {\color{blue}
% $\epsilon$ such that $|\epsilon_i| \le \sqrt{\frac{2K}{T}} \sigma$ and $\min_j |\epsilon_j| \geq \sqrt{\frac{2}{KT}}$, we have
% \begin{align*}
%   \E_\mu\left[\norm{\frec - \fstar_\mu}\right] + \E_{{\mu + \epsilon}}\left[\norm{\frec - \fstar_{{\mu + \epsilon}} }\right] &\geq \frac{\Delta_{\norm{\cdot}, f}}{4} \left(\exp\left(-\sum_{a=1}^K \expectwrt{N_a(T)}{\mu} \frac{\epsilon_a^2}{2\sigma^2}\right)\right) \\
%     &= \frac{\Delta_{\norm{\cdot}, f}}{4} \left(\exp\left(-\expectwrt{N_i(T)}{\mu} \frac{\epsilon_i^2}{2\sigma^2}\right)\right) \\
%     &\geq \frac{\Delta_{\norm{\cdot}, f}}{4} \left(\exp\left(-\frac{T}{K} \frac{\epsilon_i^2}{2\sigma^2}\right)\right). \\
%     &\geq \frac{\Delta_{\norm{\cdot}, f}}{4e}.
% \end{align*}
% }

Similarly, using $\epsilon = \sqrt{\frac{2}{T}}\sigma\mathbbm{1}$ we obtain 
\begin{equation}
\begin{aligned}
  \E_\mu\left[\norm{\frec - \fstar_\mu}\right] + \E_{{\mu + \epsilon}}\left[\norm{\frec - \fstar_{{\mu + \epsilon}} }\right] &\geq \frac{\Delta_{\norm{\cdot}, f}}{4} \left(\exp\left(-\sum_{a=1}^K \expectwrt{N_a(T)}{\mu} \frac{\epsilon_a^2}{2\sigma^2}\right)\right) \\
  &= \frac{\Delta_{\norm{\cdot}, f}}{4} \left(\exp\left(-\frac{1}{T}\sum_{a=1}^K \expectwrt{N_a(T)}{\mu} \right)\right) \\
    &= \frac{\Delta_{\norm{\cdot}, f}}{4e}.
\end{aligned}
\end{equation}

Therefore, when $\epsilon \in \left\{ \sqrt{\frac{K}{2T}} \sigma \left(\mathbf{e}_i - \mathbf{e}_j\right),  \sqrt{\frac{2}{T}}\sigma\mathbbm{1}\right\}$, since all of the above hold for any arbitrary algorithm $\mathcal{F}$, we have that
    \[
    \min_{\mathcal{F}}\max_{\phi \in \mathcal{SG}(\sigma^2)} \E_\phi\left[\norm{\frec - \fstar_\phi}\right] \geq \frac{\Delta_{\norm{\cdot}, f}}{8e}.
    \]
\end{proof}

\begin{lemma}
\label{lemma:breg_lemma}
    Let $f$ be the object of interest, $T$ be the horizon, and denote $\Delta_{B, f} = \norm{\fstar_\mu - \fstar_{\mu + \epsilon}}^2$ for some arbitrary norm $\norm{\cdot}$. Given inverse temperature $\omega = 1$ and $\epsilon = \sqrt{\frac{K}{2T}} \sigma \left(\mathbf{e}_i - \mathbf{e}_j\right)$, where $(i,j)$ are the indices of the least played pair of arms under some arbitrary algorithm $\mathcal{F}$, the following holds for any subgaussian instance $\mathcal{SG}(\sigma^2)$: %then in the class of subgaussian instances, $\mathcal{SG}(\sigma^2)$, we have the following for $\epsilon = \sqrt{\frac{2K}{T}} \sigma \mathbf{e}_i$
    \[
    \min_{\mathcal{F}}\max_{\phi \in \mathcal{SG}(\sigma^2)} \E_\phi\left[B_{\Omega}(\frec, \fstar_\phi)\right] \geq \frac{\beta\Delta_{B, f}}{32e},
    \]
    where $\beta$ is the strong convexity parameter of regularizer $\Omega$.
\end{lemma}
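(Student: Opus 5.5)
The plan is to reuse the two-point argument of Lemma~\ref{lemma:shared_norm_lb}, but with a Bregman-adapted separation event and a quadratic (strong-convexity) lower bound on the loss under that event.

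The first step is to reduce the minimax risk to the average risk over the Gaussian pair $\mu$ and $\mu+\epsilon$, both of which lie in $\mathcal{SG}(\sigma^2)$:
\[
\max_{\phi}\E_\phi[B_\Omega(\frec,\fstar_\phi)]\geq \tfrac12\left(\E_\mu[B_\Omega(\frec,\fstar_\mu)]+\E_{\mu+\epsilon}[B_\Omega(\frec,\fstar_{\mu+\epsilon})]\right).
\]
Next, $\beta$-strong convexity of $\Omega$ with respect to $\norm{\cdot}$ gives the pointwise bound $B_\Omega(x,y)\geq \frac{\beta}{2}\norm{x-y}^2$. This converts each Bregman term into a squared-norm term.

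For the separation event, set $A:=\{\norm{\frec-\fstar_\mu}\geq \sqrt{\Delta_{B,f}}/2\}$, where $\sqrt{\Delta_{B,f}}=\norm{\fstar_\mu-\fstar_{\mu+\epsilon}}$. On $A$ the loss under $\mu$ is at least $\frac{\beta}{2}\cdot\frac{\Delta_{B,f}}{4}=\frac{\beta\Delta_{B,f}}{8}$. On $A^c$ the triangle inequality gives $\norm{\frec-\fstar_{\mu+\epsilon}}\geq \sqrt{\Delta_{B,f}}-\sqrt{\Delta_{B,f}}/2=\sqrt{\Delta_{B,f}}/2$, so the loss under $\mu+\epsilon$ is also at least $\frac{\beta\Delta_{B,f}}{8}$. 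Because $B_\Omega\geq 0$, discarding the complementary parts of each expectation yields
\[
\text{numerator}\geq \frac{\beta\Delta_{B,f}}{8}\left(\PROB_\mu(A)+\PROB_{\mu+\epsilon}(A^c)\right).
\]

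Then apply Bretagnolle--Huber to get $\PROB_\mu(A)+\PROB_{\mu+\epsilon}(A^c)\geq \frac12\exp(-D_{KL}(\PROB_\mu\|\PROB_{\mu+\epsilon}))$. Expand the KL with the bandit divergence decomposition, $\sum_a\E_\mu[N_a(T)]\epsilon_a^2/(2\sigma^2)$. With $\epsilon=\sqrt{K/(2T)}\,\sigma(\mathbf e_i-\mathbf e_j)$, only arms $i$ and $j$ contribute, each with $\epsilon_a^2/(2\sigma^2)=K/(4T)$. By the least-played-pair lemma, $N_i(T)+N_j(T)\leq 2T/K$, so the KL is at most $\frac{2T}{K}\cdot\frac{K}{4T}=\frac12\leq 1$. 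Combining, the numerator is at least $\frac{\beta\Delta_{B,f}}{16e}$; halving for the average gives $\frac{\beta\Delta_{B,f}}{32e}$. Since this holds for every algorithm $\mathcal{F}$, the minimax statement follows.

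The main obstacle is a technicality in the KL step. The pair $(i,j)$ is defined from realized counts, but the decomposition needs expected counts under $\PROB_\mu$. I would handle this as in Lemma~\ref{lemma:shared_norm_lb}: choose $(i,j)$ from the expected counts $\E_\mu[N_a(T)]$, to which the pigeonhole lemma applies verbatim since they also sum to $T$. This choice depends only on $\mathcal{F}$ and $\mu$, so the perturbed instance is legitimate.

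A secondary point is where $\omega=1$ enters. It fixes the scaling between $\Omega$ and the norm, so the strong-convexity constant is exactly $\beta$ rather than $\beta/\omega$. It also means the Bregman divergence is taken in the same geometry used to define $\Delta_{B,f}$.
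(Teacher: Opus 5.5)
Your proof is correct and follows the paper's argument. It uses the same steps: the two-point reduction to $\mu$ and $\mu+\epsilon$, strong convexity to pass from $B_\Omega$ to $\frac{\beta}{2}\norm{\cdot}^2$, a separation event at squared distance $\Delta_{B,f}/4$, Bretagnolle--Huber, and the KL decomposition with the least-played pair, reaching the same constant $\frac{\beta\Delta_{B,f}}{32e}$. Your direct triangle inequality replaces the paper's $(a+b)^2\le 2a^2+2b^2$ detour at no cost. Choosing $(i,j)$ from the expected counts $\E_\mu[N_a(T)]$ properly justifies a step the paper states loosely in terms of realized counts.
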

\begin{proof}
By $\beta$-strong convexity of $\Omega$ on $\Pi$, $B_\Omega(p, q) \ge \frac{\beta}{2} \norm{p - q}^2$. Thus,
\begin{equation}
\begin{aligned}
  \max_{\phi \in \mathcal{SG}(\sigma^2)} \E_\mu\left[B_\Omega(\frec, \fstar_\phi)\right] &\geq \frac{\E_\mu\left[B_\Omega(\frec, \fstar_\mu)\right] + \E_{\mu + \epsilon}\left[B_\Omega(\frec, f^*_{\mu+\epsilon})\right]}{2} \\
  &\ge \frac{\beta}{4} \left( \E_\mu\left[\norm{\frec -  \fstar_\mu}^2 \right] + \E_{\mu + \epsilon}\left[\norm{\frec - \fstar_{\mu + \epsilon}}^2\right] \right).
\end{aligned}
\end{equation}

Let $\Delta_{B, f} = \norm{\fstar_\mu - \fstar_{\mu + \epsilon}}^2 = \Delta_{\norm{\cdot}, f}^2$. Starting from $\Delta_{B, f}$ and using the triangle inequality, followed by $(a+b)^2 \leq 2a^2 + 2b^2$, we find $\frac{\Delta_{B, \pi}}{2} \le \norm{\fstar_\mu - \frec}^2 + \norm{\fstar_{\mu + \epsilon} - \frec}^2$. Consider the event $A := \left\{\norm{\fstar_\mu - \frec}^2 \le g_T \right\}$, then we get
\begin{equation}
\begin{aligned}
  \E_\mu\left[\|\fstar_\mu - \frec\|^2\right] + \E_{\mu + \epsilon}\left[\|\fstar_{\mu + \epsilon} - \frec\|^2\right] 
  &\ge   g_T \PROB_\mu(A^c) + \left(\frac{\Delta_{B, f}}{2} - g_T\right)\PROB_{\mu + \epsilon}(A)  \\
  &\ge  \frac{\Delta_{B, f}}{4} \left( \PROB_\mu(A^c) +\PROB_{\mu + \epsilon}(A)\right)  \\
  &\ge \frac{\Delta_{B,f}}{8} \exp\left(-D_{KL}\left(\PROB_\mu || \PROB_{\mu + \epsilon}\right)\right) \\
  &\ge \frac{\Delta_{B, f}}{8e}.
\end{aligned}
\end{equation}

The second line results from choosing $g_T = \frac{\Delta_{B, \pi}}{4}$, the third applies Bretagnolle-Huber, and the last line uses the KL decomposition lemma then the fact that $\epsilon = \sqrt{\frac{K}{2T}} \sigma \left(\mathbf{e}_i - \mathbf{e}_j\right)$. As a result,
\begin{equation}
    \min_{\mathcal{F}}\max_{\phi \in \mathcal{SG}(\sigma^2)} \E_\phi\left[B_{\Omega}(\frec, \fstar)\right] \geq \frac{\beta\Delta_{B, f}}{32e}. 
\end{equation}
\end{proof}

\subsection{Values Recommender Regret (First half of Theorem~\ref{thm:lb_norm_q_v})}
\label{app:sec:values_proof}
We now establish the proof for the first half of Theorem~\ref{thm:lb_norm_q_v}: Value recommender regret. For this regret, the divergence $D(\Vrec, V^*_\Omega) = |\Vrec - V^*_\Omega|$ is the absolute value of the difference between the recommendation and the target value. 

\begin{proof}
For values, $f^* =V^*_{\Omega}=\Omega^*(Q^*_\Omega) $,  $f_T = \Vrec$, and $\norm{\cdot} = |\cdot|$. Therefore, $\Delta_{|\cdot|, V} := |V^*_{\Omega,\mu} - V^*_{\Omega,\mu+\epsilon}|$. Note that the values are dependent on the instance Q-values, as $V^*_{\Omega, \mu} = \Omega^*(Q^\star_{\Omega, \mu}) = \Omega^*(\mu)$ and similarly for the perturbed instance. By choosing $\epsilon = \sqrt{\frac{2}{T}}\sigma \mathbbm{1}$, we have that
% 
% \begin{equation}
%   \begin{aligned}
%     V^*_{\Omega, \mu+\epsilon} &= \max_{\pi} \langle \pi, \mu+\epsilon \rangle - \Omega(\pi) \\
%     &\geq  \langle \pistar_\mu, \mu+\epsilon \rangle - \Omega(\pistar_\mu) \\
%     &= \pistar_\mu(i) (\mu_i + \epsilon_i) + \sum_{j\neq i} \pistar_\mu(j) \mu_j - \Omega(\pistar_\mu). \\
%   \end{aligned}
% \end{equation}
% 
% Notice that 
\begin{equation}
  \begin{aligned}
    V^*_{\Omega, \mu+\epsilon} - V^*_{\Omega, \mu} &= \max_{\pi} \left[\langle \pi, \mu+\epsilon \rangle - \Omega(\pi)\right] - \max_{\pi} \left[\langle \pi, \mu \rangle - \Omega(\pi)\right] \\
    &\geq  \langle \pistar_\mu, \mu+\epsilon \rangle - \Omega(\pistar_\mu) - \langle \pistar_\mu, \mu \rangle + \Omega(\pistar_\mu) \\
    &=  \langle \pistar_\mu, \mu+\epsilon \rangle - \langle \pistar_\mu, \mu \rangle  \\
    &=  \sum_{a=1}^K \pistar_\mu(a) \epsilon_a  \\
    &=  \sqrt{\frac{2}{T}}\sigma.
    % &\geq  \langle \pistar_\mu, \mu+\epsilon \rangle - \Omega(\pistar_\mu) - \langle \pistar_\mu, \mu \rangle - \Omega(\pistar_\mu) \\
    % &=  \langle \pistar_\mu, \mu+\epsilon \rangle - \langle \pistar_\mu, \mu \rangle  \\
    % &=  \pistar_\mu(i) (\mu_i + \epsilon_i) + \sum_{j\neq i} \pistar_\mu(j) \mu_j - \pistar_\mu(i)\mu_i - \sum_{j\neq i} \pistar_\mu(j)\mu_j  \\
    % &=  \pistar_\mu(i) (\mu_i + \epsilon_i)  - \pistar_\mu(i)\mu_i \\
    % &=  \pistar_\mu(i) \epsilon_i. \\
  \end{aligned}
\end{equation}

Leveraging Lemma~\ref{lemma:shared_norm_lb}, we obtain
\begin{equation}
    \begin{aligned}
\frac{\Delta_{|\cdot|,f}}{8e} &= \frac{|V^*_{\Omega, \mu+\epsilon} - V^*_{\Omega, \mu}|}{8e}  \\
        &\geq \frac{\sqrt{2}\sigma}{\sqrt{T}8e}.
        % &\geq \frac{\pistar_\mu(i)\epsilon_i}{4e}  \\
        % &= \sqrt{\frac{2K}{T}}\frac{\sigma}{8e}.\\
    \end{aligned}
  \end{equation}
Therefore, we have the following lower bound on the values Recommender Regret:
\begin{equation}
\begin{aligned}
  \min_{\mathcal{F}} \max_{\phi \in \mathcal{SG}(\sigma^2)} \E_\phi\left[\norm{\Vrec - V^*_{\Omega, \phi}}\right] &\geq \frac{\sqrt{2}\sigma}{\sqrt{T}8e}. \\
  \end{aligned}
\end{equation}
\end{proof}
\subsection{Q-values Recommender Regret (Second half of Theorem~\ref{thm:lb_norm_q_v})}
For Q-values recommender regret we have $\fstar = Q^*_\Omega$ and $\frec = \murec$, $f^*_{\mu}=Q^\star_{\Omega,\mu} = \mu$, and $f^*_{\mu + \epsilon}=Q^\star_{\Omega,\mu+\epsilon} = \mu + \epsilon$. Then, $\Delta_{\norm{\cdot}_*, Q} = \norm{Q^*_\mu - Q^*_{\mu+\epsilon}}_*= \norm{\epsilon}_*$. 

\begin{proof}
 Recall that $(i,j)$ are the indices of least played pair of arms under some algorithm $\mathcal{F}$ and that such a pair is guaranteed to exist regardless of the algorithm. By choosing $\epsilon = \sqrt{\frac{K}{2T}} \sigma \left(\mathbf{e}_i - \mathbf{e}_j\right)$, and applying Lemma~\ref{lemma:shared_norm_lb}, we obtain

  \begin{equation}
    \begin{aligned}
       \frac{\Delta_{\norm{\cdot}_*,Q}}{8e} 
        &= \frac{\norm{\epsilon}_*}{8e}  \\
        % &= \frac{\sqrt{2K} \norm{\mathbf{e}_i \odot \sigma}}{\sqrt{T}4e}  \\
        &= \frac{\sqrt{K} \sigma \norm{\mathbf{e}_i - \mathbf{e}_j}_*}{\sqrt{2T}8e}.  \\
    \end{aligned}
  \end{equation}

Thus, we have the following lower bound on the Q-values Recommender Regret:
\begin{equation}
\begin{aligned}
  \min_{\mathcal{F}} \max_{\phi \in \mathcal{SG}(\sigma^2)} \E_\phi\left[\norm{\murec - Q^*_{\Omega, \phi}}_*\right] &\geq \sqrt{\frac{K}{2T}}\frac{\sigma \min_{a,a' \in [K], a\neq a'}\norm{\mathbf{e}_a - \mathbf{e}_{a'}}_*}{8e}. \\
  \end{aligned}
\end{equation}   

This is a consequence of the fact that any algorithm $\mathcal{F}$ has a least played pair of arms arm $(i,j)$, and so the lower bound for the $\max$ over instances must hold for the algorithm minimizing the right hand side.

\end{proof}

\subsection{Norm-based policy recommender regret (First half of Theorem~\ref{thm:lower_bound_smooth})}
\label{app:sec:norm_based_policy_regret_proof}
We now demonstate the lower bound for the norm-based policy recommender regret when Assumption~\ref{ass:smooth} is respected. For policy recommender regret we have $\fstar = \pistar_\Omega$ and $\frec = \pirec$. We now provide a lemma lower bounding $\Delta_{\norm{\cdot}, \pi}$.

\begin{lemma}
\label{lemma:delta_norm_pi_lb}
Let $(i,j)$ denote the indices of the least played pair of arms under some algorithm $\mathcal{F}$. When the object of interest is $\pi$, and the divergence for recommender regret is $\norm{\cdot}$, then the following lower bound for $\Delta_{\norm{\cdot}, \pi}$ with perturbation $\epsilon = \sqrt{\frac{K}{2T}}\sigma \left(\mathbf{e}_i - \mathbf{e}_j\right)$ holds for $L$-smooth, $\beta$-strongly convex regulariser $\Omega$
    \[
    \Delta_{\norm{\cdot}, \pi} \geq \sqrt{\frac{K}{2T}}\frac{\sigma \min_{a, a' \in [K], a\neq a'}\norm{\mathbf{e}_a - \mathbf{e}_{a'}}_*}{L}.
    \]
\end{lemma}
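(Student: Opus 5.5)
The plan is to invoke Proposition~\ref{prop:suf_cond_lb_pol} on the pair of Q-value targets $Q = \mu + \epsilon$ and $Q' = \mu$, both assumed to lie in $\mathcal{Q}$. Under the instance class $\mathcal{SG}(\sigma^2,\mathcal{G})$ the targets satisfy $\pistar_{\mu} = \nabla\Omega^*(\mu)$ and $\pistar_{\mu+\epsilon} = \nabla\Omega^*(\mu+\epsilon)$, with $\omega = 1$. The proposition then gives
\begin{equation*}
\langle \pistar_{\mu+\epsilon} - \pistar_{\mu}, \epsilon\rangle \;\geq\; \frac{1}{L}\norm{\epsilon}_{*,E}^2 .
\end{equation*}

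The second step converts this inner-product bound into a bound on $\Delta_{\norm{\cdot},\pi} = \norm{\pistar_{\mu+\epsilon} - \pistar_\mu}$. Both policies lie in the simplex, so their difference $v := \pistar_{\mu+\epsilon} - \pistar_\mu$ belongs to $E$. By the definition of the restricted dual norm, $\langle v, \epsilon\rangle \le \norm{v}\,\norm{\epsilon}_{*,E}$ for $v\in E$. Combining the two inequalities and dividing by $\norm{\epsilon}_{*,E}$ yields
\begin{equation*}
\Delta_{\norm{\cdot},\pi} \geq \frac{1}{L}\norm{\epsilon}_{*,E}.
\end{equation*}
The division requires $\norm{\epsilon}_{*,E} > 0$. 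This holds because $\epsilon \propto \mathbf{e}_i - \mathbf{e}_j$ is itself a nonzero element of $E$, so it pairs positively with $\epsilon/\norm{\epsilon}$.

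Finally, I substitute $\epsilon = \sqrt{K/(2T)}\,\sigma(\mathbf{e}_i - \mathbf{e}_j)$. Homogeneity of the norm gives
\begin{equation*}
\norm{\epsilon}_{*,E} = \sqrt{\frac{K}{2T}}\,\sigma\,\norm{\mathbf{e}_i - \mathbf{e}_j}_{*,E} \;\geq\; \sqrt{\frac{K}{2T}}\,\sigma\,\min_{a\neq a'}\norm{\mathbf{e}_a - \mathbf{e}_{a'}}_{*,E}.
\end{equation*}
The minimum over pairs is needed because $(i,j)$ depends on the algorithm. The lemma as stated writes the full dual norm $\norm{\cdot}_*$, whereas this argument naturally produces $\norm{\cdot}_{*,E}$; Theorem~\ref{thm:lower_bound_smooth} also uses $\norm{\cdot}_{*,E}$. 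For vectors in $E$ one has $\norm{\cdot}_{*,E} \le \norm{\cdot}_*$, so the unrestricted version can only be recovered in the direction opposite to the one needed. I would therefore prove the statement with $\norm{\cdot}_{*,E}$ and note that this matches the theorem.

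The main obstacle is ensuring the perturbed target $\mu+\epsilon$ stays in $\mathcal{Q}$, so that Proposition~\ref{prop:suf_cond_lb_pol}, which is stated only on $\mathcal{Q}$, applies. I would first handle this by choosing the base instance $\mu$ in the relative interior of $\mathcal{Q}$, shifted along $\mathbbm{1}$ if necessary. Then, for $T$ large enough, the small perturbation keeps $\mu+\epsilon$ inside $\mathcal{Q}$ and $\nabla\Omega^*(\mu+\epsilon)$ inside $\Pi$. I would state this as a mild standing requirement on $\mathcal{Q}$ (for example, that it contains a ball of radius $\sqrt{K/(2T)}\,\sigma\max_{a\neq a'}\norm{\mathbf{e}_a-\mathbf{e}_{a'}}$ around some point).
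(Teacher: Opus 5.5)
Your proposal is correct and follows the paper's proof essentially step for step: apply Proposition~\ref{prop:suf_cond_lb_pol} to the pair $(\mu+\epsilon,\mu)$, use the restricted dual-norm inequality to get $\Delta_{\norm{\cdot},\pi}\ge \frac{1}{L}\norm{\epsilon}_{*,E}$, then substitute $\epsilon$ and take the minimum over pairs. Your side remarks are consistent with the paper's argument. The paper's own derivation also ends with $\norm{\cdot}_{*,E}$ rather than the $\norm{\cdot}_*$ written in the statement. Your version also fixes a sign slip in the paper's first inequality, and it makes explicit two points the paper leaves implicit: the requirement $\mu+\epsilon\in\mathcal{Q}$, and the fact $\pistar_{\mu+\epsilon}-\pistar_\mu\in E$.
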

\begin{proof}
%For this proof, we choose $\epsilon = \sqrt{\frac{2K}{T}}\sigma \mathbf{e}_i$. Notice 
We first observe that
\[
\Delta_{\norm{\cdot}, \pi} = \norm{\pistar_{\mu + \epsilon} - \pistar_{\mu}} = \norm{\nabla \Omega^*(Q^*_{\Omega, \mu}) - \nabla \Omega^*(Q^*_{\Omega, \mu+\epsilon})}= \norm{\nabla \Omega^*(\mu + \epsilon) - \nabla \Omega^*(\mu)}.
\]
From Proposition~\ref{prop:suf_cond_lb_pol}, $\Omega^*$ is $\frac{1}{L}$-strongly convex on the space of policies $\Pi$. Using $\epsilon = \sqrt{\frac{K}{2T}}\sigma \left(\mathbf{e}_i -\mathbf{e}_j\right)$, and recalling that $(i,j)$ are the indices of the least played pair of arms under some algorithm $\mathcal{F}$, we obtain
\begin{equation}
\begin{aligned}
    \inner{\nabla \Omega^*(\mu) - \nabla \Omega^*(\mu + \epsilon), \epsilon} &\geq \frac{1}{L}\norm{\epsilon}_{*,E}^2 \quad & \text{(Proposition~\ref{prop:suf_cond_lb_pol})}\\
    \norm{\nabla \Omega^*(\mu) - \nabla \Omega^*(\mu + \epsilon)}\norm{\epsilon}_{*,E}&\geq \frac{1}{L}\norm{\epsilon}_{*,E}^2  \quad & \text{(Dual norm inequality)}\\
    \Delta_{\norm{\cdot}, \pi} &\geq \frac{1}{L}\norm{\epsilon}_{*,E} \\
    &\geq \sqrt{\frac{K}{2T}}\frac{\sigma \norm{\mathbf{e}_i - \mathbf{e}_j}_{*,E}}{L} \\
    &\geq \sqrt{\frac{K}{2T}}\frac{\sigma \min_{a,a' \in [K]}\norm{\mathbf{e}_a - \mathbf{e}_{a'}}_{*,E}}{L}. \\
\end{aligned}
\end{equation}
\end{proof}

Then, for the first half of Theorem~\ref{thm:lower_bound_smooth} we have this simple proof.

\begin{proof}
Combining Lemmas~\ref{lemma:shared_norm_lb} and \ref{lemma:delta_norm_pi_lb}, we get
\begin{equation}
\begin{aligned}
  \min_{\mathcal{F}} \max_{\phi \in \mathcal{SG}(\sigma^2)} \E_\phi\left[\norm{\pirec- \pistar_\phi }\right] &\geq \sqrt{\frac{K}{2T}}\frac{\sigma \min_{a,a' \in [K]}\norm{\mathbf{e}_a - \mathbf{e}_{a'}}_{*,E}}{8L}. \\
  \end{aligned}
\end{equation}
\end{proof}

\subsection{Bregman Divergence policy recommender regret (Second half of Theorem~\ref{thm:lower_bound_smooth})}
\label{app:sec:breg_policy_regret_proof}
We now demonstrate the lower bound for the Bregman divergence policy recommender regret when Assumption~\ref{ass:smooth} is respected. For policy recommender regret we have $\fstar = \pistar_\Omega$ and $\frec = \pirec$.

\begin{proof}
%For this proof, we choose $\epsilon = \sqrt{\frac{2K}{T}}\sigma \mathbf{e}_i$. 
Starting from Lemma~\ref{lemma:breg_lemma}, we must lower bound $\Delta_{B, \pi}$. However, notice that $\Delta_{B, \pi} = \Delta_{\norm{\cdot}, \pi}^2$. Using Lemma \ref{lemma:delta_norm_pi_lb} with $\epsilon = \sqrt{\frac{K}{2T}}\sigma \left(\mathbf{e}_i- \mathbf{e}_j\right)$, we obtain
\begin{equation}
\begin{aligned}
    \Delta_{B, \pi} =\Delta_{\norm{\cdot}, \pi}^2 &\geq \frac{K}{2T}\frac{\sigma^2 \min_{a,a' \in [K], a\neq a'}\norm{\mathbf{e}_a- \mathbf{e}_{a'}}_{*,E}^2}{L^2} \\
\end{aligned}
\end{equation}
directly, as taking the square is monotonically increasing on the range $[0, \infty]$.

% \begin{equation}
%     \begin{aligned}
%     \Delta &= \|\pistar_\mu - \pistar_{\mu + \epsilon}\|^2 \\ 
%     &\geq \frac{\norm{\epsilon}_*^2}{L^2} \\ 
%     &= \frac{2KR^2 \norm{\mathbf{e}_i}_*^2}{TL^2} \\   
%     &\geq \frac{2KR^2 \min_{j\in [K]}\norm{\mathbf{e}_j}_*^2}{TL^2}   
%     \end{aligned}
% \end{equation}
Combining with Lemma~\ref{lemma:breg_lemma}, we get our result
\begin{equation}
\begin{aligned}
  \min_{\mathcal{F}} \max_{\phi \in \mathcal{SG}(\sigma^2)} \E_\phi\left[B_\Omega(\pirec, \pistar_\phi)\right] &\geq \frac{K}{T}\frac{\beta \sigma^2  \min_{a,a' \in [K], a\neq a'}\norm{\mathbf{e}_a- \mathbf{e}_{a'}}_*^2}{64L^2e}  . \\
  \end{aligned}  
\end{equation}
\end{proof}

% \section{Proof of Proposition~\ref{prop:sufficient_cond_smooth}}
\label{app:sufficient_cond_smooth}

\section{Proof of Proposition~\ref{prop:suf_cond_lb_pol}}
\label{app:suff_cond_lb_pol}

% {\color{blue}

% $\Omega^*(q) = \sup_{\pi \in \Delta} \langle q, \pi \rangle - \Omega(\pi) $

% $\Omega^*(q + c \textbf{1}) = \sup_{\pi \in \Delta} \langle q + c \textbf{1}, \pi \rangle - \Omega(\pi) 
% = c + \sup_{\pi \in \Delta} \langle q, \pi \rangle - \Omega(\pi)
% = c + \Omega^*(q).
% $

% }
Let us first recall the statement of the proposition:

Given a set $\mathcal{Q} \subset \mathbb{R}^\mathcal{A}$, such that $\Pi := \nabla \Omega^*(\mathcal{Q})$ is a subset of $\text{ri}(\Delta_\mathcal{A})$,\footnote{For a convex set $C$, the notation $\text{ri}(C)$ stands for its relative interior.} if, for all $\pi \in \text{Hull}(\Pi)$ and $v \in E$, we have $\langle\nabla_E^2 \Omega(\pi)v,v\rangle \leq L \|v\|^2$, then for any $Q,Q' \in \mathcal{Q}$, we have $\langle\nabla \Omega^*(Q)-\nabla \Omega^*(Q'), Q- Q'\rangle \geq \frac{1}{L} \|Q -Q'\|_{*,E}^2$.

\begin{proof}
    Let us consider $\pi = \nabla \Omega^*(Q)$ and $\pi' = \nabla \Omega^*(Q')$. By~\citep[Proposition 1]{geist2019theory_regularisers} we have that $\pi = \arg \max_{\pi' \in \Delta_A} \langle \pi', Q\rangle - \Omega(\pi')  $ and since $\pi$ is in the relative interior of the simplex we have that the first order optimality condition becomes
    \begin{equation}
        \nabla_E \Omega( \pi) = PQ
    \end{equation}
    where $P=I-\tfrac{1}{|\mathcal A|}\mathbf 1\mathbf 1^\top$ is the orthogonal projection onto $E$. Similarly, we have $ \nabla_E \Omega( \pi') = PQ'$. Now let us consider
    \begin{equation}
        P(Q-Q') = \nabla_E \Omega( \pi) - \nabla_E \Omega( \pi') = \int_0^1 \nabla^2_E \Omega (\pi' + t (\pi - \pi'))(\pi - \pi') dt = H d
    \end{equation}
    where we called $H=  \nabla_E \Omega( \pi) - \nabla_E \Omega( \pi') = \int_0^1 \nabla^2_E \Omega (\pi' + t (\pi - \pi'))dt$ and $d =\pi - \pi'$.
    Now, using Cauchy-Schwartz inequality for the inner product $\langle H\cdot,\cdot\rangle$ we get 
    \begin{equation}
        \langle H d,v\rangle^2 \leq \langle Hd,d\rangle \langle Hv,v\rangle \leq  \langle Hd,d\rangle L \|v\|^2 
    \end{equation}
    where for the last inequality we used for all $\pi \in \text{Hull}(\Pi)$ and $v \in E$, we have $\langle\nabla^2_E \Omega(\pi)v,v\rangle \leq L \|v\|^2 $. Taking the supremum over $v \in E, \|v\|=1$ we get 
    \begin{equation}
        \| H d\|^2_{\star,E} \leq  \langle Hd,d\rangle L 
    \end{equation}
    Now we can look at,
    \begin{align}
        \langle\nabla \Omega^*(Q)-\nabla \Omega^*(Q'), Q- Q'\rangle 
        &= \langle\pi-\pi', Q- Q'\rangle \\
        &= \langle P(\pi-\pi'), Q- Q'\rangle \\
        &= \langle\pi-\pi', P(Q- Q')\rangle \\
        &= \langle d, Hd\rangle \\
        & \geq \frac{1}{L}\| H d\|^2_{\star,E}  \\
        & = \frac{1}{L}\| Q- Q'\|^2_{\star,E} 
    \end{align}
    which concludes the proof. 
\end{proof}

\section{Supporting theoretical results}
    \begin{lemma}[Q-value error bounds value and policy error] \label{lemma:q_val_convergence_implies_value_policy}
        Given a regulariser $\Omega: \Delta_\mathcal{A} \to \mathbb{R}$ that is $\beta$-strongly convex with respect to a norm $\norm{\cdot}$, and whose dual is the regularised RL value function, 
            $$V_\Omega(s) = \Omega^*(Q(s, \cdot)) = \max_\pi \left\{ \expectwrt{Q(s,a)}{a\sim \pi(\cdot|s)} - \frac{1}{\omega}\Omega(\pi(\cdot|s)) \right\}.$$
        The policy and value errors can be bounded as follows:
        \begin{equation} \label{equ:prop_q_values_bound_everything}
            \begin{split}
                &\norm{\hat \pi - \pi^\star} = \norm{\nabla \Omega^*(\hat Q) - \nabla \Omega^*(Q^\star)} \leq \frac{\omega}{\beta} \norm{\hat Q - Q^\star}_\star \\
                &D_\Omega(\hat \pi, \pi^\star) \leq \frac{\omega^2}{2\beta} \norm{\hat Q - Q^\star}^2_\star \\
                &\left| \hat{V} - V^\star \right| = \left| \Omega^*(\hat Q) - \Omega^*(Q^\star) \right| \leq \norm{\hat Q - Q^\star}_\infty, \\
            \end{split}
        \end{equation}
        where $(\hat \pi, \hat Q)$ and $(\pi^\star, Q^\star)$ are primal-dual pairs, with the $Q$s not necessarily being unique.
    \end{lemma}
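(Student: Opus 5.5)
The plan is to use the standard duality between strong convexity of $\Omega$ and smoothness of $\Omega^*$, together with the variational form of $\Omega^*$. The three inequalities are handled separately, in the order value, policy norm, Bregman. Throughout I would absorb the inverse temperature by writing $\Omega_\omega := \frac{1}{\omega}\Omega$. Then $V_\Omega = \Omega_\omega^*$, and $\Omega_\omega$ is $\frac{\beta}{\omega}$-strongly convex with respect to $\norm{\cdot}$.

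\textbf{Value bound.} For any $\pi\in\Delta_\mathcal{A}$ and any $Q,Q'$ we have
\[
\inner{\pi,Q}-\Omega_\omega(\pi)\le \inner{\pi,Q'}-\Omega_\omega(\pi)+\norm{Q-Q'}_\infty .
\]
This uses Hölder with $\norm{\pi}_1=1$. Taking the maximum over $\pi$ gives $\Omega_\omega^*(Q)\le\Omega_\omega^*(Q')+\norm{Q-Q'}_\infty$. Swapping $Q$ and $Q'$ yields $|\hat V-V^\star|\le\norm{\hat Q-Q^\star}_\infty$. No property of $\Omega$ beyond its being defined on the simplex is needed here.

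\textbf{Policy norm bound.} The maximiser $\hat\pi=\nabla\Omega_\omega^*(\hat Q)$ satisfies the first-order optimality condition on the simplex: $\hat Q-\nabla\Omega_\omega(\hat\pi)$, or more generally a subgradient, lies in the normal cone of $\Delta_\mathcal{A}$ at $\hat\pi$. The same holds for $\pi^\star$ with $Q^\star$. Testing each condition against the other point and adding gives
\[
\inner{\hat Q-Q^\star,\hat\pi-\pi^\star}\ge \inner{g_{\hat\pi}-g_{\pi^\star},\hat\pi-\pi^\star}\ge \tfrac{\beta}{\omega}\norm{\hat\pi-\pi^\star}^2 .
\]
Here $g$ denotes a subgradient of $\Omega_\omega$, and the second inequality is strong monotonicity. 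Bounding the left-hand side by $\norm{\hat Q-Q^\star}_\star\norm{\hat\pi-\pi^\star}$ and dividing gives the claimed $\frac{\omega}{\beta}$-Lipschitz bound. Equivalently, this is the classical statement that $\Omega_\omega^*$ is $\frac{\omega}{\beta}$-smooth with respect to $\norm{\cdot}_\star$ (Shalev-Shwartz). The non-uniqueness of $Q$ is harmless, because only the optimality conditions are used.

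\textbf{Bregman bound.} This is the step I expect to need most care, since the Bregman divergence is defined with $\nabla\Omega$. That gradient may not exist on the boundary of the simplex, and it differs from $Q$ by a normal-cone element. My plan is to write $D_\Omega(\hat\pi,\pi^\star)=\Omega(\hat\pi)-\Omega(\pi^\star)-\inner{g^\star,\hat\pi-\pi^\star}$, taking $g^\star$ to be the subgradient selected by optimality, so that $g^\star=\omega Q^\star-n^\star$ with $n^\star$ in the normal cone at $\pi^\star$. I would then use the conjugate identity $D_\Omega(\hat\pi,\pi^\star)=D_{\Omega^*}(Q^\star,\hat Q)$, stated with the scaling handled. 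The normal-cone term should be nonpositive on $\hat\pi-\pi^\star$, so it can be dropped.

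Next I apply the smoothness of the conjugate established in the policy step. This is the descent lemma, obtained by integrating the Lipschitz gradient along the segment from $\hat Q$ to $Q^\star$, and it gives $D_{\Omega^*}\le \frac{\omega^2}{2\beta}\norm{\hat Q-Q^\star}_\star^2$ after tracking the $\omega$ factors. If the sign bookkeeping on the normal-cone term fails, my fallback is a direct argument. I would combine the three-point form with the strong-monotonicity inequality above, bound $\inner{\hat Q-Q^\star,\hat\pi-\pi^\star}\le\frac{\omega}{\beta}\norm{\hat Q-Q^\star}_\star^2$, and use Young's inequality to recover the factor $\frac12$.
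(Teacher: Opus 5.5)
Your proposal is correct and follows essentially the same route as the paper. The policy bound is the $\frac{\omega}{\beta}$-smoothness of the conjugate of $\frac{1}{\omega}\Omega$, the Bregman bound is primal--dual Bregman duality followed by the descent lemma, and the value bound is non-expansiveness of $\Omega^*$ in $\norm{\cdot}_\infty$; the only difference is that you derive these cited facts directly (first-order optimality with strong monotonicity, and H\"older with $\norm{\pi}_1 = 1$) instead of quoting them. Your normal-cone sign check goes through: $\inner{n^\star, \hat\pi - \pi^\star} \le 0$ gives $D_\Omega(\hat\pi, \pi^\star) \le \omega D_{\Omega^*}(Q^\star, \hat Q)$, which is the direction needed, so the fallback is unnecessary and your version is slightly more careful than the paper's equality, which holds only when $\pi^\star$ is in the relative interior of the simplex.
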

    \begin{proof}[Proof of Lemma~\ref{lemma:q_val_convergence_implies_value_policy}]
        The policy error bounds are both the result of dual smoothness. In short, dual smoothness guarantees that when the primal function $\Omega$ is closed and convex, the dual function, $\Omega^*$, is smooth. From \citep[Lemma 15(3)]{shalev_shwartz_phd_thesis_online_learning_convexity} and \citep[Theorem 3]{kakade2012regularization_dual_smooth_2}, we have that $\frac{1}{\omega} \Omega$ is $\beta/\omega$-convex, thus admitting a dual smoothness constant $\omega/\beta$: 
        $$\norm{\nabla \Omega^*(\hat Q) - \nabla \Omega^*(Q^\star)} \leq \frac{\omega}{\beta} \norm{\hat Q - Q^\star}_\star.$$
        Similarly, the Bregman divergence can be upper-bounded using the equivalency between the dual and primal Bregman divergence, and the $\frac{\omega}{\beta}$-smoothness property of $\Omega^*$ via \citep[Theorem 3.7(5)]{bauschke1997legendre} and \citep[Lemma 3.4]{bubeck2015convex_Lsmooth}, respectively. Therefore, we have that
        \begin{align*}
           \frac{1}{\omega}D_\Omega(\hat \pi, \pi^\star) &=  D_{\Omega^*}(Q^\star, \hat Q) \\
        &= \Omega^*(Q^\star) - \Omega^*(\hat Q) - \inner{\nabla \Omega^*(\hat Q), Q^\star - \hat Q} \\
            &\leq \frac{\omega}{2\beta} \norm{\hat Q - Q^\star}^2_\star,
        \end{align*}
        where the first line states the primal-dual Bregman divergence equivalency, and the third line uses the smoothness upper bound. The bound is finished by multiplying through by $\omega$. Note that we have to divide by $\omega$ in the first place to state the divergence in terms of the function $\frac{1}{\omega}\Omega$.

        Finally, we know that the regularised value functions $V = \Omega^*(Q)$ are non-expansive mappings via \citep[Lemma 8, Corollary 9]{nachum2017bridging}, whose results hold for any strongly convex regulariser, despite only being proven for negative entropy. This gives:
        $$\left| \hat{V} - V^\star \right| = \left| \Omega^*(\hat Q) - \Omega^*(Q^\star) \right| \leq \norm{\hat Q - Q^\star}_\infty.$$
    \end{proof}

     \begin{lemma}[Policies with non-zero probability mass achieve optimal rates on the Q-value regret] \label{lemma:any_policy_achieves_optimal_bounds}
            Given some sampling policy $d_t$ such that $d_t(a) \geq \zeta>0, \forall a \in \mathcal{A}, \forall t \in [1, T]$, where $\hat{Q}_t = \hat{r}_t$ (\autoref{equ:reg_rl_optimality_bandit_equs}), and where the reward distributions $\phi_a$ are $\sigma$-sub-Gaussian, and $\norm{\mathbf e'}_\star = \max_a \norm{\mathbf{e}_a}_\star$ where $\mathbf{e}_a$ is the canonical basis vector indexed by $a$, the following Q-value error bound holds for any $t \geq \frac{2\ln(2K/\delta)}{\zeta^2}$ with probability $1-\delta$ when $\norm{\cdot}_\star = \norm{\cdot}_\infty$
            \begin{equation} \label{equ:lemma_any_policy_zeta_bound_infinity}
                \norm{\hat{Q}_T - Q^\star}_\infty \leq \sqrt{\frac{4 \sigma^2\ln(4KT / \delta)}{\zeta T}},
            \end{equation}
            and otherwise for any $\norm{\cdot}_\star$ with probability $1-\delta$
            \begin{equation} \label{equ:lemma_any_policy_zeta_bound}
                \norm{\hat{Q}_T - Q^\star}_\star \leq K \norm{\mathbf e'}_\star \sqrt{\frac{4 \sigma^2\ln(4KT / \delta)}{\zeta T}}.
            \end{equation}
        \end{lemma}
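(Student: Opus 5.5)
The plan is to reduce the Q-value error to per-arm concentration, splitting the work into a counting step and a per-arm concentration step, and then combine them with a union bound.

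\textbf{Step 1: every arm is pulled often.} Write $N_T(a) = \sum_{t=1}^T \mathbb{I}(a_t = a)$. Since $d_t(a) \geq \zeta$, the centred increments $\mathbb{I}(a_t=a) - d_t(a)$ form a bounded martingale difference sequence. Azuma--Hoeffding (or Freedman) then gives, with probability at least $1-\delta/(2K)$,
\[
N_T(a) \geq \zeta T - \sqrt{2T\ln(2K/\delta)}.
\]
The hypothesis $T \geq 2\ln(2K/\delta)/\zeta^2$ gives $\sqrt{2T\ln(2K/\delta)} \leq \zeta T$, which only yields $N_T(a) \geq 0$. I will therefore tighten the constants so that this step delivers $N_T(a) \geq \zeta T/2$; this is where the factor $4$ in $4\sigma^2$ comes from. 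A union bound over the $K$ arms gives the event $\mathcal{E}_1 = \{N_T(a)\geq \zeta T/2\ \forall a\}$ with probability at least $1-\delta/2$.

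\textbf{Step 2: the empirical mean of each arm concentrates.} $N_T(a)$ is random and depends on past rewards, so I will use the standard stack-of-rewards (skeleton) argument. Let $\bar r_{a,n}$ be the mean of the first $n$ rewards drawn from arm $a$. These are i.i.d.\ $\sigma$-sub-Gaussian, so for each fixed $n$, with probability at least $1-\delta/(2KT)$,
\[
|\bar r_{a,n} - \mu_a| \leq \sqrt{\frac{2\sigma^2 \ln(4KT/\delta)}{n}}.
\]
A union bound over $n\in[T]$ and $a\in\mathcal{A}$ gives an event $\mathcal{E}_2$ of probability at least $1-\delta/2$. On $\mathcal{E}_1\cap\mathcal{E}_2$, substituting $n = N_T(a) \geq \zeta T/2$ yields
\[
|\hat Q_T(a) - Q^\star(a)| \leq \sqrt{\frac{4\sigma^2\ln(4KT/\delta)}{\zeta T}} \quad \text{for every } a.
\]
Taking the maximum over $a$ proves equation~\ref{equ:lemma_any_policy_zeta_bound_infinity}; recall that $Q^\star = \mu$ in the bandit setting.

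\textbf{Step 3: general dual norms.} Decompose $\hat Q_T - Q^\star = \sum_a (\hat Q_T(a)-Q^\star(a))\mathbf{e}_a$. The triangle inequality gives
\[
\norm{\hat Q_T - Q^\star}_\star \leq \sum_a |\hat Q_T(a)-Q^\star(a)|\,\norm{\mathbf{e}_a}_\star \leq K\norm{\mathbf e'}_\star \norm{\hat Q_T - Q^\star}_\infty,
\]
and combining this with Step 2 gives equation~\ref{equ:lemma_any_policy_zeta_bound}.

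The main obstacle is Step 1: getting $N_T(a)\geq \zeta T/2$ with exactly the stated threshold on $T$. The Hoeffding constants are tight here, so I expect some adjustment of the logarithmic factor or of the threshold; a multiplicative Chernoff bound for martingales, which gives $N_T(a) \geq \zeta T/2$ once $\zeta T \gtrsim 8\ln(K/\delta)$, is the fallback. Step 2 is routine once the stack-of-rewards argument has removed the dependence between $N_T(a)$ and the rewards.
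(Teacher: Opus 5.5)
Your proof follows the same route as the paper. Both use Azuma--Hoeffding on the counts with $\delta/(2K)$ per arm to obtain $N_T(a)\ge \zeta T/2$. Both then apply sub-Gaussian concentration of each arm's empirical mean, with $\delta/(2KT)$ per event and $\delta/2$ in total. Both finish with $\norm{\hat Q_T-Q^\star}_\star\le K\norm{\mathbf e'}_\star\norm{\hat Q_T-Q^\star}_\infty$.

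\textbf{Step 1 is not actually an obstacle.} The only loose end is the one you flag there. You used the $|X_t|\le 1$ form of Azuma--Hoeffding, with tail $\exp(-\epsilon^2/(2T))$. However, conditionally on $\mathcal{H}_t$ the increment $d_t(a)-\mathbb{I}(a_t=a)$ lies in the predictable interval $[d_t(a)-1,\,d_t(a)]$, which has length $1$.
\begin{itemize}
\item Hoeffding's lemma then bounds the conditional MGF by $e^{\lambda^2/8}$, which gives the tail $\exp(-2\epsilon^2/T)$.
\item Setting this tail to $\delta/(2K)$ gives $N_T(a)\ge \zeta T-\sqrt{T\ln(2K/\delta)/2}$.
\item The hypothesis $T\ge 2\ln(2K/\delta)/\zeta^2$ is exactly equivalent to $\sqrt{T\ln(2K/\delta)/2}\le \zeta T/2$.
\end{itemize}
So you need no change of threshold or logarithmic factor, and no multiplicative-Chernoff fallback. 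This range-one constant is the one the paper uses. Its text justifies the step by $|X_t|\le 1$, but it applies $\exp(-2\epsilon_1^2/t)$; that form holds because the range has length one, not because $|X_t|\le 1$.

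\textbf{Step 2 is more careful than the paper.} The paper applies Hoeffding with the random count $N_t(a)$ in the denominator ``by rearranging'', then union-bounds over time indices $i\in[1,t]$. Your stack-of-rewards argument instead union-bounds over the sample index $n\in[T]$, which is the correct way to decouple $N_T(a)$ from the rewards. The per-event level and the resulting $\ln(4KT/\delta)$ factor are the same in both. So once Step 1 is fixed, your argument gives both displayed bounds with the stated constants.
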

        \begin{proof}[Proof of Lemma~\ref{lemma:any_policy_achieves_optimal_bounds}]

            We begin by the bounding the action selection count using a high-probability proof technique via the Azuma-Hoeffding concentration inequality \citep{wainwright2019high_concentration_inequalities}. For arm $a$ up to time $t$, let $N_t(a) = \sum_{i=1}^{t} \mathbb{I}(a_i = a)$ be its empirical tally of samples, and let $\hat{N}_t(a) = \sum_{i=1}^{t} d_i(a) \geq \zeta t$ be the sum of its empirical sampling probabilities. To bound the difference between these two terms, we have two options for modelling action selection: 1) as a Bernoulli distribution at each time step using the Hoeffding inequality, and 2) as a martingale difference sequence using the Azuma-Hoeffding inequality. Both reach the same result, but we opt for the latter. 

            To use Azuma-Hoeffding we must prove that $X_t = d_t(a|\mathcal{H}_t) - \mathbb{I}(a = a_t)$ is a martingale difference sequence. This is immediate from $|X_t| \leq 1$ and
            $$\expect{X_t | \mathcal{H}_t} = \expect{d_t(a|\mathcal{H}_t) - \mathbb{I}(a = a_t) | \mathcal{H}_t} = d_t(a|\mathcal{H}_t) - d_t(a|\mathcal{H}_t) = 0.$$ 
            Applying the one-sided Azuma-Hoeffding inequality (with $X_0 = 0$) implies that for a single arm $a$:
            \begin{equation*}
                \mathbb{P}\left( \hat{N}_t(a) - N_t(a) > \epsilon_1 \right) \leq \exp \left( - \frac{2\epsilon_1^2}{t} \right).
            \end{equation*}
            Setting $\epsilon_1 =  \sqrt{\frac{t \ln(1/\delta_1)}{2}}$, this inequality implies that with probability $1-\delta_1$ (taking the complement event),
            $$N_t(a) \geq \hat{N}_t(a) - \sqrt{\frac{t \ln(1/\delta_1)}{2}} \geq \zeta t - \sqrt{\frac{t \ln(1/\delta_1)}{2}}.$$
            This implies that for any $t \geq T_0 =  \frac{2\ln(1/\delta_1)}{\zeta^2}$, the action count will be bound by $N_t(a) \geq \frac{\zeta t}{2}$.

            Defining the event $E_a = \left\{ \hat{N}_t(a) - N_t(a) \leq \sqrt{\frac{t \ln(1/\delta_1)}{2}} \right\}$ to represent these action count probabilities, and the more general event over all actions $E = \bigcap_{a\in\mathcal{A}}E_a$, we have that $\prob{E} \geq 1 - K\delta_1$.

            We now begin deriving the Q-value error bound. Recall that $\hat{Q}_t(a) = \sum_{i=1}^{t} r_{i, a_i} \mathbb{I}(a_i = a) / N_t(a)$, and that $Q^\star(a) = \mu_a$. This leads to the following relation:
            \begin{align*}
                \left|\hat{Q}_t(a) - Q^\star(a)\right| &= \left|\sum_{i=1}^{t} \frac{r_{i, a_i} \mathbb{I}(a_i = a)}{N_t(a)} - \mu_a \right| \\
                &= \frac{1}{N_t(a)}\left|\sum_{i=1}^{t} (r_{i, a_i} - \mu_a) \mathbb{I}(a_i = a) \right|
            \end{align*}
            which can be used in a Hoeffding inequality, by rearranging the equation. By setting the error $\epsilon_2 = \sqrt{\frac{2\sigma^2 \ln(1/\delta_2)}{N_t(a)}}$, we have the following:
            \begin{align*}
                \prob{ |\hat{Q}_t(a) - Q^\star(a)| > \epsilon_2} &= \prob{ \left|\sum_{i=1}^{t} (r_{i, a_i} - \mu_a) \mathbb{I}(a_i = a) \right| > N_t(a)\epsilon_2} \\
                &\leq 2 \exp \left( - \frac{\epsilon_2^2 N_t(a)}{2 \sigma^2} \right) = 2 \delta_2.
            \end{align*}
            To bound the absolute error across all actions and time steps (which we must do in order to take into account all the entire history $\mathcal{H}_t$), we can define an event on the reward error $G_{a,i} = \left\{ \left|\hat{Q}_i(a) - Q^\star(a)\right| \leq  \sqrt{\frac{2\sigma^2 \ln(1/\delta_2)}{N_i(a)}} \right\}$, and a general event $G = \bigcap_{a\in \mathcal{A}} \bigcap_{i\in[1,t]} G_{a,i}$. It is now clear that $\prob{G} \geq 1 - 2Kt\delta_2$.

            Finally, we can solve for $\delta_1$ and $\delta_2$, so that we can get a $1-\delta$ statement.
            \begin{equation} \label{equ:E_cap_G_event_prob}
                \prob{E \cap G} \geq 1 - \prob{E^c} - \prob{G^c} \geq 1 - K\delta_1 - 2Kt\delta_2 = 1-\delta,
            \end{equation}
            setting $\delta_1 = \frac{\delta}{2K}$ and $\delta_2 = \frac{\delta}{4Kt}$. Thus under the intersection event, we have that for each $a \in \mathcal{A}$, for $t \geq T_0$:
            \begin{equation} \label{equ:q_error_hoeffding_bound_lemma}
                |\hat{Q}_t(a) - Q^\star(a)| \leq \sqrt{\frac{2\sigma^2 \ln(4Kt/\delta)}{N_t(a)}} \leq \sqrt{\frac{4\sigma^2 \ln(4Kt/\delta)}{\zeta t}}.
            \end{equation}
            This statement is enough to prove the special case where $\norm{\cdot}_\star = \norm{\cdot}_\infty$, since we don't need to convert the dual norm to a suitable form for bounding under the concentration argument, above.

            Finally, we can map this back to our dual-norm regret, let $\mathbf{e}_a$ represent the standard basis vector for arm $a$. By the triangle inequality, any error vector can be decomposed via its coordinate-wise $\ell_\infty$ maximum:
            \begin{equation*}
                \norm{\hat{Q}_t - Q^\star}_\star = \norm{\sum_{a \in \mathcal{A}} (\hat{Q}_t(a) - Q^\star(a))\mathbf{e}_a}_\star \leq \norm{\hat{Q}_t - Q^\star}_\infty \sum_{a \in \mathcal{A}} \norm{\mathbf{e}_a}_\star \leq  K \norm{\mathbf e'}_\star \norm{\hat{Q}_t - Q^\star}_\infty
            \end{equation*}
            which proves the lemma after applying the Q-error inequality from \Eqref{equ:q_error_hoeffding_bound_lemma}, and with $\norm{\mathbf e'}_\star = \max_a \norm{\mathbf{e}_a}_\star$.
        \end{proof}

    \section{Proof of \autoref{theorem:general_upper_bound}} \label{appendix:proof_bandit_theorem}

        % \begin{lemma}[Conversion of instance-dependent upper bounds to expectation] \label{lemma:upper_bound_conversion_to_expectation}
        %     Under the conditions of Lemma~\ref{lemma:q_val_convergence_implies_value_policy} and Lemma~\ref{lemma:any_policy_achieves_optimal_bounds}, setting $\delta = \mathcal{O}(1/T^2)$ yields expected simple recommender regret bounds of order:
        %     $$\expect{\mathcal{SRR}^{f, D}_T} \in \Tilde{O}\left( \sqrt{\frac{1}{\zeta T}} \right), \quad \text{and} \quad \expect{\mathcal{SRR}^{\pi, B_\Omega}_T} \in \Tilde{O}\left( \frac{1}{\zeta T} \right), $$
        %     for non-Bregman divergence-based and Bregman divergence-based regrets, respectively. Where $\Tilde O$ leaves out slow-growing logarithmic terms, as was done in \citet{pedramfar2026multiarmedsampling}.
        % \end{lemma}
        \setcounter{theorem_custom_num}{4}
        \begin{theorem_custom_num}[Sampling policies with non-zero probability mass achieve optimal regret bounds]
            Suppose $\Omega$ is $\beta$-strongly convex with dual $V_\Omega(s) = \Omega^*(Q(s, \cdot))$, with $\sigma$-sub-Gaussian reward distributions $\phi_a, \forall a \in \mathcal{A}$. For any $T \geq \frac{2\ln(2K/\delta)}{\zeta^2}$, with $\delta \in O(1/T^2)$, and any algorithm with sampling policy $d_t$ satisfying $d_t(a) \geq \zeta > 0$ $\forall a \in \mathcal{A}, \forall t \in [1, T]$, the following expected regret bounds hold:
            % \begin{equation} \label{equ:theorem_regret_bounds}
            %     \expect{\mathcal{SRR}^{f, D}_T} \in \Tilde{O} \left( \sqrt{\frac{1}{\zeta T}}\right);\quad \quad \expect{\mathcal{SRR}^{\pi, B_\Omega}_T} \in \Tilde{O} \left(\frac{1}{\zeta T}\right),
            % \end{equation}

            \begin{enumerate}[leftmargin=*, nosep]
                \item When $\norm{\cdot}_\star = \norm{\cdot}_\infty$ or when $f = V$, then
                \begin{equation*} 
                    \expect{\mathcal{SRR}^{f, D}_T} \in \tilde{O} \left( \sqrt{\frac{1}{\zeta T}}\right), \qquad \expect{\mathcal{SRR}^{\pi, B_\Omega}_T} \in \tilde{O} \left(\frac{1}{\zeta T}\right).
                \end{equation*}
                
                \item When $\norm{\cdot}_\star$ is a general dual norm with $\norm{\mathbf e'}_\star = \max_{a} \norm{\mathbf{e}_a}_\star$:
                \begin{equation*}
                    \expect{\mathcal{SRR}^{f, D}_T} \in \tilde{O} \left( K \norm{\mathbf e'}_\star \sqrt{\frac{1}{\zeta T}}\right), \qquad \expect{\mathcal{SRR}^{\pi, B_\Omega}_T} \in \tilde{O} \left( K^2 \norm{\mathbf e'}_\star^2 \frac{1}{\zeta T}\right),
                \end{equation*}
            \end{enumerate}
            where the terms on the left bound norm-based policy, value, and Q-value regrets, and the terms on the right bound Bregman divergence-based regrets. Where $\Tilde O$ leaves out slow-growing logarithmic terms, as was done in \citet{pedramfar2026multiarmedsampling}.
        \end{theorem_custom_num}
        \begin{proof}
            From Lemma~\ref{lemma:q_val_convergence_implies_value_policy} and Lemma~\ref{lemma:any_policy_achieves_optimal_bounds}, we have high probability upper bounds on instances of the simple recommender regret, as reflected in \autoref{equ:lemma_instance_regret_bounds} for general dual norms $\norm{\cdot}_\star$. 
            \begin{equation} \label{equ:lemma_instance_regret_bounds}
                \begin{split}
                    \mathcal{SRR}^{Q, \norm{\cdot}_\star}_T &= \norm{\hat{Q}_{\Omega,T} - Q^\star}_\star \leq K \norm{\mathbf e'}_\star \sqrt{\frac{4 \sigma^2\ln(4KT / \delta)}{\zeta T}} \\
                    \mathcal{SRR}^{\pi, \norm{\cdot}}_T &= \norm{\hat \pi_{\Omega,T} - \pi^\star} \leq \frac{\omega}{\beta} K \norm{\mathbf e'}_\star \sqrt{\frac{4\sigma^2 \ln(4KT/\delta)}{\zeta T}} \\
                    \mathcal{SRR}^{\pi, B_\Omega}_T &= B_\Omega(\hat \pi_{\Omega,T}, \pi^\star) \leq \frac{\omega^2}{2\beta} K^2 \norm{\mathbf e'}_\star^2 \frac{4\sigma^2 \ln(4KT/\delta)}{\zeta T} \\
                    \mathcal{SRR}^{V, |\cdot|}_T &= \left| \hat{V}_{\Omega,T} - V^\star \right| \leq \sqrt{\frac{4\sigma^2 \ln(4KT/\delta)}{\zeta T}}.
                \end{split}
            \end{equation}

            Note that when $\norm{\cdot}_\star = \norm{\cdot}_\infty$, each of the above bounds become tighter, as is evident from Lemma~\ref{lemma:any_policy_achieves_optimal_bounds}. \autoref{equ:lemma_instance_regret_bounds_infinity} lists the bounds for the $L_\infty$ dual norm.
            \begin{equation} \label{equ:lemma_instance_regret_bounds_infinity}
                \begin{split}
                    \mathcal{SRR}^{Q, \norm{\cdot}_\infty}_T &= \norm{\hat{Q}_{\Omega,T} - Q^\star}_\infty \leq \sqrt{\frac{4 \sigma^2\ln(4KT / \delta)}{\zeta T}} \\
                    \mathcal{SRR}^{\pi, \norm{\cdot}}_T &= \norm{\hat \pi_{\Omega,T} - \pi^\star} \leq \frac{\omega}{\beta} \sqrt{\frac{4\sigma^2 \ln(4KT/\delta)}{\zeta T}} \\
                    \mathcal{SRR}^{\pi, B_\Omega}_T &= B_\Omega(\hat \pi_{\Omega,T}, \pi^\star) \leq \frac{\omega^2}{2\beta} \frac{4\sigma^2 \ln(4KT/\delta)}{\zeta T} \\
                    \mathcal{SRR}^{V, |\cdot|}_T &= \left| \hat{V}_{\Omega,T} - V^\star \right| \leq \sqrt{\frac{4\sigma^2 \ln(4KT/\delta)}{\zeta T}}.
                \end{split}
            \end{equation}

            The principal difference between the regret bounds of \autoref{equ:lemma_instance_regret_bounds} and \autoref{equ:lemma_instance_regret_bounds_infinity} are the constants in front of the Q-error concentration bound. We will henceforth use the constant $C$, to denote the constant, which can depend on $K$, as is evident, above.

            These bounds are conditioned on the fact that, with probability $1-\delta$, the event $E \cap G$ holds, see \autoref{equ:E_cap_G_event_prob}. We re-write the events here, for clarity:
            $$G = \bigcap_{a\in \mathcal{A}} \bigcap_{i\in[1,T]} \left\{ \left|\hat{Q}_i(a) - Q^\star(a)\right| \leq  \sqrt{\frac{2\sigma^2 \ln(4KT/\delta)}{N_i(a)}} \right\}$$
            $$E = \bigcap_{a\in \mathcal{A}}  \left\{ \hat{N}_T(a) - N_T(a) \leq \sqrt{\frac{T \ln(2K/\delta)}{2}} \right\}.$$

            Then, for the non-Bregman divergences, we have that for some $T \geq T_0$, there exists a constant $C$ that depends on the type of recommender regret and the conversion between the dual norm to the $L_\infty$ norm, such that
            \begin{align*}
                \expect{\mathcal{SRR}^{f, D}_T} &= \expect{\mathcal{SRR}^{f, D}_T| E \cap G} \prob{E \cap G} + \expect{\mathcal{SRR}^{f, D}_T | E^c \cup G^c} \prob{E^c \cup G^c} \\
                &\leq C \sqrt{\frac{4\sigma^2 \ln(4KT/\delta)}{\zeta T}} + \delta C \expect{ \norm{\hat{Q}_T - Q^\star}_\infty \Bigg| E^c \cup G^c}.
            \end{align*}
            The term on the right is irreducible under the complement event, since we have no relations on $N_T(a)$, nor $\norm{\hat{Q}_T - Q^\star}_\infty$ that provide bounds on the term in the expectation. Thus, we use the following properties of independent sub-Gaussian random variables:
            \begin{itemize}[nosep]
                \item We know that random variables $Y_{i, a} \sim \text{subG}(\sigma^2), i \in [1, N_T(a)]$ are sub-Guassian by definition if $\expect{Y_{i,a}} = 0$ 
                $$\mathbb{E}\left[ \exp \left(\lambda Y_{i,a} \right) \right] \leq \exp \left(\frac{\lambda^2 \sigma^2}{2} \right);\forall \lambda \in \mathbb{R}.$$
                We may then trivially achieve zero-mean sub-Gaussian random variables on our reward distributions by setting $Y_{i,a} = r_{i,a} - \mu_a$.
                \citep[Definition 1.2]{high_dim_statistic_MIT_lecture_notes}
                \item We also know that sums of sub-Gaussian random variables are sub-Gaussian. The following is a specific case of \citep[Theorem 1.6]{high_dim_statistic_MIT_lecture_notes}. For a random vector of sub-Gaussian random variables $Y_a = (Y_{i,a_i} : i \in [1, T], a_i = a)$ and a vector of constant factors $z_a = \frac{1}{N_T(a)} \mathbbm{1}$, such that $z_a^\intercal Y_a = \frac{1}{N_T(a)}\sum_{i=1}^{N_T(a)}Y_{i, a}$:
                $$\expect{\exp \left( \lambda z_a^\intercal Y_a \right)} = \prod_{i = 1}^{N_T(a)} \expect{\exp \left( \frac{\lambda Y_{i,a}}{N_T(a)} \right)} \leq \prod_{i = 1}^{N_T(a)} \exp \left(\frac{\lambda^2 \sigma^2}{2N_T^2(a)} \right) \leq \exp \left(\frac{\lambda^2 \sigma^2}{2} \right),$$
                where the last inequality follows by $1/N_T(a) \leq 1$.
                \item Finally, following \citep[Theorem 1.14]{high_dim_statistic_MIT_lecture_notes}, define $\hat{Y}_T = (z_a^\intercal Y_a : a \in \mathcal{A})$, which, by the above, is a sub-Gaussian random vector with variance proxy $\sigma^2$. We have that 
                $$\expect{\max_{a \in \mathcal{A}} |\hat Y_{T,a}|} \leq \sigma \sqrt{2\log(2K)}; \quad \prob{\max_{a \in \mathcal{A}} |\hat Y_{T,a}| >j} \leq 2K \exp \left(-\frac{j^2}{2\sigma^2} \right)$$
            \end{itemize}
            
            We can now continue by defining $\hat Y_{T,a} = \hat Q_{T}(a) - Q^\star(a) = \frac{1}{N_T(a)} \sum_{i=1}^{T} \mathbb{I}(a_i = a) \left( r_{i, a_i} - \mu_a \right)$.
            Note that we will have to evaluate moments of $\max_{a \in \mathcal{A}} |\hat Y_{T,a}|$, so the following will prove useful.
            \begin{align*}
                \expect{\norm{\hat{Q}_T - Q^\star}^{2p}_\infty} &=
                \expect{\max_{a \in \mathcal{A}} \left|\hat Y_{T,a} \right|^{2p}} \\
                &= \int_0^\infty \prob{\max_{a \in \mathcal{A}} \left|\hat Y_{T,a} \right|^{2p} > j} dj \\
                &= \int_0^\infty \prob{\max_{a \in \mathcal{A}} \left|\hat Y_{T,a} \right| > j^{1/2p}} dj \\
                &= \int_0^u \prob{\max_{a \in \mathcal{A}} \left|\hat Y_{T,a} \right| > j^{1/2p}} dj + \int_u^\infty \prob{\max_{a \in \mathcal{A}} \left|\hat Y_{T,a} \right| > j^{1/2p}} dj  \\
                &\leq \int_0^u 1dt + 2K \int_u^\infty \exp \left(-\frac{j^{1/p}}{2\sigma^2} \right) dj  \\
                &= u + 2K \int_u^\infty \exp \left(-\frac{j^{1/p}}{2\sigma^2} \right) dj.
            \end{align*}

            We choose $u$ to be a convenient constant: $u = (2\sigma^2 \ln(2K))^p$. Then via a change of variables, we will evaluate the remaining integral term: $h = j^{1/p}$ such that $dj = p h^{p-1}dh$, with the lower integration bound becoming $u^{1/p} = 2\sigma^2 \ln(2K)$.
            \begin{align*}
                \int_u^\infty \exp \left(-\frac{j^{1/p}}{2\sigma^2} \right) dj
                &= \int_{2\sigma^2 \ln(2K)}^\infty \exp \left(-\frac{h}{2\sigma^2} \right) p h^{p-1}dh. 
            \end{align*}

            Once again changing the integration bounds to recover a gamma-like function, let $x = \frac{h}{2\sigma^2}$, and $dh = 2\sigma^2 dx$, with the lower integration bound now being $\ln(2K)$:
            \begin{align*}
                \int_{2\sigma^2 \ln(2K)}^\infty \exp \left(-\frac{h}{2\sigma^2} \right) p h^{p-1}dh 
                &= \int_{\ln(2K)}^\infty \exp \left(-x \right) p (2\sigma^2 x)^{p-1} 2\sigma^2 dx \\ 
                &= p (2\sigma^2)^p \int_{\ln(2K)}^\infty x^{p-1} \exp \left(-x \right) dx \\
                &\leq p (2\sigma^2)^p e^{-\ln(2K)} \sum_{k=0}^{p-1} \frac{(p-1)!}{k!} \ln(2K)^k \\
                &=  (2\sigma^2)^p \frac{1}{2K} \sum_{k=0}^{p-1} \frac{p!}{k!} \ln(2K)^k, \\
            \end{align*}
            which is a similar result to what is shown in \citep[Theorem 2.6]{wainwright2019high}.

            We only care about powers of 2 and 4, since these will be used later, so taking $p=1,2$, we find the following bounds:
            \begin{align*}
                \expect{\norm{\hat{Q}_T - Q^\star}^{2}_\infty} &\leq 2\sigma^2 \left(\ln(2K) + 1\right) \\
                \expect{\norm{\hat{Q}_T - Q^\star}^{4}_\infty} &\leq 4\sigma^4 \left(\ln(2K)^2 + 2\ln(2K) + 2\right).
            \end{align*}
            
            % Noticing that $\norm{\hat{Q}_T - Q^\star}_\infty = \max_{a \in \mathcal{A}} |\hat{Q}_T(a) - Q^\star(a)|$, the following holds:
            % $$\expect{\norm{\hat{Q}_T - Q^\star}_\infty} = \expect{\max_{a \in \mathcal{A}} \left|\hat{Q}_T(a) - Q^\star(a)\right|} \leq \sigma \sqrt{2\log(2K)}$$

            Back to our difficult-to-reduce expectation, the following holds, using Cauchy-Schwartz:
            \begin{align*}
                \expect{ \norm{\hat{Q}_T - Q^\star}_\infty \Bigg| E^c \cup G^c} \prob{E^c \cup G^c}
                &= \expect{ \norm{\hat{Q}_T - Q^\star}_\infty \mathbb{I}(E^c \cup G^c)} \\
                &\leq \sqrt{\expect{ \norm{\hat{Q}_T - Q^\star}^2_\infty}} \sqrt{\expect{\mathbb{I}(E^c \cup G^c)}} \\
                &\leq \sigma \sqrt{2\delta(\log(2K) + 1)}.
            \end{align*}
            Thus, we have that 
            \begin{align*}
                \expect{\mathcal{SRR}^{f, D}_T} &= \expect{\mathcal{SRR}^{f, D}_T| E \cap G} \prob{E \cap G} + \expect{\mathcal{SRR}^{f, D}_T | E^c \cup G^c} \prob{E^c \cup G^c} \\
                &\leq C \sqrt{\frac{4\sigma^2 \ln(4KT/\delta)}{\zeta T}} + \sigma C \sqrt{2\delta(\log(2K)+1)}.
            \end{align*}
            
            For Bregman divergences, we follow the same proof technique, just with squared Q-value error, leading to a similar result.
            \begin{align*}
                \expect{\mathcal{SRR}^{\pi, B_\Omega}_T} &= \expect{\mathcal{SRR}^{\pi, B_\Omega}_T| E \cap G} \prob{E \cap G} + \expect{\mathcal{SRR}^{\pi, B_\Omega}_T | E^c \cup G^c} \prob{E^c \cup G^c} \\
                &\leq C \frac{4\sigma^2 \ln(4KT/\delta)}{\zeta T} + 2 \sigma^2 C \sqrt{\delta\left(\ln(2K)^2 + 2\ln(2K) + 2\right)}.
            \end{align*}
            Allowing us to conclude that, so long as $\delta \in O(1/T^2)$, $\expect{\mathcal{SRR}^{f, D}_T} \in \Tilde O \left( C \sqrt{\frac{1}{\zeta T}} \right)$ for the non-Bregman divergences, and $\expect{\mathcal{SRR}^{\pi, B_\Omega}_T} \in \Tilde O \left( \frac{C}{\zeta T} \right)$ for the Bregman divergence-based regret. By choosing the dual norm, we can finally achieve the statements in the theorem, by cross-referencing \autoref{equ:lemma_instance_regret_bounds} and \autoref{equ:lemma_instance_regret_bounds_infinity}.

            Note that we chose the above proof technique to avoid bounding the reward distributions. If instead we bounded each $|r_{i,a_i}| \leq R_\text{max} < \infty$, we would quite simply have that 
            \begin{align*}
                \expect{\mathcal{SRR}^{f, D}_T | E^c \cup G^c} \prob{E^c \cup G^c} 
                &\leq \delta C \expect{ \norm{\hat{Q}_T - Q^\star}_\infty \Bigg| E^c \cup G^c} \\
                &\leq 2CR_\text{max}\delta.
            \end{align*}
            Then with $\delta \in O(1/T)$ we have both bounds presented above.
        \end{proof}

\section{Proof of Theorem~\ref{thrm:MCTS_SRR_high_prob_bound}} \label{appendix:MCTS_THEOREM_PROOF}
        
        \begin{proof}[Proof of Theorem~\ref{thrm:MCTS_SRR_high_prob_bound}]
            Using Lemma~\ref{lemma:q_val_convergence_implies_value_policy} with the fact that $\norm{\hat{Q}_{\Omega,t} - Q^\star}_\star \leq  C_\mathcal{A}\norm{\hat{Q}_{\Omega,t} - Q^\star}_\infty$ for some constant $C_\mathcal{A}$ that depends on the dual norm and action space, we know that there exists a constant $C_3$ such that
            \begin{align*}
                \mathcal{SRR}^{f, D}_t \leq \begin{cases}
                    C_3^2\norm{\hat{Q}_{\Omega,t}(s_0, \cdot) - Q^\star(s_0, \cdot)}_\infty^2 & \text{if } D \in \left\{ B_\Omega \right\} \\
                    C_3\norm{\hat{Q}_{\Omega,t}(s_0, \cdot) - Q^\star(s_0, \cdot)}_\infty & \text{else.}
                \end{cases}
            \end{align*}
            Without loss of generality, we proceed with the linear bound, noting that for Bregman divergences, we can afford a higher $\mathcal{SRR}$ lower bound in the probability statement: $\epsilon^2$, which yields the same exponential concentration rate, after square rooting both sides of the inequality in the probability.
    
            Our goal is now to analyse the following term at various tree depths $H$, for an inductive proof:
            \begin{align*}
                \prob{\mathcal{SRR}^{f, D}_t \geq \epsilon} 
                &\leq \prob{\norm{\hat{Q}_{\Omega,t}(s_0, \cdot) - Q^\star(s_0, \cdot)}_\infty \geq \frac{\epsilon}{C_3}}. 
            \end{align*}
    
            Namely, we seek to prove that $\exists C_1, C_2 \in \mathbb{R}_{>0}$ at the root $s_0$ of a search tree such that
            $$\prob{\norm{\hat{Q}_{\Omega,t}(s_0, \cdot) - Q^\star(s_0, \cdot)}_\infty \geq \frac{\epsilon}{C_3}} \leq C_1 \exp \left(- \frac{N_t(s_0) \epsilon^2}{C_2} \right).$$
    
            Let $N_t(s,a) = \sum_{i=1}^{t} \mathbb{I}(a_i = a)\mathbb{I}(s_i = s)$ be the empirical tally of samples, $N_t(s_0) := \sum_{a\in\mathcal{A}}N_t(s_0, a) = t$, and let $\hat{N}_t(s,a) = \sum_{i=1}^{t} d_i(a|s) \geq \zeta t$ be the sum of sampling policy $d_i$ for a given state-action pair.
    
            \textbf{Base case, with $H=1$}
            In the bandit setting, we can simply use Hoeffding's inequality to achieve the desired concentration bound. We will reuse elements of the proof technique from Lemma~\ref{lemma:any_policy_achieves_optimal_bounds}, specifically the concentration of action selection counts via the Azuma-Hoeffding inequality on martingale difference sequences \citep{wainwright2019high_concentration_inequalities}. Define the event $E_a = \left\{ \hat{N}_t(s_0,a) - N_t(s_0,a) \leq \epsilon_1 \right\}$ and the more general event over all actions $E = \bigcap_{a\in\mathcal{A}}E_a$. When conditioning on $E_a$, it will be useful to set $\epsilon_1 = \zeta N_t(s_0)/2$, so that we have $N_t(s_0, a) \geq \zeta N_t(s_0)/2$ (since $\hat{N}_t(s_0, a) \geq \zeta N_t(s_0)$). It is clear that 
            \begin{align*}
                \prob{E_a^c} &\leq \exp \left(- \frac{2\epsilon_1^2}{N_t(s_0)} \right) \\
                &= \exp \left(- \frac{2}{N_t(s_0)} \frac{\zeta^2 (N_t(s_0))^2}{4} \right) \\
                &= \exp \left(- \frac{\zeta^2 N_t(s_0)}{2} \right).
            \end{align*}
    
            Define another event on the reward error $G_{a,t} = \left\{ \left|\hat{Q}_{\Omega,t}(s_0,a) - Q^\star_\Omega(s_0,a)\right| \geq  \epsilon_2 \right\}$, and a general event $G_t = \bigcup_{a\in \mathcal{A}} G_{a,t}$. Notice how $\prob{\norm{\hat{Q}_{\Omega,t}(s_0, \cdot) - Q^\star(s_0, \cdot)}_\infty \geq \epsilon_2} \leq \prob{G_t}$. Using Hoeffding's inequality (under the assumption that $N_t(s_0, a)$ is not a random variable), it is clear that
            \begin{align*}
                \prob{ G_{a, t}} &= \prob{ \left|\sum_{i=1}^{t} (r_{i, a_i} - Q^\star(s_0, a)) \mathbb{I}(a_i = a) \right| > N_t(s_0, a)\epsilon_2} \\
                &\leq 2 \exp \left( - \frac{\epsilon_2^2 N_t(s_0, a)}{2 \sigma^2} \right).
            \end{align*}

            To finally bound $\prob{G_t}$, we have to use the event $E_a$ to convert between child and parent action counts, and we have to convert those action counts from random variables to deterministic sample counts. Since we know that $N_t(s_0, a) \in [\zeta N_t(s_0)/2, N_t(s_0)]$ under the event $E_a$, we can sum over those action counts (through the union bound) to make the count deterministic:
            \begin{align*}
                \prob{ G_{a, t} | E_a}
                &= \prob{ \left|\hat{Q}_{\Omega,t}(s_0,a) - Q^\star_\Omega(s_0,a)\right| \geq  \epsilon_2 \Bigg| E_a} \\
                &= \prob{ \exists n = N_t(s_0, a) \in \left[ \frac{\zeta N_t(s_0)}{2}, N_t(s_0) \right]  : \left|\hat{Q}_{\Omega,t}(s_0,a) - Q^\star_\Omega(s_0,a)\right| \geq  \epsilon_2 \Bigg| E_a} \\
                &\leq \prob{ \bigcup_{n=\zeta N_t(s_0)/2}^{N_t(s_0)} \left\{ n = N_t(s_0, a) :  \left|\hat{Q}_{\Omega,t}(s_0,a) - Q^\star_\Omega(s_0,a)\right| \geq  \epsilon_2 \right\} \Bigg| E_a} \\
                &\leq \sum_{n=\zeta N_t(s_0)/2}^{N_t(s_0)} \prob{ \left\{ n = N_t(s_0, a) :  \left|\hat{Q}_{\Omega,t}(s_0,a) - Q^\star_\Omega(s_0,a)\right| \geq  \epsilon_2 \right\}} \\
                &\leq 2 \sum_{n=\zeta N_t(s_0)/2}^{N_t(s_0)} \exp \left( - \frac{\epsilon_2^2 n}{2 \sigma^2} \right) \\
                &\leq 2 N_t(s_0) \exp \left( - \frac{\epsilon_2^2 \zeta N_t(s_0)}{4 \sigma^2} \right).
            \end{align*}

            Completing the bound on $\prob{G_t}$:
            \begin{align*}
                \prob{G_t} &= \prob{G_t|E}\prob{E} + \prob{G_t|E^c}\prob{E^c} \\
                &\leq \prob{G_t|E} + \prob{E^c} \\
                &= \sum_{a\in\mathcal{A}} \prob{ G_{a, t} | E_a} + \prob{E^c_a} \\
                &\leq 2 K N_t(s_0) \exp \left( - \frac{\epsilon_2^2 \zeta N_t(s_0)}{4 \sigma^2} \right) + K \exp \left(- \frac{\zeta^2 N_t(s_0)}{2} \right) \\
                &\leq 2 K N_t(s_0) \exp \left( - \frac{\epsilon_2^2 \zeta N_t(s_0)}{4 \sigma^2} \right) + K N_t(s_0) \exp \left(- \frac{\zeta^2 N_t(s_0)}{2} \right).
            \end{align*}
    
            From here, we need to draw some relations between $\zeta$ and $\epsilon_2$ in order to get the bound into the correct form. We set $\epsilon_2 = \epsilon/C_3 \leq \zeta$ thus setting the assumption in the theorem statement. This invariably affects the size that $T$ must take in order to satisfy the upper bound (i.e. $T$ must be very large since $\epsilon_2$ is smaller than $\zeta$). Thus, using the facts that $-1 \leq -\zeta$ and that $-\zeta \leq -\epsilon_2$, we have the following bound
            \begin{align*}
                \prob{G_t} 
                &\leq 2 K N_t(s_0) \exp \left( - \frac{\epsilon_2^2 \zeta N_t(s_0)}{4 \sigma^2} \right) + K N_t(s_0) \exp \left(- \frac{\zeta^2 N_t(s_0)}{2} \right) \\
                &\leq 2 K N_t(s_0) \exp \left( - \frac{\epsilon_2^2 \zeta N_t(s_0)}{4 \sigma^2} \right) + K N_t(s_0) \exp \left(- \frac{\epsilon_2^2 \zeta N_t(s_0)}{2} \right) \\
                &\leq 3K N_t(s_0) \exp \left( - \frac{\epsilon_2^2 \zeta N_t(s_0)}{\max(4 \sigma^2, 2)} \right) \\
                &\leq 3K N_t(s_0) \exp \left( - \frac{\epsilon_2^2 \zeta N_t(s_0)}{2(2 \sigma^2+1)} \right),
            \end{align*}
            where the last inequality come from the fact that $a+b \geq \max(a,b), \forall a, b \in \mathbb{R}_{\geq 0}$.
    
            Finally, we use the relation between $G_t$ and the $L_\infty$ Q-value error to show the base case:
            \begin{align*}
                \prob{\norm{\hat{Q}_{\Omega,t}(s_0, \cdot) - Q^\star(s_0, \cdot)}_\infty \geq \frac{\epsilon}{C_3}}
                &\leq \prob{G_t}, \quad (\epsilon_2 = \epsilon/C_3) \\
                &\leq 3K N_t(s_0) \exp \left( - \frac{\epsilon^2 \zeta N_t(s_0)}{2C_3^2(2 \sigma^2+1)} \right) \\
                &= Z_1^{(1)} \exp \left( - \frac{\epsilon^2 N_t(s_0)}{Z_2^{(1)} C_3^2} \right),
            \end{align*}
            with $Z_1^{(1)} = 3K N_t(s_0)$ and $Z_2^{(1)} = \frac{\zeta}{2(2 \sigma^2+1)}$.
            
            \textbf{Inductive step}
            Assuming that the inequality holds for each depth $H-1$ tree, we will prove that it also holds for a search tree of depth $H$. We will reuse the events $E_a$ and $E$, since they were applied at the root of our bandit, but equally apply to the root of a search tree. Let both Q-value estimators use the same deterministic reward, where $s_a$ is the next state from state $s_0$ after $a$ is taken: $\hat{Q}_{\Omega,t}(s_0, a) = r(s_0, a) + \gamma \Omega^*(\hat{Q}_{\Omega,t}(s_a, \cdot))$ and $Q^\star_{\Omega,t}(s_0, a) = r(s_0, a) + \gamma \Omega^*(Q^\star_{\Omega,t}(s_a, \cdot))$. Therefore, we have that
            \begin{align*}
                \prob{\norm{\hat{Q}_{\Omega,t}(s_0, \cdot) - Q^\star(s_0, \cdot)}_\infty \geq \frac{\epsilon}{C_3}}
                &\leq \prob{\sum_{a\in\mathcal{A}} \left| \hat{Q}_{\Omega,t}(s_0, a) - Q^\star(s_0, a) \right| \geq \frac{\epsilon}{C_3} } \\
                &= \prob{\sum_{a\in\mathcal{A}} \gamma \left| \Omega^*(\hat{Q}_{\Omega,t}(s_a, \cdot))) - \Omega^* (Q^\star(s_a, \cdot)) \right| \geq \frac{\epsilon}{C_3} } \\
                &\leq \prob{\sum_{a\in\mathcal{A}}  \norm{\hat{Q}_{\Omega,t}(s_a, \cdot) - Q^\star(s_a, \cdot)}_\infty \geq \frac{\epsilon}{C_3\gamma } } \\
                &\leq \prob{\bigcup_{a\in\mathcal{A}} \left\{ \norm{\hat{Q}_{\Omega,t}(s_a, \cdot) - Q^\star(s_a, \cdot)}_\infty \geq \frac{\epsilon}{KC_3\gamma}\right\} } \\
                &\leq \sum_{a\in\mathcal{A}} \prob{ \norm{\hat{Q}_{\Omega,t}(s_a, \cdot) - Q^\star(s_a, \cdot)}_\infty \geq \frac{\epsilon}{C_3 K \gamma}},
            \end{align*}
            where the second line follows from the Q-value definitions, the third follows from Lemma~\ref{lemma:q_val_convergence_implies_value_policy}, the fourth follows from the union bound (setting the resulting constant on the bound to $1/K$ so that the constants sum to 1), and the final inequality follows from the union bound.
    
            Now, let $G'_{s_a} = \left\{ \norm{\hat{Q}_{\Omega,t}(s_a, \cdot) - Q^\star(s_a, \cdot)}_\infty \geq \frac{\epsilon}{C_3 K \gamma} \right\}$. We follow the same proof technique from the base case, conditioning on $E$ and solving for the upper bound:
            \begin{align*}
                \prob{G'_{s_a}} &= \prob{G'_{s_a}|E}\prob{E} + \prob{G'_{s_a}|E^c}\prob{E^c} \\
                &\leq \prob{G'_{s_a}|E} + \prob{E^c} \\
                &\leq Z_1^{(H-1)} N_t(s_0) \exp \left(- \frac{ \zeta N_t(s_0) \epsilon^2}{2 Z_2^{(H-1)} C_3^2 K^2 \gamma^2} \right) + K \exp \left(- \frac{\zeta^2 N_t(s_0)}{2} \right) \\
                &\leq Z_1^{(H-1)} N_t(s_0) \exp \left(- \frac{\zeta N_t(s_0) \epsilon^2}{2 Z_2^{(H-1)} C_3^2 K^2 \gamma^2} \right) + K N_t(s_0) \exp \left(- \frac{\zeta N_t(s_0) \epsilon^2}{2} \right) & (\text{using } \zeta^2 \geq \epsilon^2) \\
                &\leq (Z_1^{(H-1)}+K) N_t(s_0) \exp \left(- \frac{\zeta N_t(s_0) \epsilon^2}{2 (Z_2^{(H-1)} C_3^2 K^2 \gamma^2 + 1)}\right).
            \end{align*}

            Finally, we have that the root Q-value error concentrates given that the sub-trees concentrate:
            \begin{align*}
                \prob{\norm{\hat{Q}_{\Omega,t}(s_0, \cdot) - Q^\star(s_0, \cdot)}_\infty \geq \frac{\epsilon}{C_3}}
                &\leq \sum_{a \in \mathcal{A}} \prob{G'_{s_a}} \\
                &\leq K(Z_1^{(H-1)}+K) N_t(s_0) \exp \left(- \frac{\zeta N_t(s_0) \epsilon^2}{2 (Z_2^{(H-1)} C_3^2 K^2 \gamma^2 + 1)} \right) \\
                &\leq Z_1^{(H)} \exp \left(- \frac{N_t(s_0) \epsilon^2}{Z_2^{(H)}} \right),
            \end{align*}
            for $Z_1^{(H)} = K(Z_1^{(H-1)} + K) N_t(s_0)$ and $Z_2^{(H)} = \frac{2(Z_2^{(H-1)} C_3^2 K^2 \gamma^2 + 1)}{\zeta}$. Analysing these relations for arbitrary tree depths $H$, we find the following relation, which will be used to construct our final bounds, later:
            \begin{itemize}
                \item \textbf{For } $\mathbf{Z_1^{(H)}}$: 
                    \begin{align*}
                        Z_1^{(H)} &= K(Z_1^{(H-1)} + K) t \propto Kt Z_1^{(H-1)} \\
                        &= K(K(Z_1^{(H-2)} + K) t + K) t \propto K^2t^2 Z_1^{(H-2)} \\
                        &\quad \vdots \\
                        &\propto (Kt)^H
                    \end{align*}
                \item \textbf{For } $\mathbf{Z_2^{(H)}}$: 
                    \begin{align*}
                        Z_2^{(H)} &= \frac{2(Z_2^{(H-1)} C_3^2 K^2 \gamma^2 + 1)}{\zeta} \propto \frac{K^2}{\zeta} Z_2^{(H-1)} \\
                        &= \frac{2(\frac{2(Z_2^{(H-2)} C_3^2 K^2 \gamma^2 + 1)}{\zeta} C_3^2 K^2 \gamma^2 + 1)}{\zeta} \propto \frac{K^4}{\zeta^2} Z_2^{(H-2)} \\
                        &\quad \vdots \\
                        &\propto \frac{K^{2H}}{\zeta^H}
                    \end{align*}
            \end{itemize}

            \textbf{Conclusion}
            The $L_\infty$ Q-value error concentrates at the root of a search tree. Since this is the case, we also have that the recommender regret concentrates at the root of a search tree:
            \begin{align*}
                \prob{\mathcal{SRR}^{f, D}_t \geq \epsilon} 
                &\leq \prob{\norm{\hat{Q}_{\Omega,t}(s_0, \cdot) - Q^\star(s_0, \cdot)}_\infty \geq \frac{\epsilon}{C_3}} \\
                &\leq C_1 (Kt)^H \exp \left(- \frac{\zeta^H N_t(s_0) \epsilon^2}{C_2 K^{2H}} \right).
            \end{align*}
            For some $C_1, C_2 \in \mathbb{R}_{>0}$ that depend on $K, H, \gamma, \zeta, \sigma, \text{and } C_3$. Specifically, $\exists C_1 : Z_1^{(H)} \leq C_1 (Kt)^H$ and $\exists C_2: Z_2^{(H)} \leq C_2 \frac{K^{2H}}{\zeta^H}$. These exponential relations indicate that the tree must be well-explored prior to concentration occurring, and that $\zeta$ should be maximised to minimise the upper bound.
        \end{proof}

        \begin{remark}[Comparison with prior regularised MCTS bounds] \label{remark:mcts_bounds_kinda_suck}
            Prior regularised MCTS proofs produced exponential rates of order $O(T \exp(-T/\log^3 T))$, where depth dependence of $O(K^H)$ was hidden in constants. However, union bounding over execution histories $\sum_{i=1}^T \mathbb{P}(E_{r_i}^c)$ (\cf, \citep[Lemma 3, 4]{convex_regular_tree_search}) incorporates non-decaying early-stage estimation errors ($\prob{E_{r_1}^c} = \Omega(1)$, making history-based concentration bounds vacuous. By explicitly conditioning on valid visit allocations at each depth layer $h \in [1, H]$, our proof guarantees a rigorous concentration bound of $O(T^H \exp(-C T))$.
        \end{remark}
        \begin{proof}[An explanation of Remark~\ref{remark:mcts_bounds_kinda_suck}: the issue with prior proofs]
            \citep[Lemma 3]{convex_regular_tree_search} states that 
            $$\prob{|\hat{N}_t(s,a) - N_t^*(s,a)| > \frac{Ct}{\log t}} \leq \hat{C}|A|t \exp\left\{ -\frac{t}{(\log t)^3} \right\}.$$

            To prove this statement, the following event is defined which concentrates according to the bound that follows.
            $$E_{r_k} = \left\{ \norm{ r(s', \cdot) - \hat{r}_k(s', \cdot) }_\infty < \frac{2\sigma}{\log(2+k)} \right\}$$
            $$\prob{ \norm{ r - \hat{r}_t }_\infty \geq \frac{2\sigma}{\log(2+t)} } \leq 4|A| \exp\left( -\frac{t}{(\log(2+t))^3} \right).$$

            After some analysis, a relation is arrived at which serves as the basis for the lemma statement:
            $$\mathbb{P}\left(|\hat{N}_t(s,a) - N_t^*(s,a)| \geq \frac{Ct}{\log t}\right) \leq \sum_{k=1}^t \mathbb{P}(E_{r_k}^c).$$

            It is clear that at the $t$-th step, we have concentration with regards to $t$. However, at some arbitrary $k$-th step, $k \in [1, t]$, all we have is an irreducible constant term:
            $$\prob{E_{r_k}^c} \leq 4|A| \exp \left(- \frac{k}{(\log(2+k))^3} \right).$$

            Since there is no relation between $k$ and $t$ that allows us to convert the bound to one on $t$ (nor should there be, since at some time step $k$ we had a fixed error probability), we must conclude that the probability bound achieved is actually $\Omega(1)$, and does not prove concentration. This affects every theorem down the line, by making it impossible to prove tree convergence at any step, since the lemma is used frequently to convert child action counts to parent action counts. Note that this issue is inherited from \citep[Theorem 3]{xiao2019maximum_MENTS}, which uses the same logic to draw relations between child and parent action counts.
        \end{proof}

\section{Deferred proofs} \label{appendix:deffered_proofs}
    \subsection{Proof of Corollary~\ref{cor:uniform_beats_e3w_Q_upper_bounds}}
        \begin{corollary}[Uniform sampling minimises simple recommender regret bounds] \label{cor:uniform_beats_e3w_Q_upper_bounds}
            Among all sampling policies maintaining full support, uniform sampling ($d_t(a) = 1/K$) minimises the upper bounds in Theorem~\ref{theorem:general_upper_bound} across all notions of recommender regret. \autoref{tab:regret_bounds_comparison} lists the bounds for each notion of simple regret for an instance (\ie, not in expectation, but these results translate directly to the expectation).
            % Uniform sampling achieves the tightest Q-value regret bound, even when compared with the E3W policy in \Eqref{equ:E3W_action_selection}.
        \end{corollary}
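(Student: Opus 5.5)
The bounds in Theorem~\ref{theorem:general_upper_bound} (and their instance-level forms in \autoref{equ:lemma_instance_regret_bounds} and \autoref{equ:lemma_instance_regret_bounds_infinity}) depend on the sampling policy only through the exploration floor $\zeta$. The constant in front of the Q-error concentration term depends only on $K$, $\omega$, $\beta$, $\sigma$ and the dual norm. The plan is therefore to reduce the corollary to a one-dimensional optimisation over $\zeta$, and then evaluate the resulting bound for E3W to fill in \autoref{tab:regret_bounds_comparison}.

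\textbf{Step 1: uniform sampling maximises $\zeta$.} For any full-support policy, $\sum_a d_t(a)=1$ forces $\min_a d_t(a)\le 1/K$. Hence the largest admissible $\zeta$, namely $\zeta^\star=\min_{t,a}d_t(a)$, satisfies $\zeta^\star\le 1/K$, with equality if and only if $d_t(a)=1/K$ for every $t$ and $a$. Every bound in the theorem is monotonically decreasing in $\zeta$: they scale as $\sqrt{1/(\zeta T)}$ or $1/(\zeta T)$, and the burn-in condition $T\ge 2\ln(2K/\delta)/\zeta^2$ is also least restrictive at large $\zeta$. It follows that uniform sampling minimises each bound simultaneously, for all four notions of regret and both dual-norm cases. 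Substituting $\zeta=1/K$ gives the uniform entries of the table. For example, $\norm{\hat Q_T-Q^\star}_\infty\le\sqrt{4K\sigma^2\ln(4KT/\delta)/T}$, and the other entries follow by multiplying by the conversion constants of Lemma~\ref{lemma:q_val_convergence_implies_value_policy}.

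\textbf{Step 2: the E3W entries.} For \Eqref{equ:E3W_action_selection}, the only guaranteed floor is the uniform mixture $d_t(a)\ge\lambda_t/K=\epsilon/(K\log(t+1))$, because the $\nabla\Omega^*$ term can place arbitrarily small mass on some arm (sparsemax even places exactly zero). This floor decays with $t$, so Lemma~\ref{lemma:any_policy_achieves_optimal_bounds} does not apply directly. I would redo its count argument with $\hat N_T(a)=\sum_{i\le T}d_i(a)\ge\sum_{i\le T}\epsilon/(K\log(i+1))\ge\epsilon T/(K\log(T+1))$. This amounts to running the proof with an effective $\zeta_T=\epsilon/(K\log(T+1))$. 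The Azuma step is unchanged, and it gives $N_T(a)\ge\zeta_T T/2$ once $T\gtrsim\ln(2K/\delta)/\zeta_T^2$. Plugging $\zeta_T$ into each bound yields the E3W rows, for instance $\sqrt{4K\log(T+1)\sigma^2\ln(4KT/\delta)/(\epsilon T)}$. These are looser than uniform by a factor of $\sqrt{\log T/\epsilon}$ (or $\log T/\epsilon$ for Bregman), which is the $\log(T)$ penalty claimed in the text. The expectation versions then follow verbatim from the $\delta\in O(1/T^2)$ argument in the proof of Theorem~\ref{theorem:general_upper_bound}.

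\textbf{Main obstacle.} The hard part is making the E3W comparison honest. The theorem's hypothesis asks for a time-uniform floor $\zeta$, and E3W violates it. The lemma actually uses only the cumulative quantity $\hat N_T(a)\ge\zeta T$, and the proof must be phrased in those terms. I also need to be clear that the comparison is between upper bounds, not between true regrets. The claim is only that uniform sampling minimises the bounds, not that it is optimal for every instance.
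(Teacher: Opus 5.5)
Your proposal is correct and follows essentially the paper's route: the bounds depend on the policy only through $\zeta$, the constraint $\sum_a d_t(a)=1$ caps $\zeta$ at $1/K$ (attained only by uniform sampling), and the E3W rows come from rerunning the Azuma count argument of Lemma~\ref{lemma:any_policy_achieves_optimal_bounds} with a cumulative lower bound on $\hat N_t(a)$. The differences are minor. You derive $\hat N_T(a)\ge \epsilon T/(K\log(T+1))$ directly from the mixture floor, so your E3W constants carry an explicit $1/\epsilon$ rather than the table's fixed constants, but the $\sqrt{\log T}$ gap is the same; the paper instead cites prior work's $\hat N_t(a)\ge t/(2\ln(t+2))$, and for uniform it uses a deterministic round-robin with $N_t(a)=t/K$ rather than substituting $\zeta=1/K$ into the lemma.
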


        \begin{table}[htbp]
            \centering
            \caption{Regret bounds comparing Uniform and E3W sampling policies.}
            \begin{tabular}{lcc}
            % \hline
            \textbf{Bounded Term} & \textbf{Uniform} & \textbf{E3W} \\
            \hline
            $\norm{\hat{Q}_t - Q^\star}_\star$ & 
            $2K^{3/2} \norm{\mathbf e_1}_\star \sqrt{\frac{ \sigma^2\ln(4Kt / \delta)}{t}}$ & 
            $4K^{3/2} \norm{\mathbf e_1}_\star \sqrt{\frac{ \sigma^2 \ln(2+t) \ln(4Kt / \delta)}{t}}$ \\
            
            $\norm{\hat \pi - \pi^\star}$ & 
            $\frac{1}{\beta} 2K^{3/2} \norm{\mathbf e_1}_\star \sqrt{\frac{\sigma^2 \ln(4Kt/\delta)}{t}}$ & 
            $\frac{1}{\beta} 4K^{3/2} \norm{\mathbf e_1}_\star \sqrt{\frac{\sigma^2 \ln(2+t) \ln(4Kt/\delta)}{t}}$ \\
            
            $D_\Omega(\hat \pi, \pi^\star)$ & 
            $\frac{1}{\beta} K^3 \norm{\mathbf e_1}_\star^2 \frac{\sigma^2 \ln(4Kt/\delta)}{t}$ & 
            $\frac{2}{\beta} K^3 \norm{\mathbf e_1}_\star^2 \frac{\sigma^2 \ln(2+t) \ln(4Kt/\delta)}{t}$ \\
            
            $\left| \hat{V} - V^\star \right|$ & 
            $2\sigma\sqrt{\frac{K \ln(4Kt/\delta)}{t}}$ & 
            $4\sigma\sqrt{\frac{K \ln(2+t) \ln(4Kt/\delta)}{t}}$ \\
            \hline
            \end{tabular}
            \label{tab:regret_bounds_comparison}
        \end{table}

        \begin{proof}[Proof of Corollary~\ref{cor:uniform_beats_e3w_Q_upper_bounds}]
            It is evident in \Eqref{equ:lemma_any_policy_zeta_bound} that $\zeta$ is the principle quantity which controls the constants of the bound. If we want to minimise the bound, we should take $\zeta$ to its maximum possible value, across all actions. This implies that uniform sampling should produce the tightest Q-value error bound.
            
            We start by proving the Q-error bound under E3W-based action sampling, and then prove a similar result for uniform sampling. Both results work off of the proof in Lemma~\ref{lemma:any_policy_achieves_optimal_bounds}.

            \textbf{E3W-based sampling}
            \citet[Lemma 1]{convex_regular_tree_search} and \citet[Lemma 4]{xiao2019maximum_MENTS} show that $\hat{N}_t(a) \geq \frac{t}{2\ln(t+2)}$ for E3W-based action selection. Thus, with probability $1-\delta_1$ we have that 
            $$N_t(a) \geq \hat{N}_t(a) - \sqrt{\frac{t \ln(1/\delta_1)}{2}} \geq \frac{t}{2\ln(t+2)} - \sqrt{\frac{t \ln(1/\delta_1)}{2}}$$
            Then, with $\frac{t}{(\ln(2+t))^2} \geq 2 \ln(1/\delta_1)$, we see that $N_t(a) \geq \frac{t}{4 \ln(t+2)}$. Following the derivation in Lemma~\ref{lemma:any_policy_achieves_optimal_bounds}, we can now achieve the following bound, under the intersection event $E \cap G$, which holds for each $a \in \mathcal{A}$ for $t \geq T_0$. 
            \begin{equation} \label{equ:q_error_hoeffding_bound_e3w}
                |\hat{Q}_t(a) - Q^\star(a)| \leq \sqrt{\frac{2\sigma^2 \ln(4Kt/\delta)}{N_t(a)}} \leq \sqrt{\frac{16 K \ln(t+2) \sigma^2 \ln(4Kt/\delta)}{t}}.
            \end{equation}

            \textbf{Uniform sampling}
            We may proceed trivially by setting $N_t(a) = t/K$, which is achieved on the $K$-th time step, $t=nK$, with a deterministic uniform sampling policy that samples each action sequentially over $K$ time steps. The following bound holds, under the event $G$, for each $a \in \mathcal{A}$.
            \begin{equation} \label{equ:q_error_hoeffding_bound_uniform}
                |\hat{Q}_t(a) - Q^\star(a)| \leq \sqrt{\frac{2\sigma^2 \ln(4Kt/\delta)}{N_t(a)}} = \sqrt{\frac{2\sigma^2 K\ln(4Kt/\delta)}{t}}.
            \end{equation}

            % \textbf{Results in Table~\ref{tab:regret_bounds_comparison}}
            % The results in Table~\ref{tab:regret_bounds_comparison} are generated by combining the Q-error upper bounds for uniform sampling (\Eqref{equ:q_error_hoeffding_bound_uniform}) and E3W (\Eqref{equ:q_error_hoeffding_bound_e3w}) with Lemma~\ref{lemma:q_val_convergence_implies_value_policy} and Lemma~\ref{lemma:any_policy_achieves_optimal_bounds}, similarly to how the theorem was proven.
        \end{proof}
        
    \subsection{Proof of Corollary~\ref{cor:MCTS_uniform_regret}} \label{appendix:MCTS_COR_UNIF_PROOF}
        This is pushed to the appendix because it does not directly give the expected regret bound, but is close.
        
        \begin{corollary}[Recommender MCTS optimal regret bound] \label{cor:MCTS_uniform_regret}
                Under the hypotheses of Theorem~\ref{thrm:MCTS_SRR_high_prob_bound}, the uniform sampling policy $d_t(a|s) = 1/K$ maximizes $\zeta^* = 1/K$. This choice minimizes both $C_2 / \zeta^H$ and the search horizon $T$. In particular, for error probability $\delta \in (0, \min(C_1 (TK)^H,1)]$ and search horizon $T \geq \frac{C_2 K^{2+3H}}{C_3^2} \ln \left( \frac{C_1 (TK)^H}{\delta} \right)$, uniform sampling guarantees that with probability at least $1-\delta$:
                \begin{equation} \label{equ:MCTS_optimal_regret_bound}
                    \mathcal{SRR}^{f, D}_T \leq \sqrt{\frac{C_2 K^{3H}}{T} \ln \left( \frac{C_1 (TK)^H}{\delta} \right)}.
                \end{equation}
        \end{corollary}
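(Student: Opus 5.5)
This corollary is obtained by inverting the concentration inequality of Theorem~\ref{thrm:MCTS_SRR_high_prob_bound}. Write $\epsilon$ for the target error. Under Theorem~\ref{thrm:MCTS_SRR_high_prob_bound},
\begin{equation*}
\prob{\mathcal{SRR}^{f,D}_T \geq \epsilon} \leq C_1 (KT)^H \exp\left(-\frac{\zeta^H T \epsilon^2}{K^{2H} C_2}\right).
\end{equation*}
The first step is to observe that the right-hand side is decreasing in $\zeta$. The exponent carries $\zeta^H$ in the numerator, and the constants $C_1,C_2$ can be taken monotone in $\zeta$ from the recursion $Z_2^{(H)} \propto K^{2H}/\zeta^H$. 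Under the constraint $\sum_a d_t(a|s) = 1$ with $d_t(a|s) \geq \zeta$ for all $a$, we have $\zeta \leq 1/K$, and equality holds exactly for the uniform policy. This justifies that uniform sampling maximises $\zeta^\star = 1/K$ and minimises the effective constant $C_2/\zeta^H$, which becomes $C_2 K^H$.

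Next I substitute $\zeta = 1/K$. This turns the exponent into $-T\epsilon^2/(C_2 K^{3H})$. I then set the bound equal to $\delta$ and solve:
\begin{equation*}
\epsilon = \sqrt{\frac{C_2 K^{3H}}{T}\ln\left(\frac{C_1 (TK)^H}{\delta}\right)}.
\end{equation*}
The restriction $\delta \leq C_1(TK)^H$ keeps the logarithm nonnegative, so $\epsilon$ is well defined. On the complementary event, which has probability at least $1-\delta$, we obtain \eqref{equ:MCTS_optimal_regret_bound}.

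The main obstacle is that Theorem~\ref{thrm:MCTS_SRR_high_prob_bound} only applies when $\epsilon \leq C_3\zeta = C_3/K$. The chosen $\epsilon$ must therefore satisfy
\begin{equation*}
\frac{C_2 K^{3H}}{T}\ln\left(\frac{C_1(TK)^H}{\delta}\right) \leq \frac{C_3^2}{K^2}.
\end{equation*}
Rearranged, this is precisely the stated horizon condition $T \geq \frac{C_2 K^{2+3H}}{C_3^2}\ln\left(\frac{C_1(TK)^H}{\delta}\right)$. The condition is implicit because $T$ appears on both sides. Since the right-hand side grows only logarithmically in $T$, the condition holds for all sufficiently large $T$, so I would simply leave it in the implicit form given in the statement.

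Finally, since the required $T$ scales with $C_2/\zeta^{H}\cdot 1/\zeta^2$, maximising $\zeta$ also minimises the required search horizon. This completes the claim that uniform sampling is optimal among policies covered by the theorem.
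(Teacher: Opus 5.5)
Your proposal is correct and follows essentially the same route as the paper: you bound $\zeta \le 1/K$ by summing $d_t(a|s)\ge\zeta$ over the $K$ actions, substitute $\zeta^\star = 1/K$ into the tail bound of Theorem~\ref{thrm:MCTS_SRR_high_prob_bound}, invert at level $\delta$ to obtain $\epsilon$, and rewrite the applicability condition $\epsilon \le C_3/K$ as the implicit horizon condition on $T$. Your extra remarks are small clarifications that the paper leaves implicit rather than a different argument: the effective exponent constant $Z_2^{(H)}$ decreases in $\zeta$, and $\delta \le C_1(TK)^H$ keeps the logarithm nonnegative.
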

        \begin{proof}[Proof of Corollary~\ref{cor:MCTS_uniform_regret}]
            Summing the allocation constraint $d_t(a|s) \geq \zeta$ over all $K = |\mathcal{A}|$ actions yields $1 \geq K \zeta$, establishing that $\zeta \leq 1/K$. The maximum threshold $\zeta^* = 1/K$ is uniquely attained by the uniform sampling policy $d_t(a|s) = 1/K$. Choosing $\zeta^* = 1/K$ minimises $C_2/\zeta^H$, which simultaneously minimises the regret upper bound and the required search horizon.
        
            From Theorem~\ref{thrm:MCTS_SRR_high_prob_bound}, we have the failure probability 
            \begin{equation*}
                \begin{aligned}
                    \delta &= C_1 (TK)^H \exp \left( - \frac{ \zeta^H T \epsilon^2}{C_2 K^{2H}} \right) \\
                     &= C_1 (TK)^H \exp \left( - \frac{ T \epsilon^2}{C_2 K^{3H}} \right). \quad & (\zeta^* = 1/K)
                \end{aligned}
            \end{equation*}

% Since $0<\delta <1$, then $T\geq \ln\left(C (KT)^H\right)\frac{C_2 K^{3H}}{\epsilon^2}$. 
Solving for $\epsilon$ gives
            \begin{equation*}
                \epsilon = \sqrt{\frac{C_2 K^{3H}}{ T} \ln \left( \frac{C_1 (TK)^H}{\delta} \right)}.
            \end{equation*}

            % which
            % 
            By Theorem~\ref{thrm:MCTS_SRR_high_prob_bound}, this concentration bound holds with probability at least $1 - \delta$. Notice that for the bound to be non-vacuous, then $T$ must be relatively large. Provided $\epsilon \leq C_3 \zeta^* = C_3 / K$, we can substitute our expression for $\epsilon$ into this threshold and obtain
            \begin{align*}
                \sqrt{\frac{C_2 K^{3H}}{T} \ln \left( \frac{C_1 (TK)^H}{\delta} \right)}
                &\leq \frac{C_3}{K} \\
                \frac{C_2 K^{2+3H}}{C_3^2} \ln \left( \frac{C_1 (TK)^H}{\delta} \right) &\leq T. \\
                % \frac{C_2 K^2}{C_3^2} \left( H\ln(T) + \ln \left( \frac{C_1 K^H}{\delta} \right) \right) &\leq T \\
            \end{align*}
            % Let $a = \frac{C_2 K^2 H}{C_3^2}$ and $b =  \frac{C_2 K^2}{C_3^2}\ln \left( \frac{C_1 K^H}{\delta} \right)$. We now have $a\ln(T) + b \leq T$, from which we can bound $T$ via \citep[Lemma A.2]{shalev2014understanding}, under the conditions that 
            % $$a = \frac{C_2 K^2 H}{C_3^2} \geq \frac{K^{H}H}{\norm{\mathbf e'}_\star} \geq 1,$$
            % which holds via the constants of the tree search, and 
            % $$b =  \frac{C_2 K^2}{C_3^2}\ln \left( \frac{C_1 K^H}{\delta} \right) > 0,$$
            % which holds because $ \frac{C_1 K^H}{\delta} > 1$ since each term in the numerator is greater than one, and $0 < \delta < 1$ via the restriction $\delta \in (0, \min(C_1 (TK)^H, 1))$ which ensures both that $\delta$ is a valid probability ($\delta < 1$) and that $\ln(C_1 (TK)^H/\delta) > 0$. With these conditions on $a$ and $b$, we have via \citep[Lemma A.2]{shalev2014understanding} that the following is sufficient to satisfy the conditions above.
            % $$T \geq 4a \log(2a) + 2b = \frac{2C_2 K^2}{C_3^2} \left( 2H \log \left( \frac{2C_2 K^2 H}{C_3^2}  \right) + \ln \left( \frac{C_1 K^H}{\delta} \right) \right)$$
        \end{proof}

    \subsection{Proof of Corollary~\ref{cor:MCTS_expected_regret}} \label{appendix:proof_of_mcts_Expected_regret}
        Expected regret bound for MCTS.
        \begin{proof}[Proof of Corollary~\ref{cor:MCTS_expected_regret}]
        Because $\mathcal{SRR}^{f, D}_T$ is a non-negative random variable strictly bounded by 
        $$\mathcal{SRR}^{f, D}_T \leq C_3 \norm{\hat Q_t - Q^\star}_\infty \leq 2C_3R_\text{max},$$
        according to the assumptions of the Theorem, its expectation can be computed by integrating its cumulative distribution function:
        \begin{equation*}
            \mathbb{E}[\mathcal{SRR}^{f, D}_T] = \int_0^{2C_3R_\text{max}} \prob{\mathcal{SRR}^{f, D}_T \geq \epsilon} d\epsilon.
        \end{equation*}
        
        Following Corollary~\ref{cor:MCTS_uniform_regret}, which bounds the instance-dependent $\mathcal{SRR}$, we choose $d_t(a|s) = 1/K$ to maximise $1/\zeta$, which consequently minimises the required search horizon for the bounds to hold: 
        $$T \geq \frac{C_2 K^{2+3H}}{C_3^2} \ln \left( \frac{C_1 (TK)^H}{\delta} \right),$$
        which guarantees that 
        $$u = \sqrt{\frac{C_2 K^{3H}}{T} \ln(C_1 (TK)^H)} \leq C_3 \zeta^* = \frac{C_3}{K}.$$
        
        We can then set $\delta = 1$ in the high-probability horizon bound from Corollary~\ref{cor:MCTS_uniform_regret} to arrive at our choice for $u$: this means that when $\epsilon < u$, the error probability of this event is $\leq \delta = 1$. Thus, we partition the integration into three intervals: $[0, u]$, $[u, C_3/K]$, and $[C_3/K, 2C_3R_\text{max}]$.
        
        \textbf{Interval 1: $[0, u]$} \\
        We trivially bound the probability by $1$:
        \begin{equation*}
            \int_0^u \prob{\mathcal{SRR}^{f, D}_T \geq \epsilon} d\epsilon \leq \int_0^u 1 \,d\epsilon = u = \sqrt{\frac{C_2 K^{3H}}{T} \ln(C_1 (TK)^H)}.
        \end{equation*}
    
        \textbf{Interval 2: $[u, C_3/K]$} \\
        In this region, $\epsilon \leq C_3/K$, so Theorem~\ref{thrm:MCTS_SRR_high_prob_bound} applies. Extending the upper limit to infinity yields an upper bound. Applying the standard Gaussian tail inequality $\int_u^\infty e^{-ax^2} dx \leq \frac{1}{2au}e^{-au^2}$ with $a = T/C_2$:
        \begin{align*}
            \int_u^{\frac{C_3}{K}} \prob{\mathcal{SRR}^{f, D}_T \geq \epsilon} d\epsilon 
            &\leq \int_u^{\infty} C_1 (TK)^H \exp\left(-\frac{T\epsilon^2}{C_2 K^{3H}}\right) d\epsilon \\
            &\leq C_1 (TK)^H \left( \frac{C_2 K^{3H}}{2Tu} \right) \exp\left(-\frac{Tu^2}{C_2 K^{3H}}\right).
        \end{align*}
        By our definition of $u$, the term $\exp(-Tu^2/C_2 K^{3H}) = \exp(-\ln(C_1 (TK)^H)) = \frac{1}{C_1 (TK)^H}$. Substituting this back cancels the leading coefficient:
        \begin{equation*}
            \int_u^{\frac{C_3}{K}} \prob{\mathcal{SRR}^{f, D}_T \geq \epsilon} d\epsilon \leq \frac{C_2 K^{3H}}{2Tu} = \frac{1}{2} \sqrt{\frac{C_2 K^{3H}}{T \ln \left( C_1 (TK)^H \right)}}.
        \end{equation*}
    
        \textbf{Interval 3: $[C_3/K, 2C_3R_\text{max}]$} \\
        For $\epsilon > C_3/K$, notice that the probability is monotonically decreasing, so it is bounded by its value at the threshold $\epsilon = C_3/K$, allowing us to leverage Theorem~\ref{thrm:MCTS_SRR_high_prob_bound}. The length of this interval is strictly less than $2C_3R_\text{max}$, providing
        \begin{align*}
            \int_{\frac{C_3}{K}}^{2C_3R_\text{max}} \prob{\mathcal{SRR}^{f, D}_T \geq \epsilon} d\epsilon 
            &\leq \int_{0}^{2C_3R_\text{max}} \prob{\mathcal{SRR}^{f, D}_T \geq \frac{C_3}{K}} d\epsilon \\
            & \leq 2C_3R_\text{max} \cdot C_1(TK)^H \exp\left(-\frac{T C_3^2}{C_2 K^{2+3H}}\right).
        \end{align*}
    
        Summing the bounds from the three intervals yields the final result in \autoref{equ:MCTS_expected_regret_full_bound}. As $T \to \infty$, the first term scales as $\tilde{O}(\sqrt{\frac{K^{3H}}{T}})$ and dominates, while the second term scales as $\tilde{O}((T \ln T)^{-1/2})$, and the final term decays exponentially.
        \begin{equation} \label{equ:MCTS_expected_regret_full_bound}
            \begin{split}
                \mathbb{E}[\mathcal{SRR}^{f, D}_T] \leq &\sqrt{\frac{C_2 K^{3H}}{T} \ln \left( C_1 (TK)^H \right)}\\
                &+ \frac{1}{2} \sqrt{\frac{C_2 K^{3H}}{T \ln \left( C_1 (TK)^H \right)}} \\
                &+ 2C_3R_\text{max} C_1(TK)^H \exp\left(-\frac{T C_3^2}{C_2 K^{2+3H}}\right).
            \end{split}
        \end{equation}

        It is also of interest to bound $\mathcal{SRR}^{\pi,B}_T$ for the Bregman divergence $B_\Omega$. Noting the result in Theorem~\ref{thrm:MCTS_SRR_high_prob_bound}, we know that the Bregman divergence-based regret is bound by $C_3^2 \norm{\hat Q_t - Q^\star}^2_\infty$, which means that we can set the error lower bound in the probability statement to $\epsilon^2$. Doing so yields a slightly modified regret bound and proof, but one of the expected rate: $\frac{1}{T}$.

        \begin{align*}
            \mathbb{E}[\mathcal{SRR}^{\pi, B_\Omega}_T] &= \int_0^{4C^2_3R^2_{\max}} \prob{\mathcal{SRR}^{\pi, B_\Omega}_T \geq \epsilon} d\epsilon \\
            &\leq \int_0^{4C^2_3R^2_{\max}} \prob{C_3^2 \norm{\hat Q_t - Q^\star}^2_\infty \geq \epsilon} d\epsilon \\
            &= \int_0^{4C_3^2R^2_{\max}} \prob{C_3 \norm{\hat Q_t - Q^\star}_\infty \geq \sqrt\epsilon} d\epsilon \\
        \end{align*}
        We can now choose
        $$u = {\frac{C_2 K^{3H}}{T} \ln(C_1 (TK)^H)} \leq \frac{C^2_3}{K^2}.$$
        Our high-probability bound from Corollary~\ref{cor:MCTS_uniform_regret} still holds under $\sqrt{\epsilon}$, and we can still set $\delta=1$. This means that the only difference in the above proof is the limits of integration, and what they resolve to. The following completes the proof.
        
        \textbf{Interval 1: $[0, u]$} \\
        We trivially bound the probability by $1$:
        \begin{equation*}
            \int_0^u \prob{\mathcal{SRR}^{\pi, B_\Omega}_T \geq \epsilon} d\epsilon \leq \int_0^u 1 \,d\epsilon = u = \frac{C_2 K^{3H}}{T} \ln(C_1 (TK)^H).
        \end{equation*}
    
        \textbf{Interval 2: $[u, C^2_3/K^2]$} \\
        In this region, $\sqrt\epsilon \leq C_3/K$, so Theorem~\ref{thrm:MCTS_SRR_high_prob_bound} applies. Extending the upper limit to infinity yields an upper bound.
        \begin{align*}
            \int_u^{\frac{C^2_3}{K^2}} \prob{\mathcal{SRR}^{\pi, B_\Omega}_T \geq \epsilon} d\epsilon 
            &\leq \int_u^{\infty} C_1 (TK)^H \exp\left(-\frac{T\epsilon}{C_2 K^{3H}}\right) d\epsilon \\
            &\leq -C_1 (TK)^H \left( \frac{C_2 K^{3H}}{T} \right) \exp\left(-\frac{T\epsilon}{C_2 K^{3H}}\right) \Bigg|^\infty_u \\
            &= C_1 (TK)^H \left( \frac{C_2 K^{3H}}{T} \right) \exp\left(-\frac{Tu}{C_2 K^{3H}}\right) \\
            &= \frac{C_2 K^{3H}}{T}
        \end{align*}
    
        \textbf{Interval 3: $[C^2_3/K^2, 4C_3^2R^2_{\max}]$} \\
        For $\sqrt\epsilon > C_3/K$, the probability is monotonically decreasing, so it is bounded by its value at the threshold $\sqrt\epsilon = C_3/K$. The length of this interval is strictly less than $4C_3^2R^2_{\max}$, providing:
        \begin{align*}
            \int_{\frac{C^2_3}{K^2}}^{ 4C_3^2R^2_{\max}} \prob{\mathcal{SRR}^{\pi, B_\Omega}_T \geq \epsilon} d\epsilon 
            &\leq \int_{0}^{ 4C_3^2R^2_{\max}} \prob{\mathcal{SRR}^{\pi, B_\Omega}_T \geq \frac{C^2_3}{K^2}} d\epsilon \\
            & \leq  4C_3^2R^2_{\max} \cdot C_1(TK)^H \exp\left(-\frac{T C_3^2}{C_2 K^{2+3H}}\right).
        \end{align*}
        Summing these terms, we get a bound of $\Tilde{O} \left( \frac{K^H}{T} \right)$:
        \begin{equation} \label{equ:MCTS_expected_regret_full_bound_bregman}
            \begin{split}
                \mathbb{E}[\mathcal{SRR}^{\pi, B_\Omega}_T] \leq &\frac{C_2 K^{3H}}{T} \ln(C_1 (TK)^H)\\
                &+ \frac{C_2 K^{3H}}{T} \\
                &+ 4C_3^2R^2_{\max} \cdot C_1(TK)^H \exp\left(-\frac{T C_3^2}{C_2 K^{2+3H}}\right).
            \end{split}
        \end{equation}
    \end{proof}

\section{Additional details and plots for synthetic tree experiments}
\label{appendix:synthetic_tree_plots}
    The purpose of this experiment is to highlight the differences between E3W and uniform selection.
    To adequately compare E3W action selection to uniform action selection, we set the exploration parameter $\epsilon = 0.1$. In our testing, there was little difference in performance for different selections of $\epsilon$, excluding of course when $\epsilon$ is large, which means E3W performs similarly to uniform sampling, for some time. 
    We also change $\lambda_{s_t} = \frac{\epsilon}{\log(N_t(s_t) + 2)}$ since this forces $\lambda \in [0,1]$ (when $\lambda_{s_t} = \frac{\epsilon K}{\log(N_t(s_t) + 2)} > 1$ like in \citet{convex_regular_tree_search} we no longer have a convex combination in $d_t^\text{E3W}$).
    % In fact, due to our results, we conjecture that the $K$ factor in \citet{convex_regular_tree_search} along with a $\lambda \in [0,1]$ cutoff would produce uniform sampling for some time, causing E3W to perform better in the settings we identified.
    
    We present complete plots for the synthetic tree experiments. These plots include all the depth and action space size settings that we tested. Note that any plots labelled ``theoretical'' use the primal or dual norm that is associated with the regulariser. For instance, entropy is strongly convex with respect to the L1 norm, and Tsallis entropy with respect to the L2 norm. Thus, the dual norm for entropy-regularised Q-values is the L-infinity norm, while for Tsallis it is L2.

    Note how in the ``theoretical'' plots, Tsallis entropy and UCT often have significantly different error values to the other regularised RL methods. In the Bregman divergence plots, this is due to the KL divergence being unbounded, while the L2 policy error is bounded. In the Q-value error plots, this is because the L2 error is significantly higher than the L-infinity error.

    In the main body, we elected to present plots that were similarly scaled through the use of the same error function. This meant that we sometimes couldn't accurately represent the regret curve of the bandit / tree search, but a proxy of that curve through another error function. This choice was made to avoid unnecessary confusion regarding which method was best, since the use of different norms would skew the positioning of the curves.

    \textbf{``Why does UCT drop suddenly?" and other related questions answered}
        When we retrieve Q-value estimates from the tree, we always zero out the unvisited action's Q-values, because otherwise the default value is negative infinity. This means that we suffer constant regret on unvisited arms for those Q-value predictions. While papers like \citet{danihelka2022policy_gumbel_alphazero} would use a sort of mixing operator to replace unvisited Q-values with the value of the parent, we do not use such a device. This may be useful for future investigation, but would likely not yield terribly different plots. 
        Note that ``constant regret on unvisited arms" means that UCT will especially have difficulty minimising Q-value error since it doesn't visit all arms until some high number of simulations, in our results around 4500 or 5000 simulations. Even the greedier sampling policies like E3W suffer this fate, since they tend to sample high performing arms more often (as is expected by the greedy policy $\hat \pi_t$).

    The following lists the plots in this appendix.
    \begin{itemize}
        \item \textbf{Figure~\ref{fig:appendix_Q_vals_MSE}} illustrates the Q-value mean squared error plots in synthetic trees.
        \item \textbf{Figure~\ref{fig:appendix_value_error_mse}} shows the value mean squared error plots in synthetic trees.
        \item \textbf{Figure~\ref{fig:appendix_policy_error_l2}} displays the L2 norm-based policy error plots in synthetic trees.
        \item \textbf{Figure~\ref{fig:appendix_policy_error_theoretical}} depicts the theoretical norm-based policy error plots in synthetic trees.
        \item \textbf{Figure~\ref{fig:appendix_policy_bregman_kl}} demonstrates the KL divergence-based value error plots in synthetic trees.
        \item \textbf{Figure~\ref{fig:appendix_policy_bregman_theoretical}} highlights the theoretical Bregman divergence-based value error plots in synthetic trees.
        \item \textbf{Figure~\ref{fig:4x5_q_error_synthetic_tree}} shows the Q-value error for each seed (random environment over which 100 random tree searches were run)
        \item \textbf{Figure~\ref{fig:4x5_value_error_synthetic_tree}} shows the value error for each seed (random environment over which 100 random tree searches were run)
        \item \textbf{Figure~\ref{fig:4x5_norm_policy_error_synthetic_tree}} shows the norm-based policy error for each seed (random environment over which 100 random tree searches were run)
        \item \textbf{Figure~\ref{fig:4x5_bregman_policy_error_synthetic_tree}} shows the Bregman-based policy error for each seed (random environment over which 100 random tree searches were run)
\end{itemize}

    \begin{figure}[ht]
        \centering
        \includegraphics[width=\linewidth]{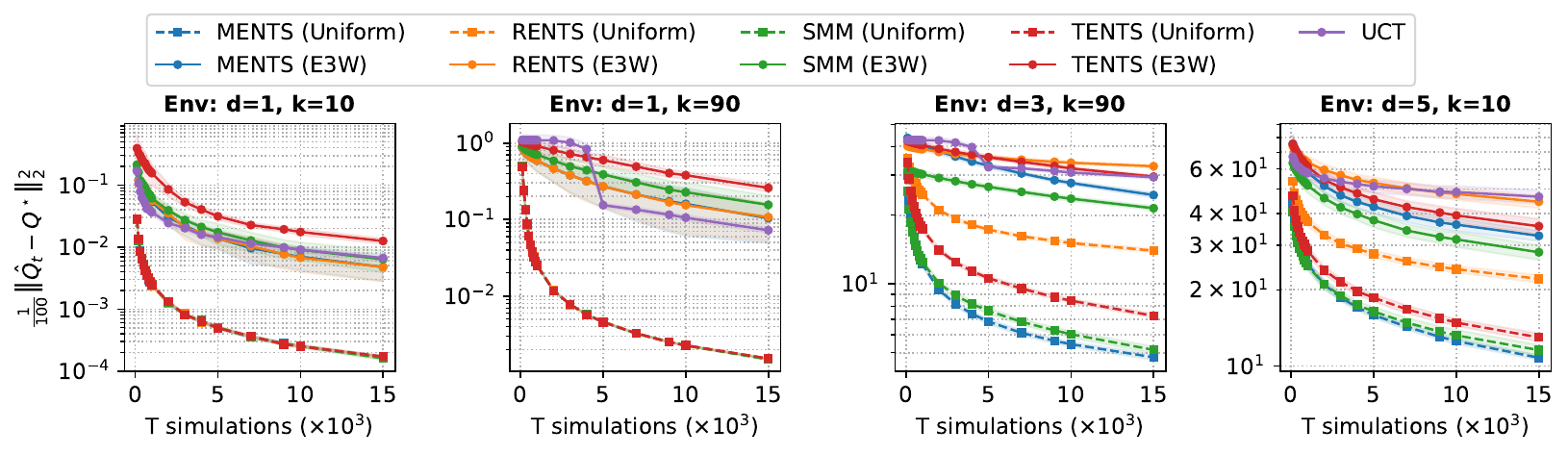}
        \caption{Q-value mean squared error plots in synthetic trees.}
        \label{fig:appendix_Q_vals_MSE}
    \end{figure}

    \begin{figure}[ht]
        \centering
        \includegraphics[width=\linewidth]{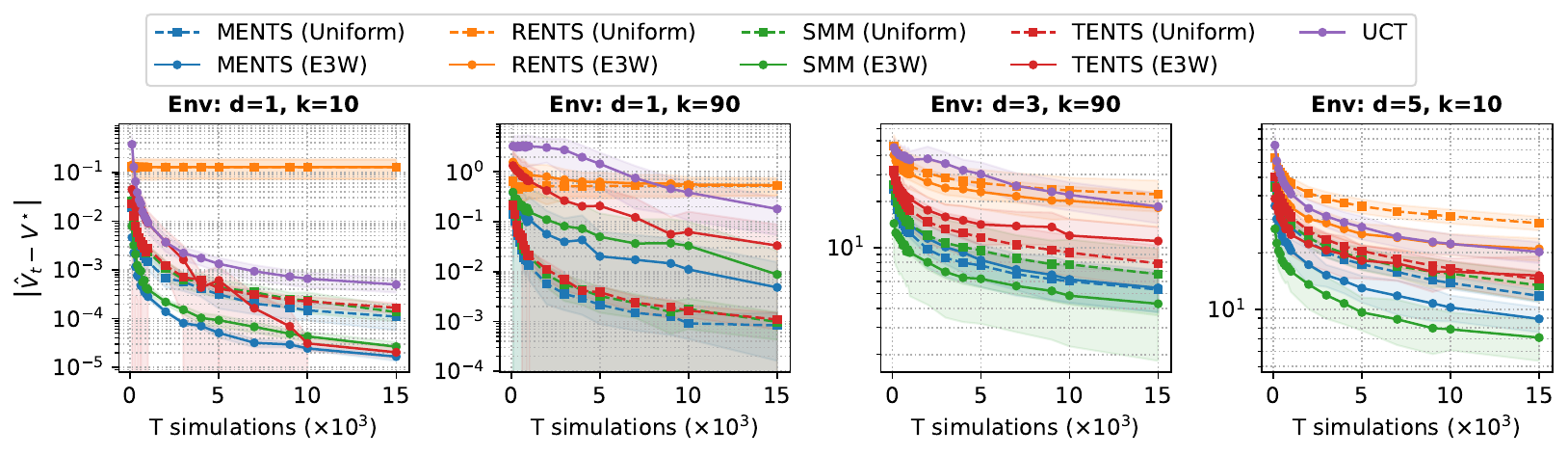}
        \caption{Value mean squared error plots in synthetic trees.}
        \label{fig:appendix_value_error_mse}
    \end{figure}

    \begin{figure}[ht]
        \centering
        \includegraphics[width=\linewidth]{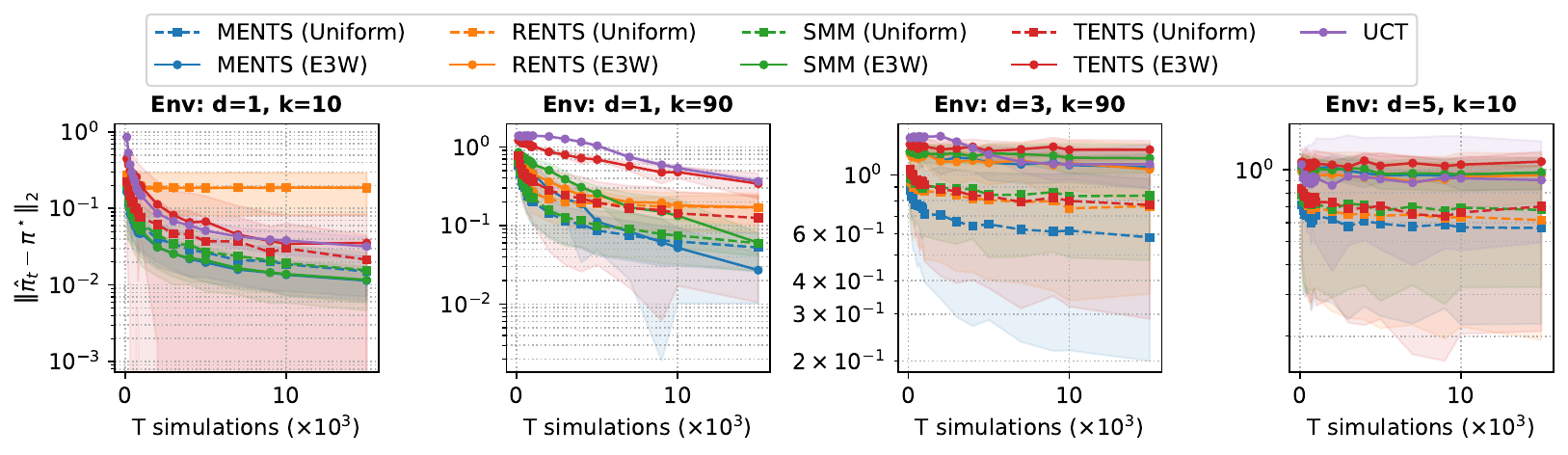}
        \caption{L2 norm-based policy error plots in synthetic trees.}
        \label{fig:appendix_policy_error_l2}
    \end{figure}

    \begin{figure}[ht]
        \centering
        \includegraphics[width=\linewidth]{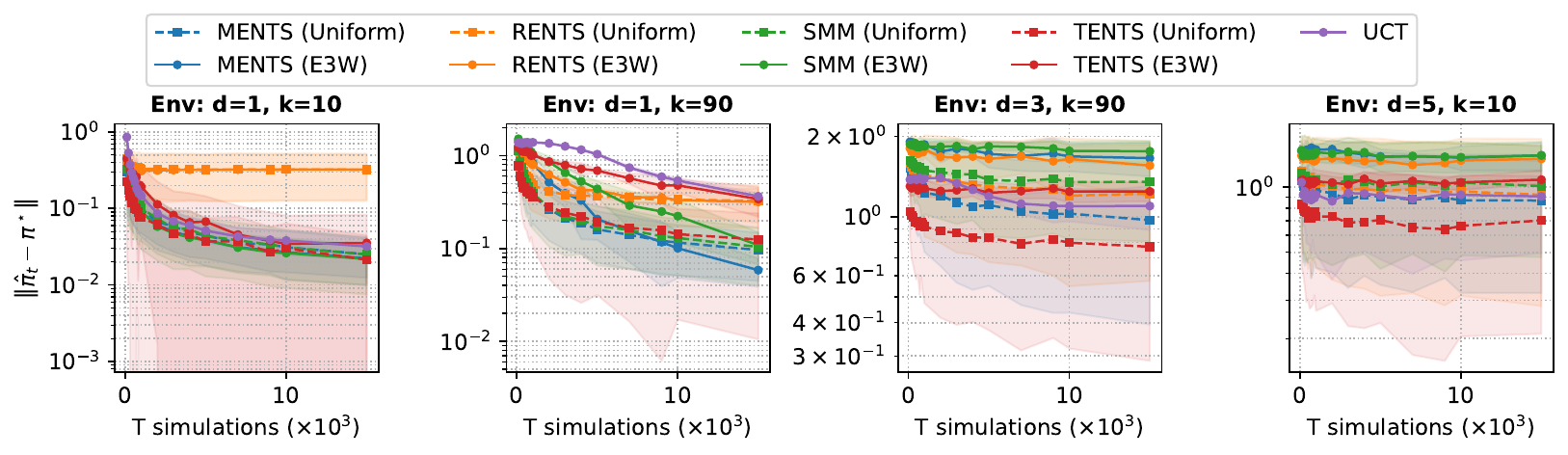}
        \caption{Theoretical norm-based policy error plots in synthetic trees.}
        \label{fig:appendix_policy_error_theoretical}
    \end{figure}

    \begin{figure}[ht]
        \centering
        \includegraphics[width=\linewidth]{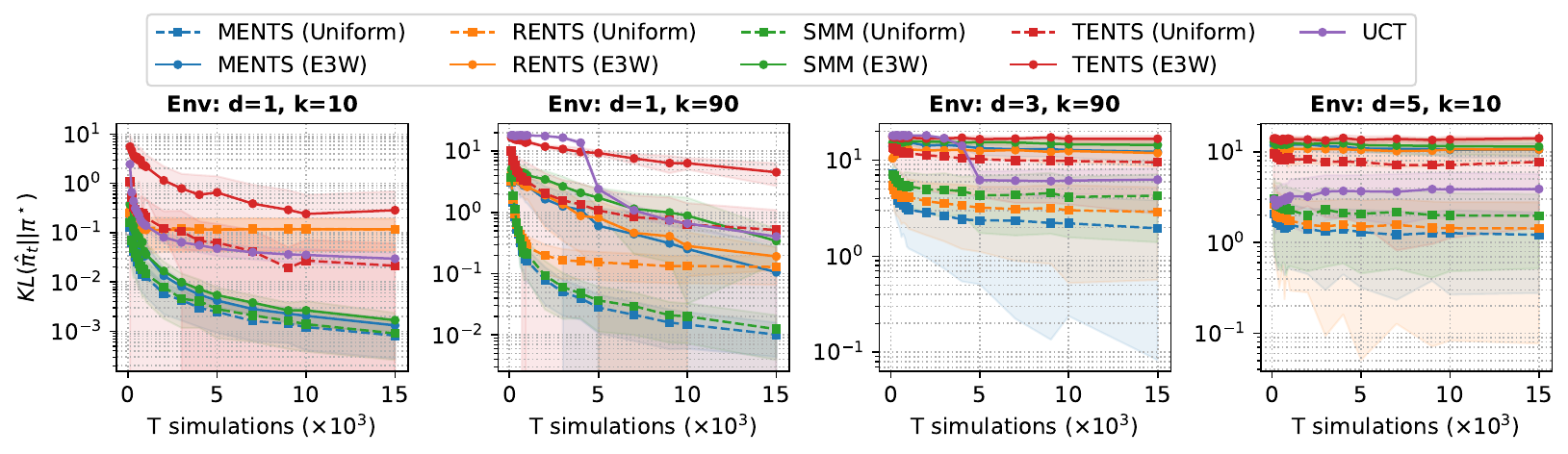}
        \caption{KL divergence-based error plots in synthetic trees.}
        \label{fig:appendix_policy_bregman_kl}
    \end{figure}

    \begin{figure}[ht]
        \centering
        \includegraphics[width=\linewidth]{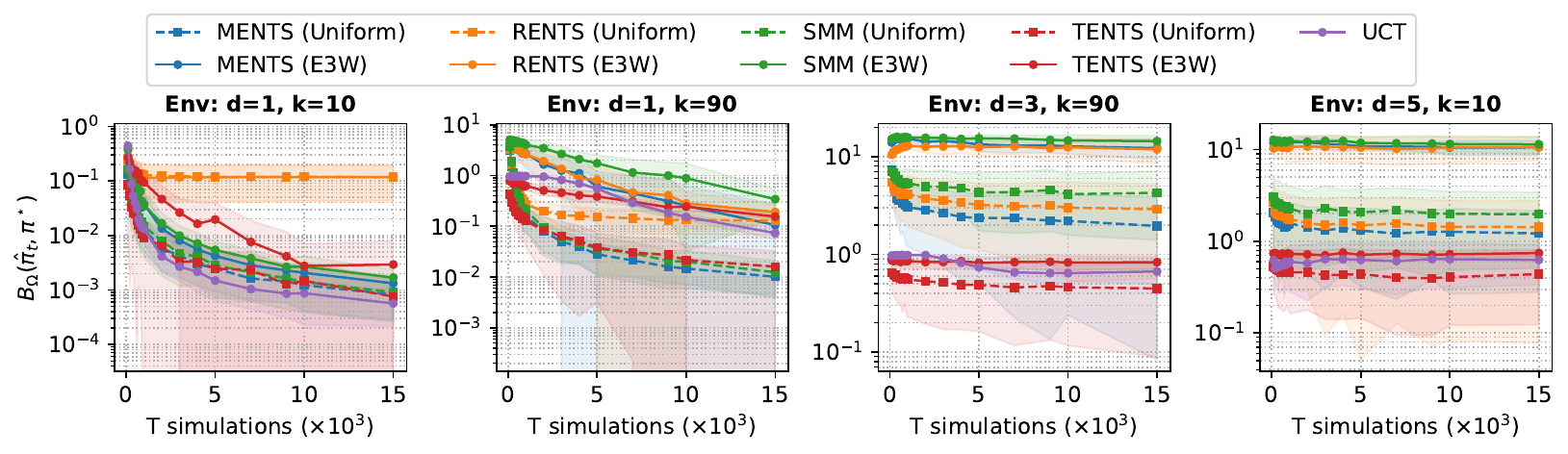}
        \caption{Theoretical Bregman divergence-based value error plots in synthetic trees.}
        \label{fig:appendix_policy_bregman_theoretical}
    \end{figure}

    \begin{figure}
        \centering
        \includegraphics[width=\linewidth]{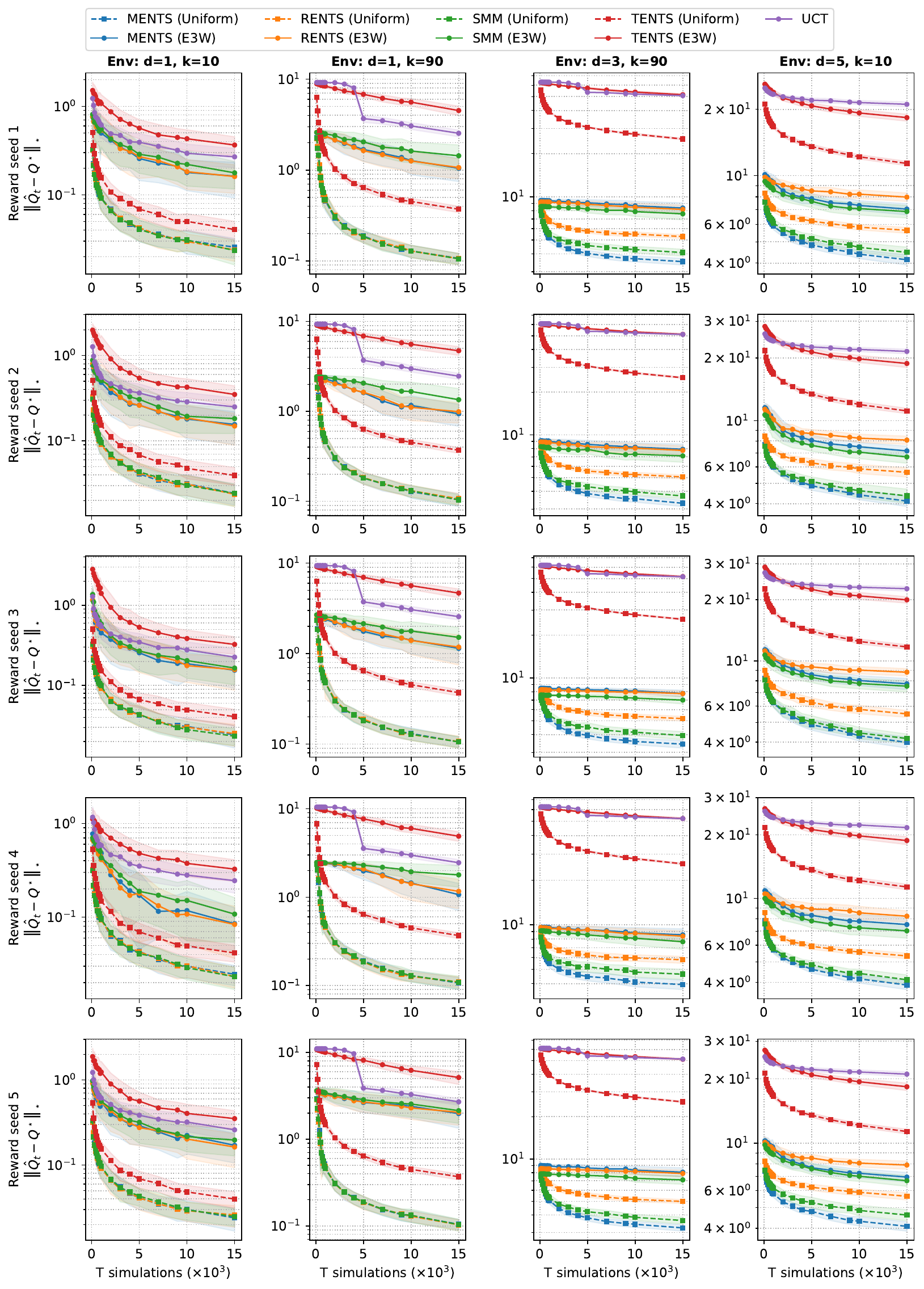}
        \caption{Q-error graphs per seed (5 seeds one per row)}
        \label{fig:4x5_q_error_synthetic_tree}
    \end{figure}

    \begin{figure}
        \centering
        \includegraphics[width=\linewidth]{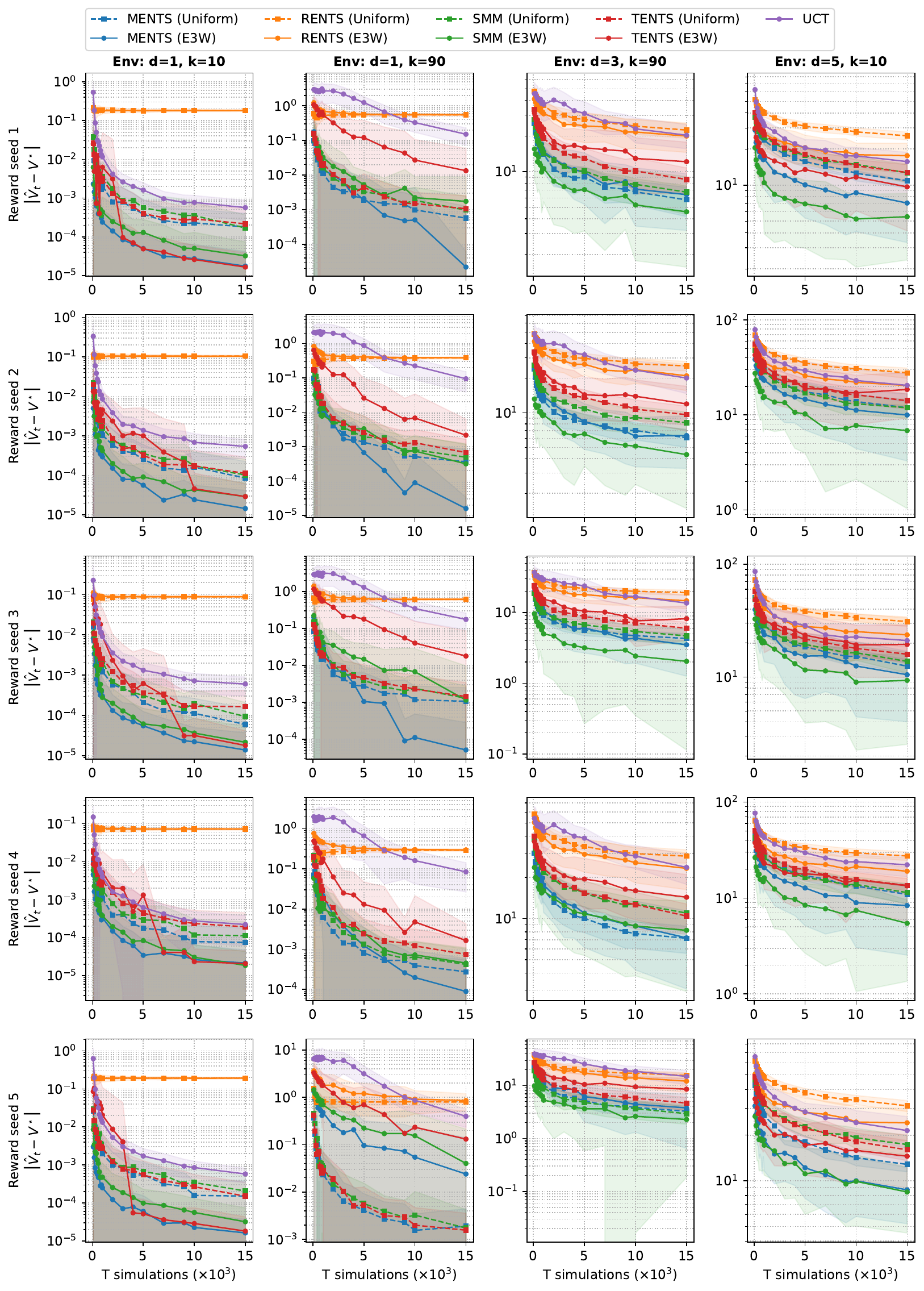}
        \caption{Value error graphs per seed (5 seeds one per row)}
        \label{fig:4x5_value_error_synthetic_tree}
    \end{figure}

    \begin{figure}
        \centering
        \includegraphics[width=\linewidth]{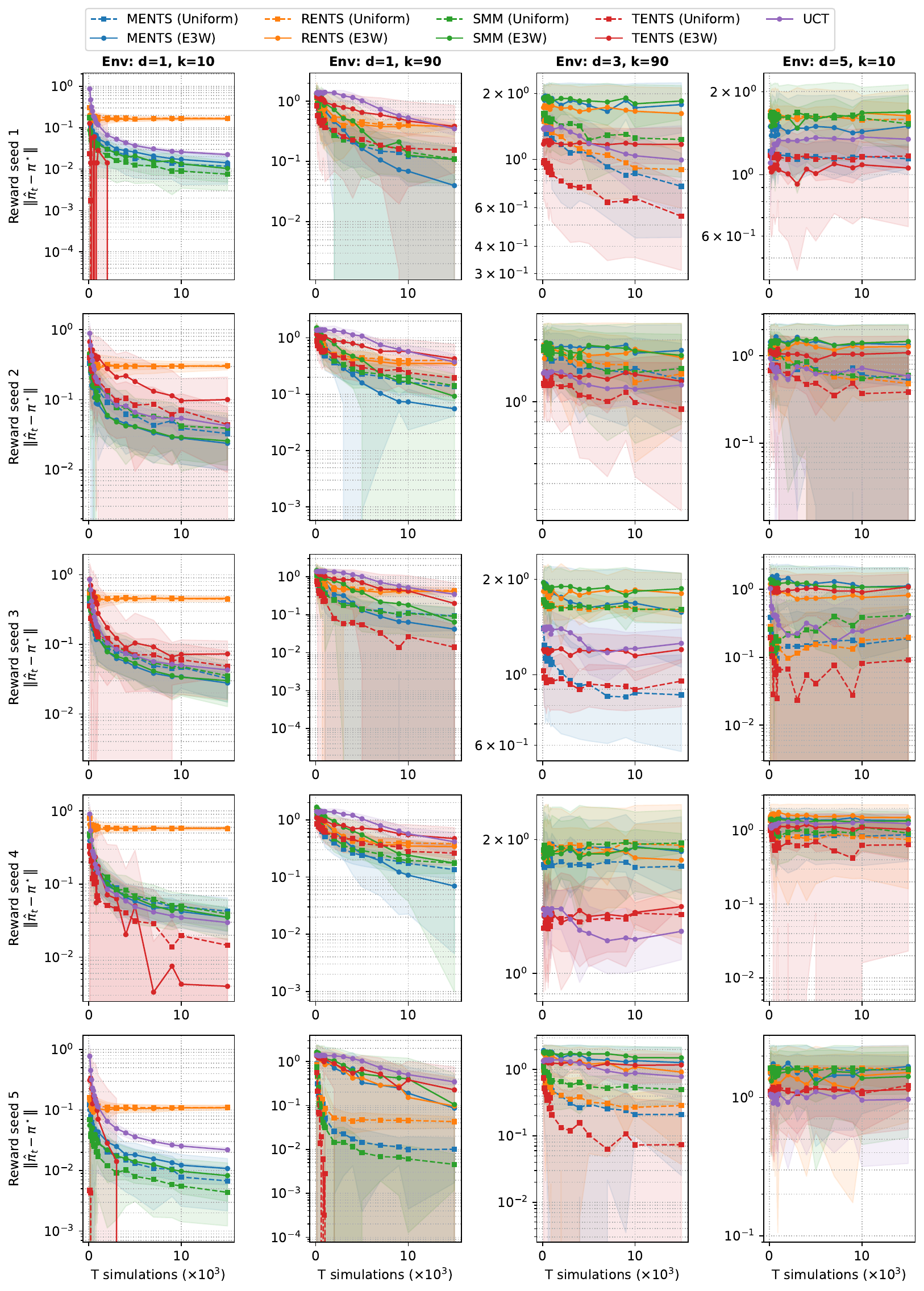}
        \caption{Norm-based policy error graphs per seed (5 seeds one per row)}
        \label{fig:4x5_norm_policy_error_synthetic_tree}
    \end{figure}

    \begin{figure}
        \centering
        \includegraphics[width=\linewidth]{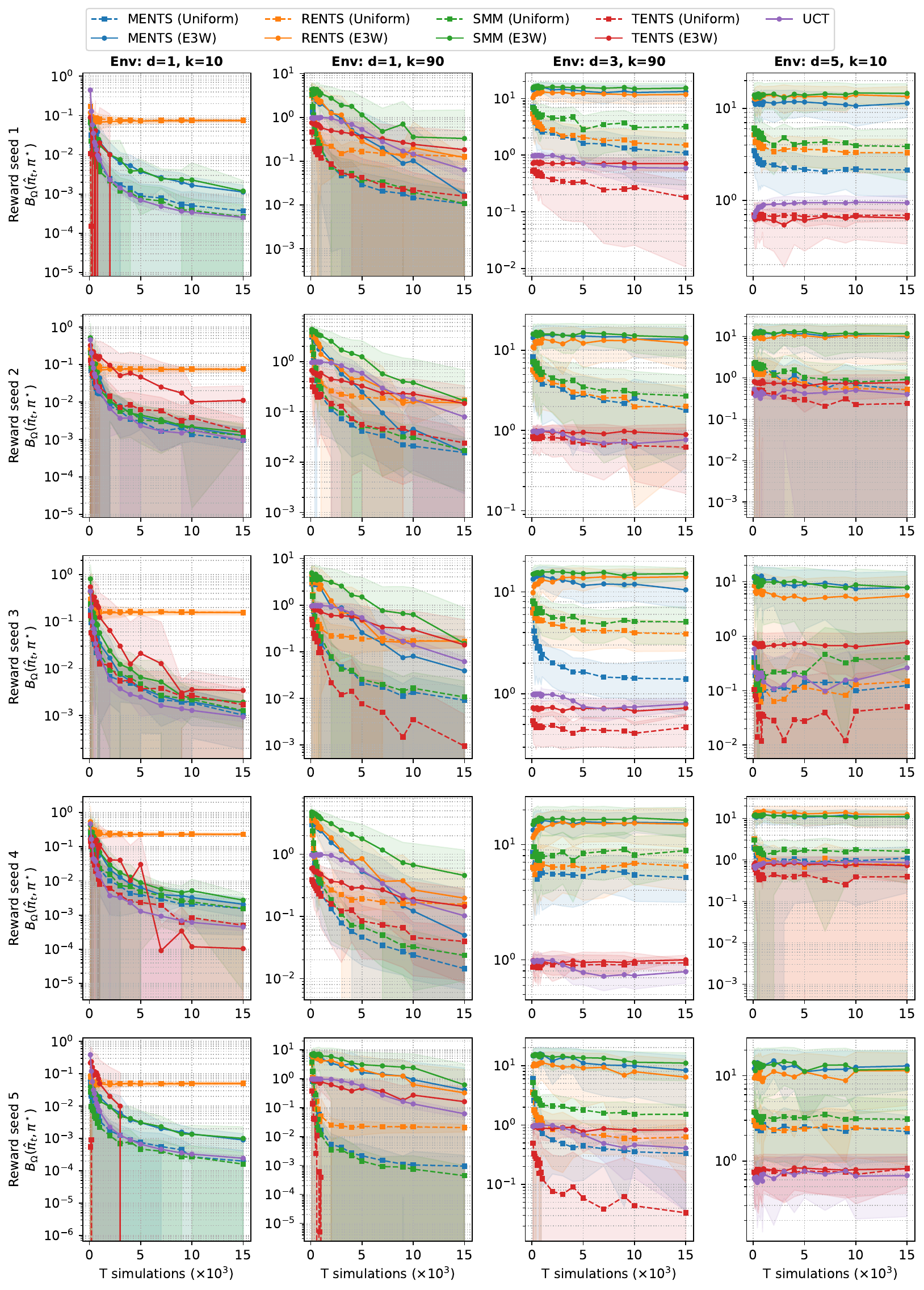}
        \caption{Bregman-based policy error graphs per seed (5 seeds one per row)}
        \label{fig:4x5_bregman_policy_error_synthetic_tree}
    \end{figure}

\clearpage
\section{Details of the OCTAX experiments} \label{appendix:additional_octax_details}
    We follow a PQN-style \citep{gallici2025simplifying_pqn} implementation: 1024 vectorized environments, 16-step rollouts, 3 gradient updates over 16 minibatches per rollout, for a total of $10^7$ environment steps. The network is a small convolutional encoder (two $3\times3$, stride-2 convolutions with 16 and 32 channels, each followed by LayerNorm and ReLU, into a 64-dimensional dense encoding) with a linear Q-head, trained with an $\epsilon$-greedy behaviour policy over a 4-frame stack. We generate Q-value estimates from the Q-head by running $\hat V(s) = \Omega^*(Q_\theta(s, \cdot))$.

    We swept the inverse temperature $\omega \in {5, 10, 20, 50, 100}$ on 15 of the 19 games for the regularised tree searches and UCT. We discovered that performance didn't change considerably over these wider $\omega$ settings: an algorithm that outperforms the baseline with $\omega=5$ still outperforms at $\omega=20$, for instance. Since we aren't looking to demonstrate state-of-the-art performance in these tasks, we found it counterproductive, and computationally wasteful, to run full sweeps over five $\omega$ values.
    Tree search depth is capped at 10, and $\gamma=1$ to match full-episode returns, with the number steps per episode fixed based on that game's typical episode length. 
    For UCT specifically, we also swept over the Dirichlet hyperparameters to see how performance might improve with more or less exploration, but again did not observe significant changes in performance. In particular, all settings of $\omega$ and the exploration hyperparameters that we swept had returns within one standard deviation of one another, indicating that there was no significant differences in performance. 
    For the OCTAX experiments, we followed the hyperparameters of \citet{xiao2019maximum_MENTS} and \citet{convex_regular_tree_search} for their arcade game experiments: $\epsilon = 0.1$, with $\lambda_{s_t} = \frac{\epsilon K}{\log(N_t(s_t)+2)}$.

    The tasks that we filtered out were done so to choose which ones to run for the 512 simulation budget. These experiments took many days (\cf, \autoref{fig:appendix_octax_long_sims_execution_time}), so we had to be sure which ones to run. 

\section{Additional plots for OCTAX experiments}
\label{appendix:octax_plots}

    We first present the plots for environments where we ran 32 to 512 MCTS simulations per environment step. Then, we present the plots for all the environments together, where we show the results for 32 to 128 MCTS simulations per step.
    \begin{itemize}
        \item \textbf{Figure~\ref{fig:appendix_octax_long_sims_execution_time}} illustrates the execution times for the large-sim task set. 
        \item \textbf{Figure~\ref{fig:appendix_octax_long_sims_episode_lengths}} illustrates the episode lengths for the large-sim task set.
        \item \textbf{Figure~\ref{fig:appendix_octax_episode_returns}} illustrates the episodic return for all tasks, with 32 to 128 MCTS simulations per step.
        \item \textbf{Figure~\ref{fig:appendix_octax_episode_lengths}} illustrates the episode lengths for all tasks, with 32 to 128 MCTS simulations per step.
        \item \textbf{Figure~\ref{fig:appendix_octax_execution_times}} illustrates the execution times for all tasks, with 32 to 128 MCTS simulations per step.
    \end{itemize}

    \begin{figure}[ht]
        \centering
        \includegraphics[width=\linewidth]{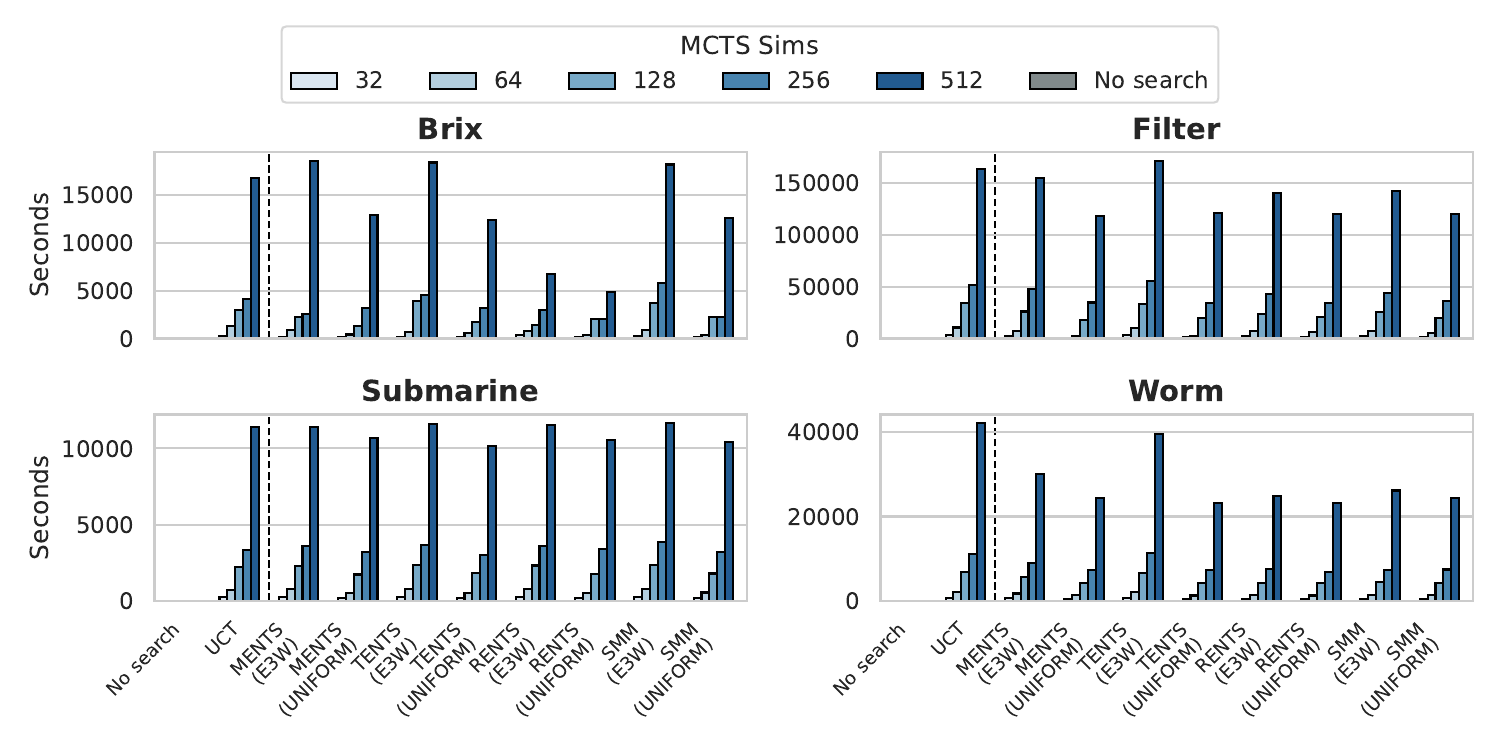}
        \caption{Execution times increase quickly with sims per step.}
        \label{fig:appendix_octax_long_sims_execution_time}
    \end{figure}

    \begin{figure}[ht]
        \centering
        \includegraphics[width=\linewidth]{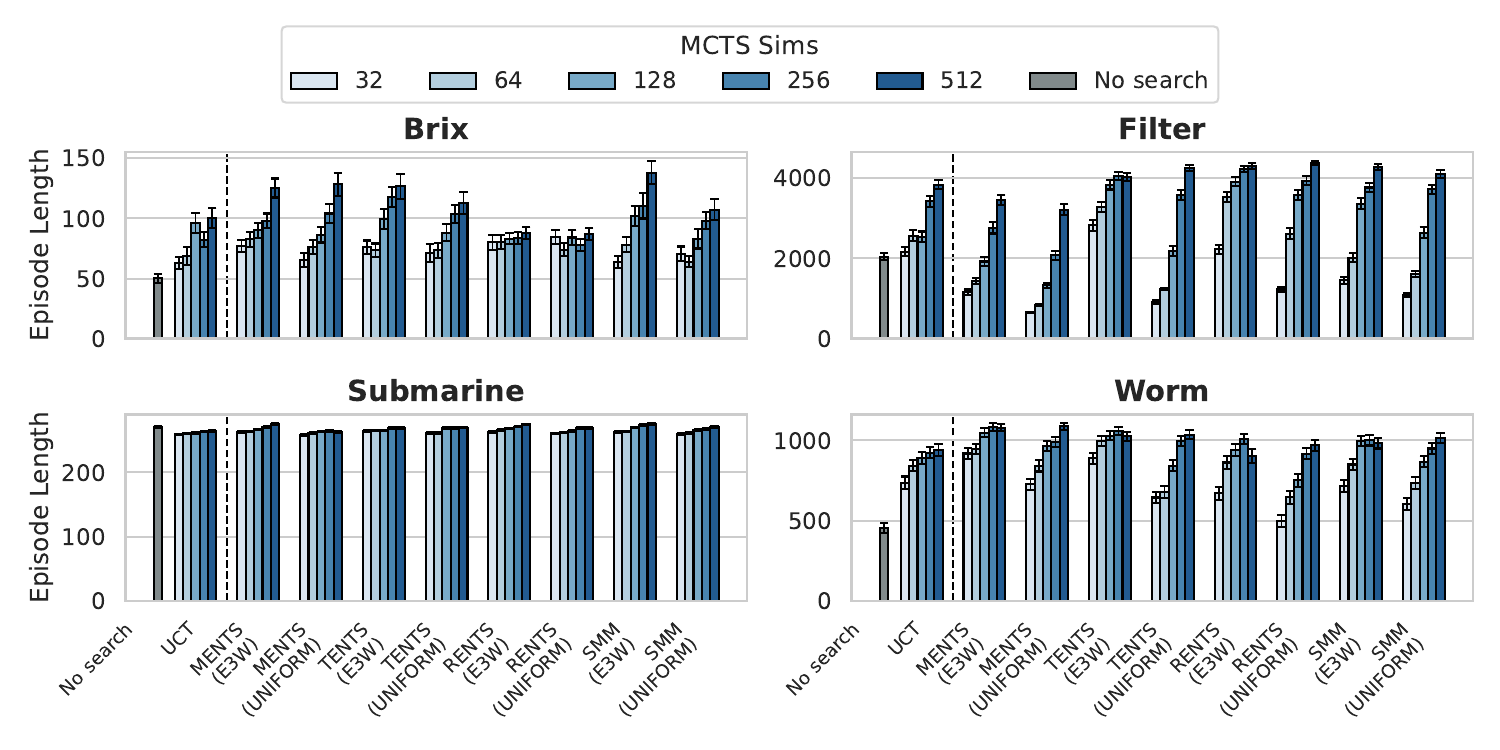}
        \caption{Episode lengths stay relatively similar with increased sims per step.}
        \label{fig:appendix_octax_long_sims_episode_lengths}
    \end{figure}

    \begin{figure}[ht]
        \centering
        \includegraphics[width=\linewidth]{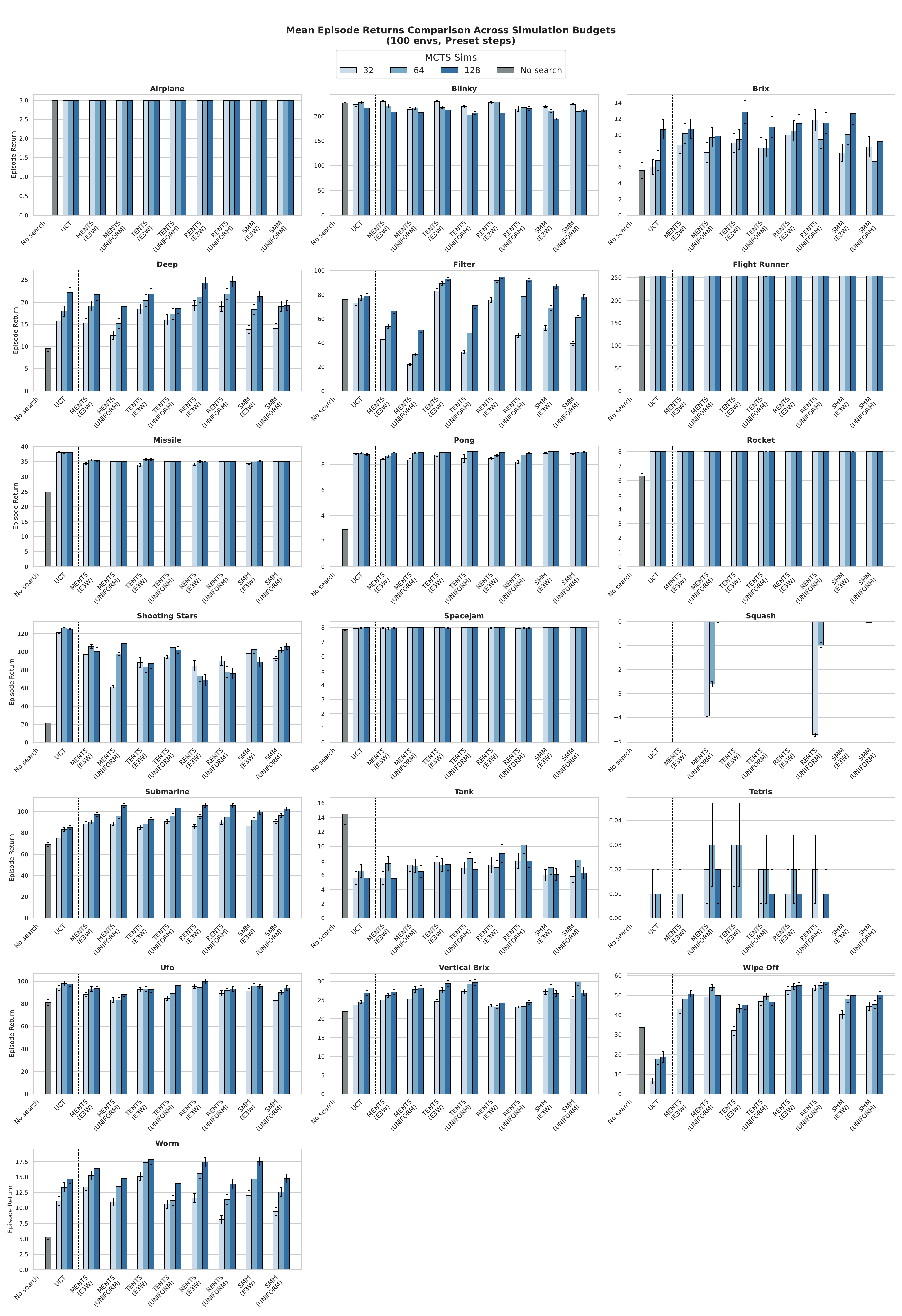}
        \caption{Episodic returns for all OCTAX tasks.}
        \label{fig:appendix_octax_episode_returns}
    \end{figure}

    \begin{figure}[ht]
        \centering
        \includegraphics[width=\linewidth]{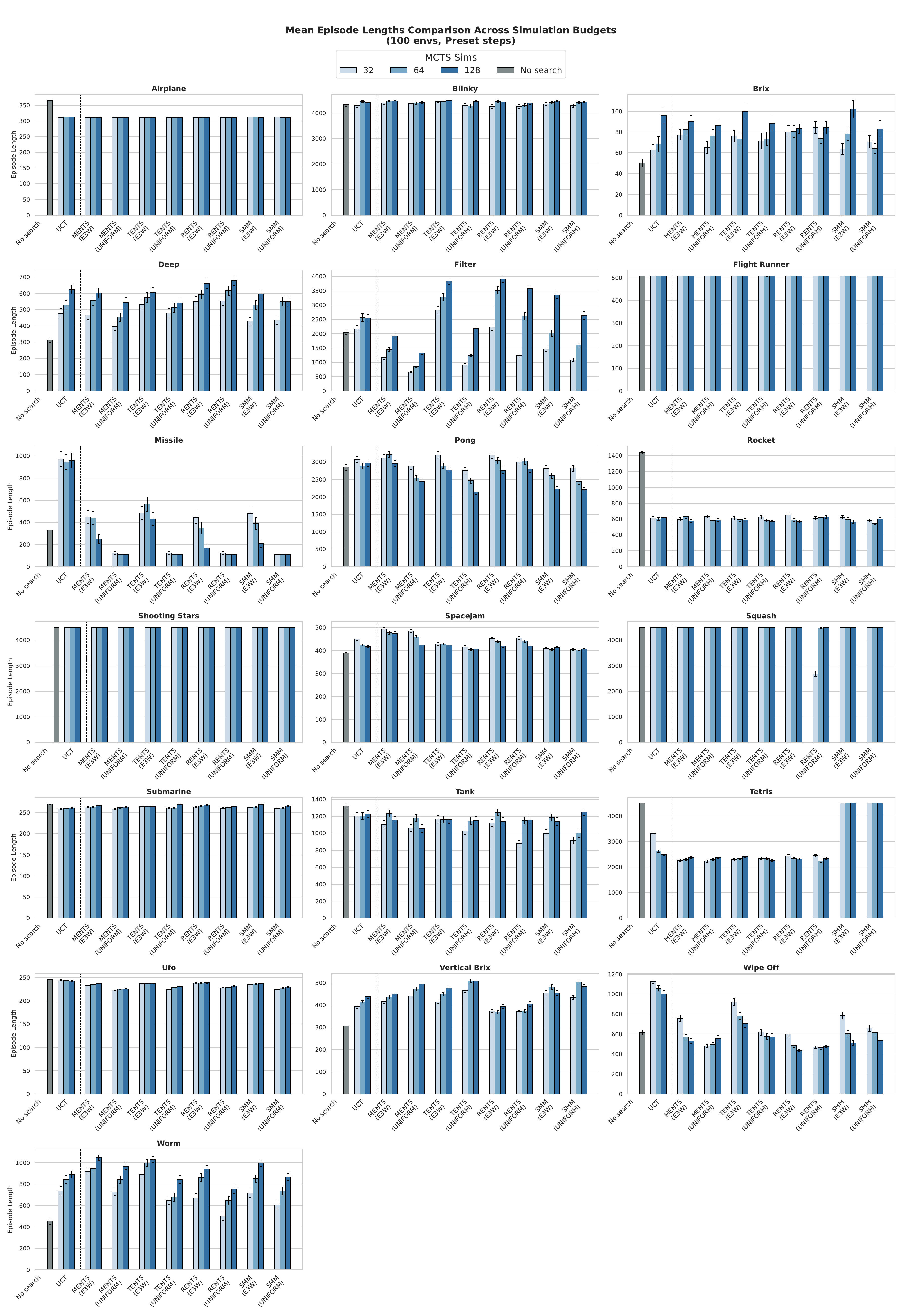}
        \caption{Episode lengths for all OCTAX tasks.}
        \label{fig:appendix_octax_episode_lengths}
    \end{figure}

    \begin{figure}[ht]
        \centering
        \includegraphics[width=\linewidth]{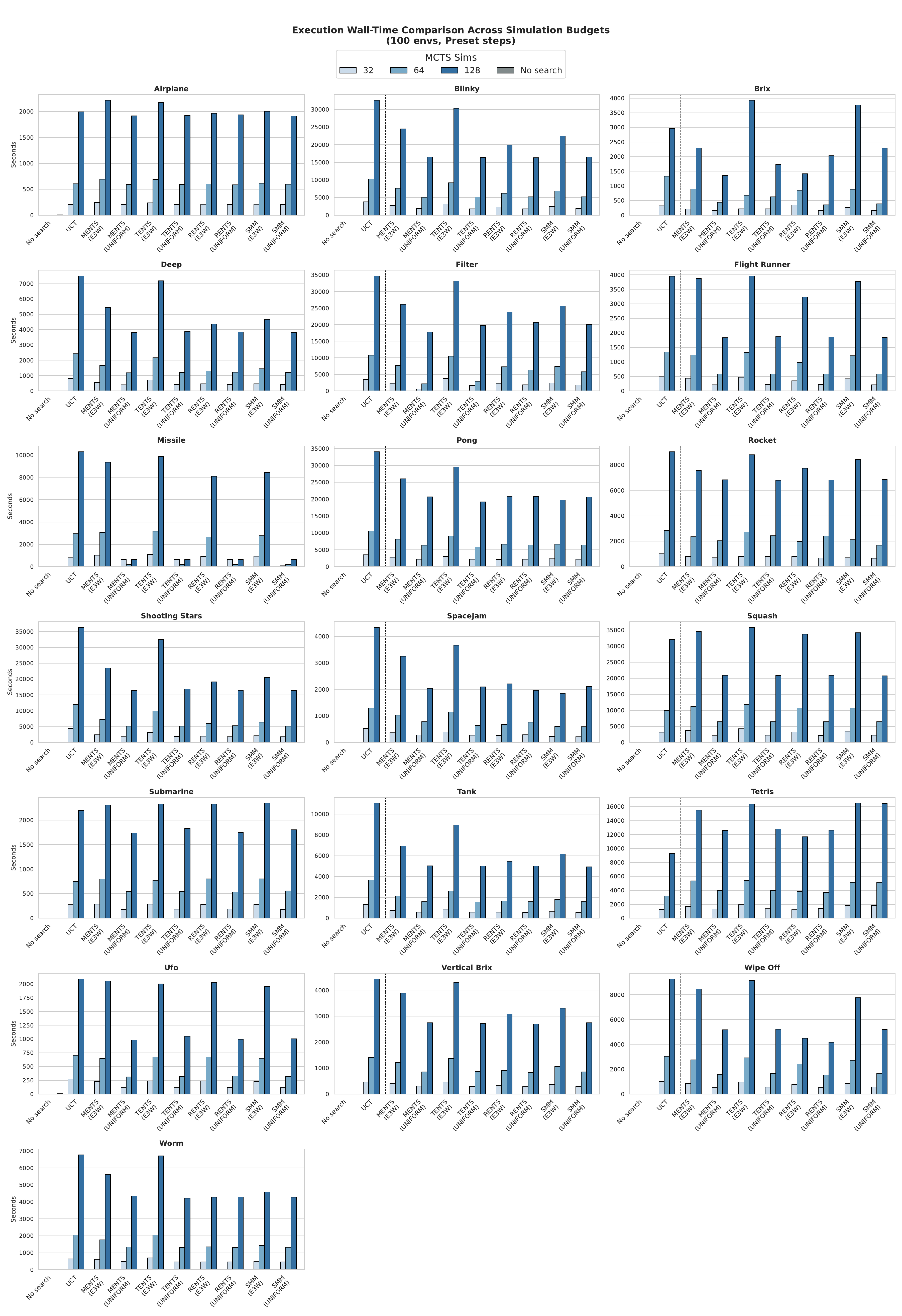}
        \caption{Execution times for all OCTAX tasks.}
        \label{fig:appendix_octax_execution_times}
    \end{figure}

\clearpage
\section{Details of the scientific discovery experiments} \label{appendix:details_sci_disc}

    \textbf{On the top-k metric}
    Because a policy that repeatedly regenerates one high-reward sequence is far less useful than one that discovers many distinct high-reward sequences, we report a diversity-aware top-$k$ reward: sequences are ranked by reward and greedily added to a selected set only if they are at least a task-specific Levenshtein distance $\delta$ from every sequence already selected ($\delta=10$ for AMP, $\delta=60$ for GFP, $\delta=13$ for UTR); we report the mean reward of the selected set at $k \in \{1, 10, 20, 50, 100\}$. This measures each method's ability to cover distinct modes of the reward landscape rather than only its best-case sample.

    \textbf{Details about the models.}
    We re-use the code from \citet{TGM_2025_jiralerspong} to train the proxy reward models and to train the TGM models. 
    For each task we train a causal sequence-Transformer policy (3 layers, 8 attention heads, 64-dimensional embeddings/hidden size, dropout 0.1, sinusoidal positional encoding) with a soft, regularization-consistent trajectory-balance-style objective; the network has no separate value head; its next-token logits are treated as prior Q-values for search, and the state value is reconstructed analytically from the same causal forward pass (summing the trained model's per-token credit along the prefix generated so far). Note that this last part is required for tree search since we need value estimates, not just Q-values estimates. 
    According to \citep[Table 2]{TGM_2025_jiralerspong}, we select the top-performing hyperparameters for each task, and use those to train our baselines: set $q = 0.75$ for UTR and AMP, and $q=0.25$ for GFP. 

    We did notice that our proxy reward model and TGM models did not give as high of numbers as in \citet{TGM_2025_jiralerspong}. This likely just boils down to implementation differences, different seeds being employed, or just differences between the trained proxy reward models, which are used in evaluation. 
    
    \textbf{Details about the tree search.}
    Rather than sweeping a single $\omega$, we run a 17-point sweep over $\omega \in \{0.025, 0.05, 0.1, 0.25, 0.5, 0.75, 1, 2, 4, 8, 10, 15, 20, 25, 30, 35, 40\}$ for every algorithm/simulation-budget configuration, so that we can separately report both peak achievable reward and sensitivity to this hyperparameter. Each configuration generates 100 sequences in parallel, repeated over 5 random seeds for the regularized algorithms and their no-search and UCT baselines. This sweep is done to generate a larger set of diverse sequences, so that we can evaluate the diversity of high-reward objects generated by each configuration. Note that this is standard procedure for these tasks in the GFlowNet-adjacent literature, see specifically \citet{trajectory_balance_GFN_2022} and \citet{TGM_2025_jiralerspong}. It is also standard to compute the mean reward of the top-100 diverse sequences according to the thresholds on the Levenshtein distance. Finally, we acknowledge that the sweep over UCT's temperature hyperparameter is not necessarily the best way to search a reward landscape, but we also note that UCT is not designed for such tasks, so adapting it to fit scientific discovery better is not within the scope of this investigation.

    Like in the OCTAX experiments, we set $\epsilon = 0.1$ with $\lambda_{s_t} = \frac{\epsilon K}{\log(N_t(s_t) + 2)}$ for E3W, like prior work.
    Tree depth is capped at 9 for all tasks.
    For GFP, whose much longer episodes make search at every generation step prohibitively expensive, we instead run full tree search only every 10th token and sample directly from the trained policy otherwise, restricting that task's simulation budget to 32.
    For sake of transparency, running MCTS for each of the 237 token of the GFP task took roughly 20-30 minutes for a single $\omega$ at just 32 simulations per token.
    Scaling this up to every $\omega$ we swept over, and each simulation count, the task would take far too much compute, a quick estimate states roughly a GPU-month of compute.  
    As such, our GFP results are limited by time and compute considerations. 
    Note that it is standard practice in the research literature to run MCTS as reduced intervals. \citet{xiao2019maximum_MENTS} did so in the beam rider task in Atari since the sequences were too long to run MCTS on each step.

\section{Additional results for the scientific discovery experiments} \label{appendix:addition_sci_disc_plots}

    \autoref{tab:top100_diverse_reward} lists the performance of all tree search methods and the no-search baseline for all 3 scientific discovery tasks. It is clear that uniform sampling performs better than E3W for almost all search methods, for almost all tasks. Uniform sampling with Tsallis entropy-based tree search (TENTS) is particularly performant in the scientific discovery tasks. 
    \begin{table}[t]
    \centering
    \caption{Top-100 diverse mean reward (mean $\pm$ std across seeds) for each search algorithm and action-selection rule, at each task's highest-performance simulation budget: AMP/UTR: 1024 simulations for most methods, other than UCT; GFP: 32 simulations for all methods. Bold marks a configuration whose mean exceeds the no-search baseline for that task. $\dagger$: fewer than 5 seeds had completed; mean and std are over the seeds available.}
    \label{tab:top100_diverse_reward}
    \begin{tabular}{lccc}
    \toprule
    Method & AMP & UTR & GFP \\
    \midrule
    Baseline (no search) & 10.17 $\pm$ 1.76 & 4.13 $\pm$ 0.02 & \textbf{1.61 $\pm$ 0.02} \\
    SMM (E3W) & 6.63 $\pm$ 0.23 & 3.47 $\pm$ 0.02 & 1.53 $\pm$ 0.01 \\
    SMM (Uniform) & 7.69 $\pm$ 0.74 & 3.66 $\pm$ 0.01 & 1.53 $\pm$ 0.01 \\
    MENTS (E3W) & 8.56 $\pm$ 0.48 & 3.65 $\pm$ 0.01 & 1.53 $\pm$ 0.01 \\
    MENTS (Uniform) & 9.13 $\pm$ 0.98 & 3.65 $\pm$ 0.01 & 1.53 $\pm$ 0.01 \\
    RENTS (E3W) & 7.57 $\pm$ 0.10$^{\dagger}$ & 4.12 $\pm$ 0.01 & 1.58 $\pm$ 0.01 \\
    RENTS (Uniform) & 7.57 $\pm$ 0.05 & \textbf{4.24 $\pm$ 0.01$^{\dagger}$} & 1.60 $\pm$ 0.01 \\
    TENTS (E3W) & 9.49 $\pm$ 0.07$^{\dagger}$ & 4.09 $\pm$ 0.02$^{\dagger}$ & \textbf{1.61 $\pm$ 0.00} \\
    TENTS (Uniform) & \textbf{10.64 $\pm$ 0.14} & \textbf{4.25 $\pm$ 0.01} & \textbf{1.62 $\pm$ 0.01} \\
    UCT & 7.94 $\pm$ 1.45 & \textbf{4.20 $\pm$ 0.02} & \textbf{1.61 $\pm$ 0.04} \\
    \bottomrule
    \end{tabular}
    \end{table}

    \autoref{fig:MCTS_sci_disc_perf_over_sims} shows the performance of the algorithms on AMP and UTR, as the number of simulations increases from 32 to 1024. Performance clearly improves with the number of simulations. We also clearly see that UCT doesn't finish at the 1024 mark for either of the tasks. 
    \begin{figure}
        \centering
        \includegraphics[width=\linewidth]{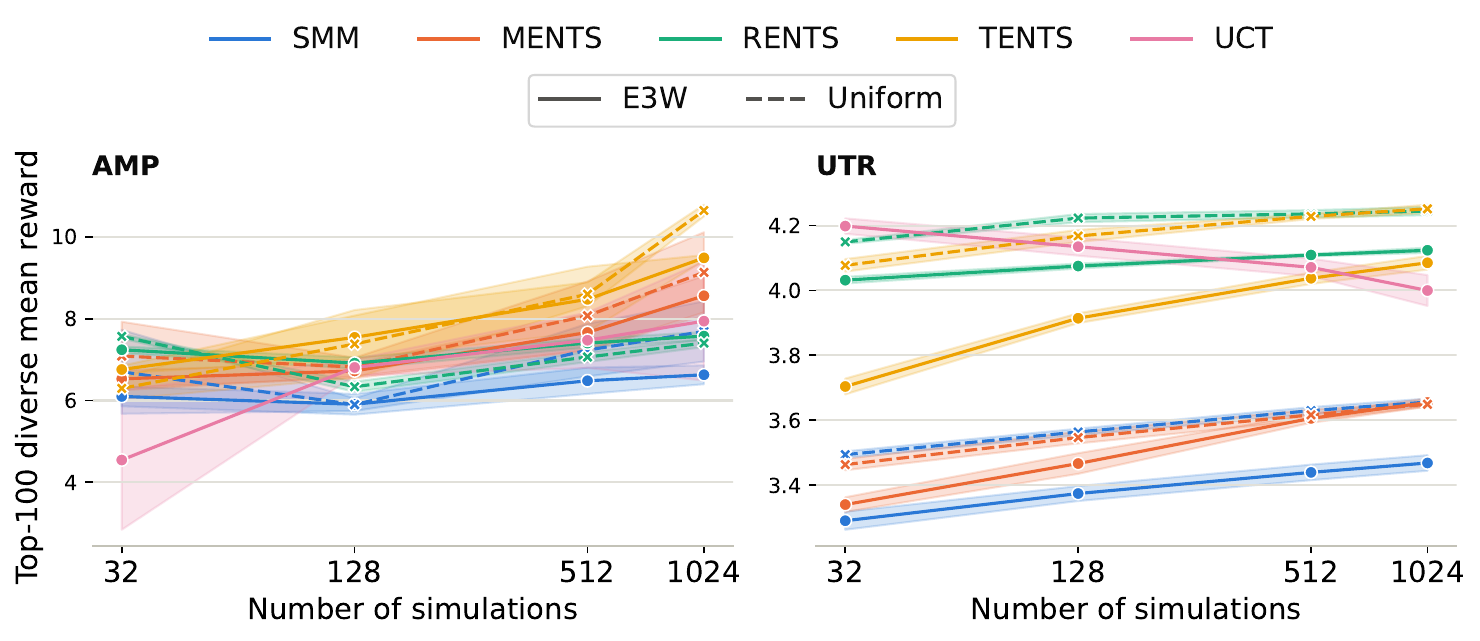}
        \caption{Performance of tree search in the AMP and UTR tasks, with increasing MCTS simulations per environment step. The top-100 diverse mean reward consistently increases for all tasks, with uniform sampling (dotted lines) often outperforming E3W. UCT did not finish at the 1024 simulation budget. The mean and standard deviation are computed over 5 random seeds of the generation process, sweeping inverse temperatures between $0.025$ to $40$. GFP is not presented on this figure because it takes too long to run with higher simulation budgets.}
        \label{fig:MCTS_sci_disc_perf_over_sims}
    \end{figure}

    \autoref{fig:MCTS_sci_disc_all_diverse_rewards} shows plots across top-K settings \textbf{for a single seed}. It is clear that the baseline model is very strong at lower values of K: 1, 10, and 20 are virtually unbeaten. For top-50 and top-100 mean rewards, it is clear that regularised MCTS algorithms are able to produce higher average rewards, especially as the number of simulations increases. AMP is the only task where no tree search method seems to clear the maximum average reward, but as is visible in the box-and-whisker plots of \autoref{fig:MCTS_sci_disc_box_plots}, TENTS and RENTS are able to achieve competitive, or higher, rewards at 1024 simulations per step. Like in \autoref{fig:MCTS_sci_disc_perf_over_sims}, we see that performance improves as sims increase. Finally, it is interesting to note that uniform sampling often outperforms E3W in UTR, but not in AMP, except at 1024 simulations. We do not have an explanation of this phenomenon.

    \begin{figure}[ht]
        \centering
        \includegraphics[width=\linewidth]{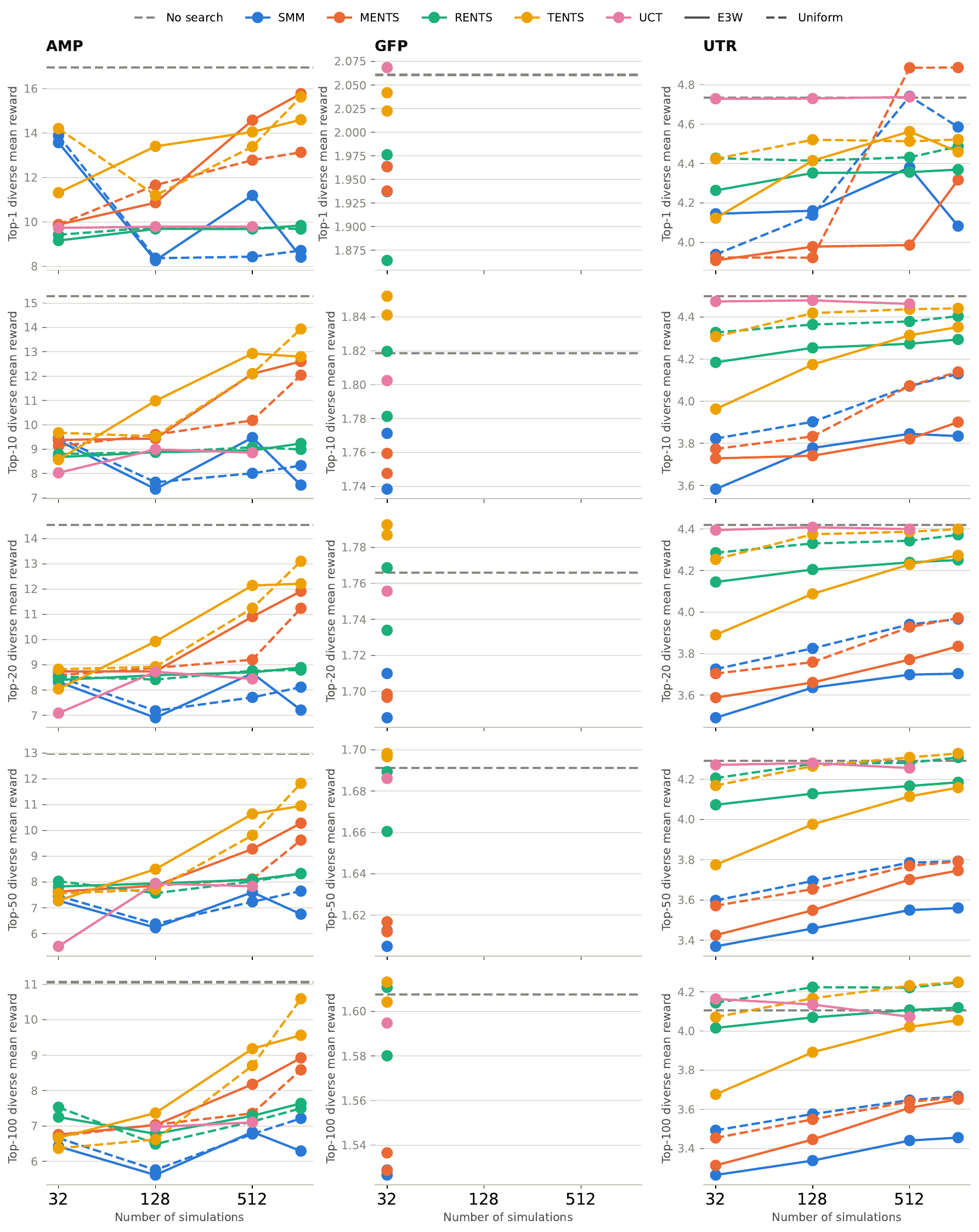}
        \caption{Performance of tree search across all scientific discovery tasks at different top$-k$ levels, with increasing simulation budgets.}
        \label{fig:MCTS_sci_disc_all_diverse_rewards}
    \end{figure}
    
\end{document}